\newtheorem{theorem}{Theorem}
\newtheorem{proposition}{Proposition}
\newtheorem{lemma}{Lemma}
\newtheorem{corollary}{Corollary}
\newtheorem{definition}{Definition}
\newtheorem{example}{Example}
\newcommand{\bx}{\mathbf{x}}
\newcommand{\by}{\mathbf{y}}
\newcommand{\bt}{\mathbf{t}}
\newcommand{\be}{\mathbf{e}}
\newcommand{\bv}{\mathbf{v}}
\newcommand{\bw}{\mathbf{w}}
\newcommand{\cost}{c}
\newenvironment{proof-sketch}{%
  \proof}{\endproof}
\newcommand\ignore[1]{}
\DeclareMathOperator*{\argmax}{arg\,max}
\DeclareMathOperator*{\argmin}{arg\,min}
\begin{document}
\title{The Disparate Effects of Strategic Manipulation}
\author{
  Lily Hu\\
  \small{Harvard University}
  \and
  Nicole Immorlica\\
 \small{Microsoft Research}
  \and
  Jennifer Wortman Vaughan\\
  \small{Microsoft Research}
}
\date{November 25, 2018}

\maketitle

\begin{abstract}
When consequential decisions are informed by algorithmic input, individuals may feel compelled to alter their behavior in order to gain a system's approval. Models of agent responsiveness, termed "strategic manipulation," analyze the interaction between a learner and agents in a world where all agents are equally able to manipulate their features in an attempt to ``trick" a published classifier. In cases of real world classification, however, an agent's ability to adapt to an algorithm is not simply a function of her personal interest in receiving a positive classification, but is bound up in a complex web of social factors that affect her ability to pursue certain action responses. In this paper, we adapt models of strategic manipulation to capture  dynamics that may arise in a setting of social inequality wherein candidate groups face different costs to manipulation. We find that whenever one group's costs are higher than the other's, the learner's equilibrium strategy exhibits an inequality-reinforcing phenomenon wherein the learner erroneously admits some members of the advantaged group, while erroneously excluding some members of the disadvantaged group. We also consider the effects of  interventions in which a learner subsidizes members of the disadvantaged group, lowering their costs in order to improve her own classification performance. Here we encounter a paradoxical result: there exist cases in which providing a subsidy improves only the learner's utility while actually making both candidate groups worse-off---even the group receiving the subsidy. Our results reveal the potentially adverse social ramifications of deploying tools that attempt to evaluate an individual's ``quality'' when agents' capacities to adaptively respond differ.
\end{abstract}

\section{Introduction}
The expanding realm of algorithmic decision-making has not only altered the ways that institutions conduct their day-to-day operations, but has also had a profound impact on how individuals interface with these institutions. It has changed the ways we communicate with each other, receive crucial resources, and are granted important social and economic opportunities. Theoretically, algorithms have great potential to reform existing systems to become both more efficient and equitable, but as exposed by various high-profile investigations \cite{sweeney2013discrimination, o2016weapons,angwin2016machine,eubanks2018automating}, prediction-based models that make or assist with consequential decisions are, in practice, highly prone to reproducing past and current patterns of social inequality. 

While few algorithmic systems are explicitly designed to be discriminatory, there are many underlying forces that drive such socially biased outcomes. For one, since most of the features used in these models are based on proxy, rather than causal, variables, outputs often reflect the various structural factors that bear on a person's life opportunities rather than the individualized characteristics that decision-makers often seek. Much of the previous work in algorithmic fairness has examined a particular undesirable proxy effect in which a classifier's features may be linked to socially significant and legally protected attributes like race and gender, interpreting correlations that have arisen due to centuries of accumulated disadvantage as genuine attributes of a particular category of people \cite{johnson2016impartial,qureshi2016causal,kilbertus2017avoiding,grgic2018beyond}. 

But algorithmic models do not only generate outcomes that passively correlate with social advantages or disadvantages. These tools also provoke a certain type of reactivity, in which agents see a classifier as a guide to action and actively change their behavior to accord with the algorithm's preferences. On this view, classifiers both \emph{evaluate} and \emph{animate} their subjects, transforming static data into strategic responses. Just as an algorithm's use of certain features differentially advantages some populations over others, the room for strategic response that is inherent in many automated systems also naturally favors social groups of privilege. Admissions procedures that heavily weight SAT scores motivate students who have the means to take advantage of test prep courses and even take the exam multiple times. Loan approval systems that rely on existing lines of credit as an indication of creditworthiness encourage those who can to apply for more credit in their name. 

Thus an algorithm that scores applicants to determine how a resource should be allocated sets a standard for what an ideal candidate's features ought to be. A responsive subject would look to alter how she appears to a classifier in order to increase her likelihood of gaining the system's approval. But since reactivity typically requires informational and material resources that are not equally accessible to all, even when an algorithm draws on features that seem to arise out of individual effort, these metrics can be skewed to favor those who are more readily able to alter their features. 
 
In the machine learning literature, agent reactivity to a classifier is termed ``strategic manipulation.'' Since previous work in strategic classification has typically depicted agent-classifier interactions as antagonistic, such actions are usually viewed as distortions that aim to undermine a learner's classifier \cite{bruckner2011stackelberg,hardt2016strategic}. As shown in Hardt et al. \cite{hardt2016strategic}, a learner who anticipates these responses can, under certain formulations of agent costs, adapt to protect against the misclassification errors that would have resulted from manipulation, recovering an accuracy level that is arbitrarily close to the theoretical maximum. These results are welcome news for a learner who correctly assesses agents' best-responses. Indeed in most strategic manipulation models, agents are depicted as equally able to pursue manipulation, allowing the learner who knows their costs to accurately preempt strategic responses. While there are occasions in which agents do largely face homogenous costs---an even playing field---in many other social use cases of machine learning tools, agents do not encounter the same costs of altering the attributes that are ultimately observed and assessed by the classifier. As such, in this paper we ask, \emph{``What are the effects of strategic classification and manipulation in a world of social stratification?''}

As in previous work in strategic classification, we cast the problem as a Stackelberg game in which the learner moves first and publishes her classifier before candidates best-respond and manipulate their features \cite{bruckner2011stackelberg,hardt2016strategic,akyol2016price, datta2017proxy}. But in contrast with the models in Br{\"u}ckner \& Scheffer \cite{bruckner2011stackelberg} and Hardt et al. \cite{hardt2016strategic}, we formalize the setting of a society comprised of social groups that not only may differ in terms of distributions over unmanipulated features and true labeling functions but also face different costs to manipulation. This extra set of differences brings to light questions that favor an analysis that focuses on the welfares of the candidates who must contend with these classifiers: Do classifiers formulated with strategic behavior in mind impose disparate burdens on different groups? If so, how can a learner mitigate these adverse effects? The altered gameplay and outcomes of strategic classification beg questions of fairness that are intertwined with those of optimality.  

Though our model is quite general, we obtain technical results that reveal important social ramifications of using classification in systems marked by deep inequalities and a potential for manipulation. Our analysis shows that, under our model, even when the learner knows the costs faced by different groups, her equilibrium classifier will always act to reinforce existing inequalities by mistakenly excluding qualified candidates who are less able to manipulate their features while also mistakenly admitting those candidates for whom manipulation is less costly, perpetuating the relative advantage of the privileged group. We delve into the cost disparities that generate such inevitable classification errors.

Next, we consider the impact of providing subsidies to lighten the burden of manipulation for the disadvantaged group. We find that such an intervention can improve the learner's classification performance as well as mitigate the extent to which her errors are inequality-reinforcing. However, we show that there exist cases in which providing subsidies enforces an equilibrium learner strategy that actually makes some individual candidates worse-off without making any better-off. Paradoxically, in these cases, paying a subsidy to the disadvantaged group actually benefits only the learner while both candidate groups experience a welfare decline! Further analysis of these scenarios reveals that, in many cases, all parties would have preferred a world in which manipulation of features was not possible for any candidates. 

Our paper's agent-centric analysis views data points as representing individuals and classifications as impacting those individuals' welfares. This orientation departs from the dominant perspective in learning theory, which privileges a vendor's predictive accuracy, and instead evaluates classification regimes in light of the social consequences of the outcomes they issue. By incorporating insights and techniques from game theory and economics, domains that consider deeply the effects of various policies on agents' behaviors and outcomes, we hope to broaden the perspective that machine learning takes on socially-oriented tools. Presenting more democratically-inclined analysis has been central to the field of algorithmic fairness, and we hope our work sheds new light on this generic setting of classification with strategic agents. 

\subsection{Related Work}
While many earlier approaches to strategic classification in the machine learning literature have tended to view learner-agent interactions as adversarial \cite{kearns1993learning,auer1998line}, our work does not assume inherently antagonistic relationships, and instead, shares the Stackelberg game-theoretic perspective akin to that presented in Br{\"u}ckner \& Scheffer \cite{bruckner2011stackelberg} and built upon by Hardt et al. \cite{hardt2016strategic}. Departing from these models' focus on static prediction and homogeneous manipulation costs, Dong et al. \cite{dong2018strategic} propose an online setting of strategic classification in which agents appear sequentially and have individual costs for manipulation that are unknown to the learner. Unlike our work, they take a traditional learner-centric view, whereas our concerns are with the welfare of the candidates. 

Agent features and potential manipulations in the face of a learner classifier can also be interpreted as serving \textit{informational} purposes. In the economics literature on signaling theory, agents interact with a principal---the counterpart to our learner---via signals that convey important information relevant to a particular task at hand. Classic works, such as Spence's paper on job-market signaling, focus their analysis on the varying quality of information that signals provide at equilibrium \cite{spence1978job}. The emphasis in our analysis on different group costs shares features with a recent update to the signaling literature by Frankel \& Kartik \cite{frankel2014muddled}, who also distinguish between natural actions, corresponding to unmanipulated features in our model, and ``gaming" ability, which operate similarly to our cost functions. The connection between gaming capacity and social advantage is also explicitly discussed in work by Esteban \& Ray \cite{esteban2006inequality} who consider the effects of wealth and lobbying on governmental resource allocation. While most works in the economics signaling literature  center on the decay of the informativeness of signals as gaming and natural actions become indistinguishable, some recent work in computer science has also considered the effect of costly signaling on mechanism design \cite{kephart2015complexity,kephart2016revelation}. In contrast to both of these perspectives, our work highlights the effect of manipulation on a learner's action and as a consequence, on the agents' welfares.

In independent, concurrent work appearing at the same conference, Milli et al. \cite{milli2018social} also consider the social impacts of strategic classification. Whereas our model highlights the interplay between a learner's Stackelberg equilibrium classifier and agents' best-response manipulations at the feature level, their work traces the relationship between the learner's utility and the social burden, a measure of agents' manipulation costs. They show that an institution must select a point on the outcome curve that trades off its predictive accuracy with the social burden it imposes. In their model, an agent with an unmanipulated feature vector $\bx$ has a likelihood $\ell(\bx)$ of having a positive label and can manipulate to any vector $\by$ with $\ell(\by) \le \ell(\bx)$ at zero cost, or to $\by$ with $\ell(\by) > \ell(\bx)$ for a positive cost. This assumption, called ``outcome monotonicity," allows them to reason about manipulations in (one-dimensional) likelihood space rather than feature space, since the optimal learner strategies amount to thresholds on likelihoods. In contrast, we allow features to be differently manipulable (perhaps a student can boost her SAT score via test prep courses, but can do nothing to change her grades from the previous year, and cannot freely obtain a higher SAT score in exchange for a worse record of extracurricular activities), which affects the forms of both the learner's equilibrium classifier and agents' best-response manipulations. Despite these differences in model and focus, their analysis yields results that are qualitatively similar to ours. Highlighting the differential impact of classifiers on social groups, they also find that overcoming stringent thresholds is more burdensome on the disadvantaged group.

\section{Model Formalization}
As in Br{\"u}ckner \& Scheffer \cite{bruckner2011stackelberg} and Hardt et al. \cite{hardt2016strategic}, we formalize the Strategic Classification Game as a Stackelberg competition in which the learner moves first by committing to and publishing a binary classifier $f$. Candidates, who are endowed with ``innate'' features, best respond by manipulating their feature inputs into the classifier. Formally, a candidate is defined by her $d$-dimensional feature vector $\bx \in X = [0,1]^d$ and group membership $A$ or $B$, with $A$ signifying the advantaged group and $B$ the disadvantaged. Group membership bears on manipulation costs such that a candidate from group $m$ who wishes to move from a feature vector $\bx$ to a feature vector $\by$ must pay a cost of $\cost_m(\by) - \cost_m(\bx)$. We note that these cost function forms are similar to the class of separable cost functions considered in Hardt et al. \cite{hardt2016strategic}. We assume that higher feature values indicate higher quality to the learner, and thus restrict our attention to manipulations such that $\by \ge \bx$, where the symbol $\ge$ signifies a component-wise comparison such that $\by \ge \bx$ if and only if $\forall i \in [d]$, $y_i \ge x_i$. Throughout this paper, we study non-negative monotone cost functions such that the cost of manipulating from a feature vector $\bx$ to a feature vector $\by$ increases as $\bx$ and $\by$ get further apart.

To motivate this distinction between features and costs, consider the use of SAT scores as a signal of academic preparedness in the U.S. college admissions process. The high-stakes nature of the SAT has encouraged the growth of a test prep industry dedicated to helping students perform better on the exam. Test preparation books and courses, while also exposing students to content knowledge and skills that are covered on the SAT, promise to ``hack" the exam by training students to internalize test-taking strategies based on the format, structure, and style of its questions. One can view SAT scores as a feature used by a learner building a classifier to select candidates with sufficient academic success according to some chosen standard. The existence of test prep resources then presents an opportunity for some applicants to inflate their scores, which might ``trick'' the tool into classifying the candidates as more highly qualified than they are in actuality. In this example, a candidate's strategic manipulation move refers to her investment in these resources, which despite improving her exam score, do not confer any genuine benefits to her level of academic preparation for college.  

\ignore{
In a society with structural inequality, feature manipulation for the purpose of pursuing some desired classification outcome is not equally accessible to all. As such, candidate groups signify memberships that confer relative advantaged or disadvantaged status.
}

Just as access to test prep resources tends to fall along income and race lines, we view candidates' different abilities to manipulate as tied to their group membership.  We model these group differences with respect to availability of resources and opportunity by enforcing a \textit{cost condition} that orders the two groups. We suppose that for all $\bx \in [0,1]^d$ and $\by \ge \bx$, \begin{equation} \label{cost-condition}
 c_A(\by) - c_A(\bx) \le c_B(\by) - c_B(\bx).
 \end{equation}
Manipulating from a feature vector $\bx$ to $\by$ is always at least as costly for a member of group $B$ as it is for a member of group $A$. We believe our model's inclusion of this cost condition reflects an authentic aspect of our social world wherein one group is systematically disadvantaged with respect to a task in comparison to another.

In our setup, we also allow groups to have distinct probability distributions $\mathcal{D}_A$ and $\mathcal{D}_B$ over unmanipulated features and to be subject to different true labeling functions $ h_A$ and $h_B$ defined as
\begin{equation}
h_A(\bx) = \begin{cases} 1, & \forall \bx \text{ such that }  \sum_{i=1}^dw_{A, i} x_i \ge \tau_A , \\ 0, & \forall \bx \text{ such that }  \sum_{i=1}^dw_{A, i} x_i < \tau_A , \end{cases} 
\end{equation}
\begin{equation}
h_B(\bx) = \begin{cases} 1, & \forall \bx \text{ such that }  \sum_{i=1}^dw_{B, i} x_i \ge \tau_B , \\ 0, & \forall \bx \text{ such that }  \sum_{i=1}^dw_{B, i} x_i < \tau_B . \end{cases} 
\end{equation}
We assume that $h_A(\bx) = 1 \implies h_B(\bx) = 1$ for all $\bx \in [0, 1]$. Returning to the SAT example, research has shown that scores are skewed by race even before factoring in additional considerations such as access to manipulation \cite{card2007racial}. In such cases, the true threshold for the disadvantaged group is lower than that for the advantaged group. We leave this generality in our model to acknowledge and account for the influence that various social and historical factors have on candidates' unmanipulated features and not, we emphasize, as an endorsement of a view that groups are fundamentally different in ability. A formal description of the Strategic Classification Game with Groups is given in the following definition. 

\begin{definition}[Strategic Classification Game with Groups]
In the Strategic Classification Game with Groups, candidates with features $\bx \in [0, 1]^d$ and group memberships $A$ or $B$ are drawn from distributions $\mathcal{D}_A$ and $\mathcal{D}_B$. The population proportion of each group is given by $p_A$ and $p_B$ where $p_A + p_B = 1$. A candidate from group $m$ pays cost $c_m(\by) - c_m(\bx)$ to move from her original features $\bx$ to $\by \ge \bx$. There exist true binary classifiers $h_A$ and $h_B$, for candidates of each group. Probability distributions, cost functions, and true binary classifiers are all common knowledge. Gameplay proceeds in the following manner:
\begin{enumerate}
\item The learner issues a classifier $f$ generating outcomes $\{0, 1\}$. 
\item Each candidate observes $f$ and manipulates her features $\bx$ to $\by \ge \bx$.
\end{enumerate}
A group $m$ candidate with features $\bx$ who moves to $\by$ earns a payoff \[f(\by) - (c_m(\by) - c_m(\bx)).\]
The learner incurs a penalty of
\begin{equation*}
\begin{split}
&C_{FP} \sum_{m \in \{A, B\}} p_m P_{\bx\sim\mathcal{D}_m}[h_m(\bx) = 0, f(\by) = 1] \\
& + C_{FN}\sum_{m \in \{A, B\}} p_m P_{\bx\sim\mathcal{D}_m}[h_m(\bx) = 1, f(\by) = 0] ,
\end{split}
\end{equation*}
where $C_{FP}$ and $C_{FN}$ denote the cost of a false positive and a false negative respectively.
\end{definition}

The learner looks to correctly classify candidates with respect to their original features $\bx$, whereas each candidate hopes to manipulate her features to attain a positive classification, expending as little cost as possible in the process. Under this setup, candidates are only willing to manipulate their features if it flips their classification from $0$ to $1$ and if the cost of the manipulation is less than $1$. We note that defining the utility of a positive classification to be $1$ can be considered a scaling and thus is without loss of generality.

This learner-candidate interaction is very similar to that studied in Hardt et al. \cite{hardt2016strategic}. However, our inclusion of groups with distinct manipulation costs leads to an ambiguity regarding a candidate's initial features that does not exist when all candidates have an equal opportunity to manipulate. In very few cases can a vendor distinguish among candidates based on their group membership for the explicit purpose of issuing distinct classification policies, especially if that group category is a protected class attribute. As such, in our setup, we require that a learner publish a classifier that is not adaptive to different agents based on their group identities. 


It is important to note that the positive results in Hardt et al.'s \cite{hardt2016strategic} formulation of the Strategic Classification Game, wherein for separable cost functions, the learner can attain a classification error at test-time that is arbitrarily close to the optimal payoff attainable, do not carry over into this setting of heterogeneous groups and costs. Even when $h_A = h_B$, the existence of different costs of agent manipulation, even when separable as in our model, introduces a base uncertainty to the learning problem that generates errors that cannot be extricated so long as the learner must publish a classifier that does not distinguish candidates based on their group memberships. Second, an analysis of the learner's strategy and performance, the  perspective typically taken in most learning theory papers, contributes only a partial view of the total welfare effect of using classification in strategic settings. The main objective of this paper is to offer a more thorough and holistic inspection of all agents' outcomes, paying special heed to the different outcomes experienced by candidates of the two groups. Insofar as all social behaviors are impelled by goals, interests, and purposes, we should view data that is strategically generated to be the rule rather than the exception in social machine learning settings.

\subsubsection*{Remark on the assumption that $h_A$ and $h_B$ are known.}
Our assumption that the learner has knowledge of groups' true labeling functions is not central to our analysis. We make such an assumption to highlight the pure effect of groups' differential costs of manipulation on equilibrium gameplay and consequent welfares rather than the potential side effects due to a learner's noisy estimation of the true classifiers. Our general findings do not substantially rely on this feature of the model, and the overall results carry through into a setting in which the learner optimizes from samples. 
\subsubsection*{Remark on unequal group costs}
The differences in costs $c_A$ and $c_B$ encoded by the cost condition is not restricted to referring only to differences in the monetary cost of manipulation. Instead, as is common in information economics and especially signaling theory, ``cost'' reflects the multiplicity of factors that bear on the effort exertion required by feature manipulation \cite{spence1978job,spremann1987agent,laffont2009theory,ballwieser2012agency}. To demonstrate the generality of our formulation of distinct group costs, we show that the cost condition given in (\ref{cost-condition}) is equivalent to a more explicit derivation of the choice that an agent faces when deciding whether to manipulate her feature. 

A rational agent with feature $\bx$ will only pursue manipulation if her value for a positive classification minus her cost of manipulation exceeds her value for a negative classification:
\begin{equation}
\label{explicit-costs}
v(f(\bx) = 0) \le v(f(\by) = 1) - u(c(\by) - c(\bx)) .
\end{equation}
The monotone function $u$ translates the costs borne by a candidate to manipulate from $\bx$ to $\by$ into her ``utility space," i.e., it reflects the value that she places on that expenditure. We can rewrite the previous inequality to be
\begin{equation}
\label{explicit-costs2}
c(\by) - c(\bx) \le u^{-1}\big(v(f(\by) = 1)- v(f(\bx) = 0)\big) .
\end{equation}
Substituting in $k = u^{-1}\big(v(f(\by) = 1)- v(f(\bx) = 0)\big)$, we have  
$c(\by) - c(\bx) \le k.$
Since the same cost expenditure is valued more highly by the disadvantaged group than by the advantaged group, the function $u$ is more convex for group $B$ than for group $A$. Thus all else equal, we have $c_A(\by) - c_A(\bx) \le c_B(\by) - c_B(\bx)$ as desired. More generally, the functions $v$, $c$, and $u$ may each be different for the groups. As such, the disadvantage encoded in the cost condition can arise due to differences in valuations of classifications ($v$), differences in costs ($c$), or differences in valuations of those costs ($u$).

\section{Equilibrium Analysis}
\label{sec:eq}
We begin by studying agents' best-response strategies in the basic Strategic Manipulation Game with Groups in which candidates belong to one of two groups $A$ and $B$, and the cost condition holds so that group $B$ members face greater costs to manipulation than group $A$ members. 
\ignore{
Whereas a candidate's feature $\bx$ does capture some relevant metric of interest for the learner, her manipulation costs do not provide any additional indication of the quality of her candidacy. Therefore a difference in cost only reflects a difference in opportunity to manipulate one's feature as caused by external factors. 
}
To build intuition, we first consider best-response strategies in the one-dimensional case in which candidates have features $x \in [0,1]$ and group cost functions are of any non-negative monotone form. We then move on to consider the $d$-dimensional case in which candidate features are given as vectors $\bx \in [0,1]^d$ and manipulation costs are assumed to be linear. 

\subsection{One-dimensional Features}
\label{sec:eq1d}
In the $d=1$ case, the cost condition given in (\ref{cost-condition}) may be written as
$c_A'(x) \le c_B'(x)$
for all $x \in [0,1]$. Since the true decision boundaries are linear, in the one-dimensional case, they may be written as threshold functions where thresholds $\tau_A$ and $\tau_B$ are constants in $[0,1]$ and for agents in group $m$, $h_m(x) = 1$ if and only if $x \geq \tau_m$. A university admissions decision based on a single score is an example of such a classifier. Although the SAT does not act as the sole determinant of admissions in the U.S., in countries such as Australia, Brazil, and China, a single exam score is often the only factor of applicant quality that is considered for admissions. 

When the learner has access to $\tau_A$ and $\tau_B$, and group costs $c_A$ and $c_B$ satisfy the cost condition, the following proposition characterizes the space of undominated strategies for the learner who seeks to minimize any error-penalizing cost function.

\begin{proposition}[One-D Undominated Learner Strategies]
\label{1d-optimal}
Given group cost functions $c_A$ and $c_B$ and true label thresholds $\tau_A$ and $\tau_B$ where $\tau_B \le \tau_A$, there exists a space of undominated learner threshold strategies $[\sigma_B, \sigma_A] \subset [0, 1]$ where $\sigma_A = c_A^{-1}(c_A(\tau_A)+1)$ and $\sigma_B = c_B^{-1}(c_B(\tau_B)+1)$. That is, for any error penalties $C_{FP}$ and $C_{FN}$, the learner's equilibrium classifier $f$ is based on a threshold $\sigma \in [\sigma_B, \sigma_A]$ such that for all manipulated features $y$,
\begin{equation}
f(y) = \begin{cases} 1, & \forall y \ge \sigma, \\ 0, & \forall y < \sigma. \end{cases} 
\end{equation}
\end{proposition}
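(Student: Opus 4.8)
The plan is to reduce the learner's problem to a single induced threshold on the \emph{innate} feature and then argue optimality across all error penalties by a domination (Pareto) argument in false-positive/false-negative space. First I would pin down the best response. Fix a learner threshold $\sigma$. A group-$m$ candidate with $x \ge \sigma$ is already labeled $1$, while one with $x < \sigma$ moves up to exactly $\sigma$ precisely when the cost $c_m(\sigma) - c_m(x)$ is below the unit value of a positive classification. Hence a group-$m$ candidate is ultimately classified $1$ if and only if $x > \phi_m(\sigma)$, where $\phi_m(\sigma) := c_m^{-1}\big(c_m(\sigma) - 1\big)$ is the effective threshold that $\sigma$ induces on the true feature. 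Since $c_m$ is increasing, each $\phi_m$ is increasing in $\sigma$ with $\phi_m(\sigma) < \sigma$ (I will note tie-breaking at cost exactly $1$ is immaterial, as it affects only a measure-zero boundary).

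With the effective thresholds in hand, the reduction is immediate: group $m$ contributes false positives exactly on $\{\phi_m(\sigma) < x < \tau_m\}$ and false negatives exactly on $\{\tau_m \le x \le \phi_m(\sigma)\}$, so group $m$ is classified perfectly iff $\phi_m(\sigma) = \tau_m$. Solving $\phi_m(\sigma)=\tau_m$ gives $c_m(\sigma) = c_m(\tau_m)+1$, i.e. $\sigma = \sigma_m = c_m^{-1}(c_m(\tau_m)+1)$, which recovers the stated endpoints. Because each $\phi_m$ increases with $\sigma$, the total false-positive mass is non-increasing and the total false-negative mass is non-decreasing in $\sigma$. To establish that the interval is well-defined, I would show $\sigma_B \le \sigma_A$ by applying the cost condition (\ref{cost-condition}) with $\bx = \tau_A$, $\by = \sigma_A$: this yields $c_B(\sigma_A) - c_B(\tau_A) \ge c_A(\sigma_A) - c_A(\tau_A) = 1$, hence $c_B(\sigma_A) \ge c_B(\tau_A) + 1 \ge c_B(\tau_B) + 1 = c_B(\sigma_B)$, where the middle step uses $\tau_B \le \tau_A$; monotonicity of $c_B$ then gives $\sigma_A \ge \sigma_B$.

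For the characterization itself, I would argue in two directions. For $\sigma < \sigma_B$ both effective thresholds lie strictly below the corresponding $\tau_m$, so both groups produce only false positives and no false negatives; raising $\sigma$ to $\sigma_B$ weakly shrinks every false-positive region while keeping false negatives at zero, so $\sigma_B$ dominates every such $\sigma$. Symmetrically, for $\sigma > \sigma_A$ both groups produce only false negatives, and $\sigma_A$ dominates every such $\sigma$. Therefore, for any $C_{FP}, C_{FN}$, some minimizer lies in $[\sigma_B, \sigma_A]$. Conversely, on this interval $\phi_A(\sigma) \le \tau_A$ and $\phi_B(\sigma) \ge \tau_B$, so all false positives come from the advantaged group $A$ and all false negatives from the disadvantaged group $B$, and increasing $\sigma$ strictly trades the former for the latter; hence no point of $[\sigma_B,\sigma_A]$ Pareto-dominates another in false-positive/false-negative space, so each is undominated and is the minimizer for a suitable ratio $C_{FP}/C_{FN}$.

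The main obstacle I anticipate is the clean reduction together with the ordering $\sigma_B \le \sigma_A$: the latter is the step that genuinely combines both hypotheses (the cost condition \emph{and} $\tau_B \le \tau_A$), and it is what guarantees the two groups err in \emph{opposite} directions throughout $[\sigma_B,\sigma_A]$---the feature that simultaneously makes the interval a nondegenerate trade-off frontier and encodes the inequality-reinforcing phenomenon. A secondary technical point is ensuring the false-positive/false-negative trade-off is \emph{strict} rather than merely weak, which requires positive density of $\mathcal{D}_A$ and $\mathcal{D}_B$ on the relevant sub-intervals; I would state this as a genericity assumption (or phrase all domination claims as weak, noting strictness whenever the pertinent region carries mass). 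I would also remark that $\sigma_A$ may exceed $1$ for small costs, in which case the admissible thresholds are clamped to $[0,1]$.
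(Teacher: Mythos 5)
Your proof is correct and follows the same overall skeleton as the paper's: construct the per-group perfect thresholds $\sigma_A$ and $\sigma_B$, establish the ordering $\sigma_B \le \sigma_A$, and then dominate every threshold outside $[\sigma_B,\sigma_A]$ by the nearer endpoint. The one place you genuinely diverge is the ordering step, and your version is cleaner: you apply the integrated cost condition (\ref{cost-condition}) directly with $\bx=\tau_A$, $\by=\sigma_A$ to get $c_B(\sigma_A)-c_B(\tau_A)\ge c_A(\sigma_A)-c_A(\tau_A)=1$, and then chain $c_B(\sigma_A)\ge c_B(\tau_A)+1\ge c_B(\tau_B)+1=c_B(\sigma_B)$ via $\tau_B\le\tau_A$ and monotonicity --- a single argument with no case analysis. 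The paper instead splits on $\Delta=c_B(\tau_A)-c_B(\tau_B)$ being $\ge 1$ or in $(0,1)$, and in the latter case passes through the pointwise derivative statement $c_B'(\tau_A)\ge c_A'(\tau_A)$ to conclude $c_B^{-1}(c_B(\tau_A)+1)\le c_A^{-1}(c_A(\tau_A)+1)$; that inference really requires the integrated condition on the whole interval $[\tau_A,\sigma_A]$ (exactly what you use), so your route is not only shorter but closes a small rigor gap. Two further differences in presentation buy you something downstream: you front-load the effective-threshold maps $\phi_m(\sigma)=c_m^{-1}(c_m(\sigma)-1)$, which the paper only introduces later (as $\ell_A,\ell_B$) for Proposition \ref{1d-cost}, so your domination argument and the error-interval characterization of Proposition \ref{1d-cost} become immediate corollaries; and you explicitly address points the paper leaves implicit --- that undominatedness \emph{within} $[\sigma_B,\sigma_A]$ requires the FP/FN trade-off to be strict (positive density on the relevant intervals), that ties at manipulation cost exactly $1$ only affect a measure-zero set, and that $\sigma_A$ must be clamped to $[0,1]$ when costs are small.
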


To understand this result, first notice that if the learner were to face only those candidates from group $A$, she would achieve perfect classification by labeling as $1$ only those candidates with unmanipulated feature $x \ge \tau_A$. This strategy is enacted by considering candidates' best-response manipulations. A rational candidate would only be willing to manipulate her feature if the gain she receives in her classification exceeds her costs of manipulation. The learner would like to guard against manipulations by candidates with $x < \tau_A$ but still admit candidates with $x \ge \tau_A$, so she considers the maximum manipulated feature $y$ that is attainable by a rational candidate with $x = \tau_A$ who is willing to spend up to a cost of one in order to secure a better classification, as illustrated in Figure~\ref{fig:1d}. The maximum such $y$ value is $\sigma_A$, and thus, the learner sets a threshold at $\sigma_A$, admitting all those with $y \ge \sigma_A$ and rejecting all those with $y < \sigma_A$. The same reasoning applies to a learner facing only group $B$ candidates, and the learner sets a threshold at $\sigma_B$, admitting all those candidates with $y \ge \sigma_B$ and rejecting all those with $y < \sigma_B$.

It can be shown that for all valid values of $\tau_A, \tau_B, c_A,$ and $c_B$, necessarily $\sigma_B \le \sigma_A$. Then all classifiers with threshold $\sigma < \sigma_B$ are dominated by $\sigma_B$, in the sense that for any arbitrary error penalties $C_{FP}$ and $C_{FN}$, the learner would suffer higher costs by setting her threshold to be $\sigma$ rather than $\sigma_B$. In the same way, all thresholds $\sigma > \sigma_A$ are dominated by $\sigma_A$, thus leaving $[\sigma_B, \sigma_A]$ to be the space of undominated thresholds. For an account of the full proof of this result (and all omitted proofs), see the appendix.  

Even without committing to a particular learner cost function, the space of optimal strategies characterized in Proposition \ref{1d-optimal} leads to an important consequence. A rational learner in the Strategic Classification Game always selects a classifier that exhibits the following phenomenon: it mistakenly admits unqualified candidates from the group with lower costs and mistakenly excludes qualified candidates from the group with higher costs. This result is formalized in Proposition \ref{1d-cost}.  

To state the proposition, the following definition is instructive. Whereas the true thresholds $\tau_A$ and $\tau_B$ are a function of unmanipulated features, the learner only faces candidate features that may have been manipulated. In order to make these observed features commensurable with $\tau_A$ and $\tau_B$, it is helpful for the learner to ``translate'' a candidate's possibly manipulated feature $y$ to its minimum corresponding original unmanipulated value. 

\begin{definition}[Correspondence with unmanipulated features]
For any observed candidate feature $y \in [0,1]$, the minimum corresponding unmanipulated feature is defined as
\begin{equation} 
\begin{split}
\label{correspondence}
\ell_A(y)= \max\{0, c_A^{-1}(c_A(y)-1)\},
\\
\ell_B(y)= \max\{0, c_B^{-1}(c_B(y)-1)\}
\end{split}
\end{equation}
for a candidate belonging to group A and group B respectively.
\end{definition}
The corresponding values $\ell_A(y)$ and $\ell_B(y)$ are defined such that a candidate who presents feature $y$ must have as her true unmanipulated feature $x \ge \ell_A(y)$ if she is a group A member and $x \ge \ell_B(y)$ if she is a group B member. 

\begin{figure}
\centering
\includegraphics[width = 7cm]{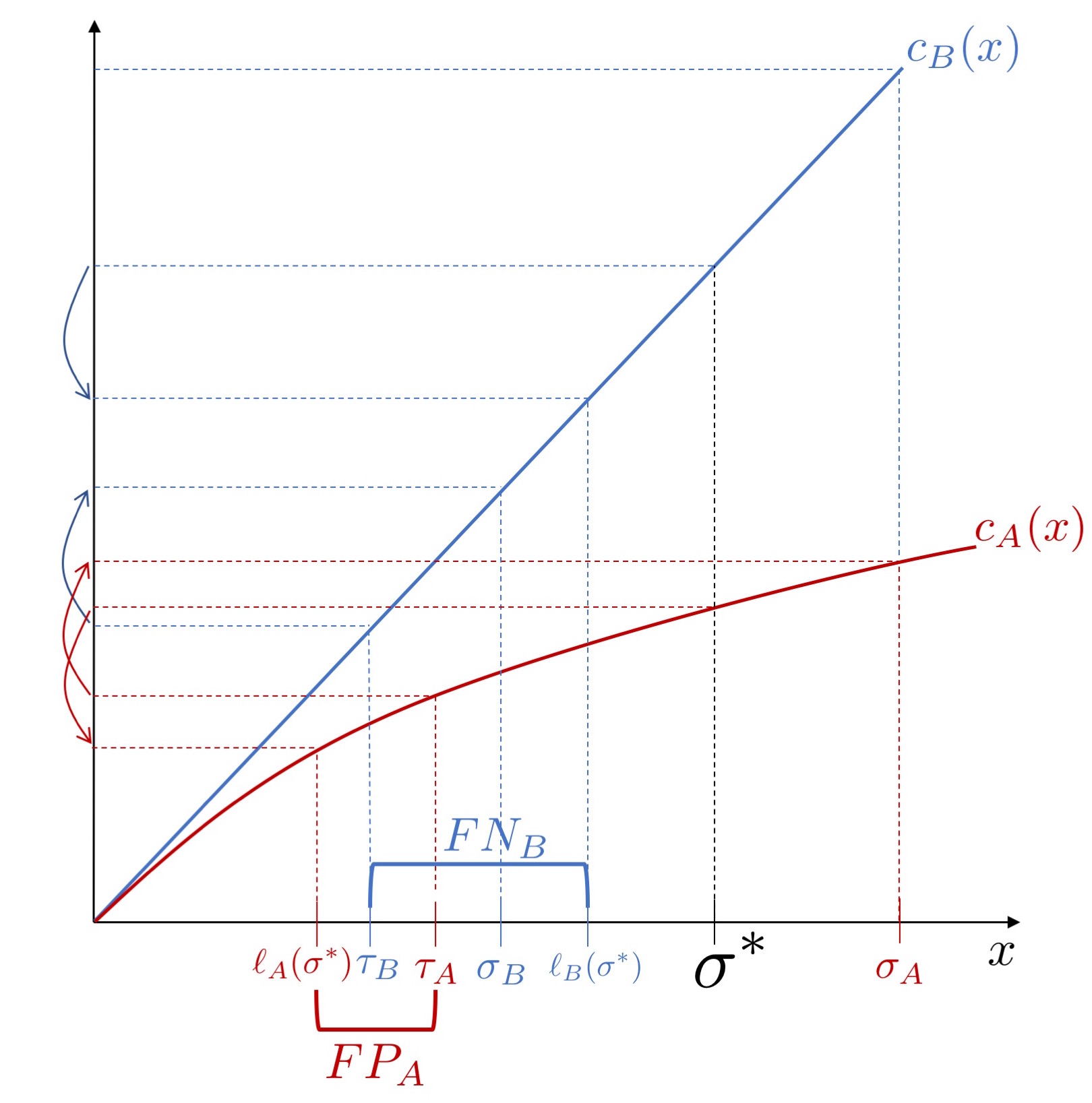}
\caption{Group cost functions for a one-dimensional feature $x$. $\tau_A$ and $\tau_B$ signify true thresholds on unmanipulated features for group $A$ and $B$, but a learner must issue a classifier on manipulated features. The threshold $\sigma_A$ perfectly classifies group $A$ candidates; $\sigma_B$ perfectly classifies group $B$ candidates. A learner selects an equilibrium threshold $\sigma^* \in [\sigma_B, \sigma_A]$, committing false positives on group $A$ (red bracket) and false negatives on group $B$ (blue bracket).\label{fig:1d}}
\end{figure}
\begin{proposition}[Learner's Cost in 1 Dimension]
\label{1d-cost}
A learner who employs a classifier $f$ based on a threshold strategy $\sigma \in [\sigma_B, \sigma_A]$ only commits false positives errors on group A and false negatives errors on group B. The cost $C(\sigma)$ of such a classifier is 
\[ C_{FN} p_B P_{x\sim\mathcal{D}_B}\big[x \in [\tau_B, \ell_B(\sigma))\big] + C_{FP} p_A P_{x\sim\mathcal{D}_A} \big[x \in [\ell_A(\sigma), \tau_A)\big], \]
where false negative errors entail penalty $C_{FN}$, and false positive errors entail penalty $C_{FP}$.
\end{proposition}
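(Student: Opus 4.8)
The plan is to reduce the learner's cost to a computation over each group's \emph{effective decision boundary} in unmanipulated-feature space, and then compare that boundary with the true threshold to read off the error type group by group.

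First I would characterize candidate best responses. Given a published threshold $\sigma$, a group $m$ candidate with unmanipulated feature $x$ can only gain a positive classification by manipulating up to $y = \sigma$ (moving further is wasteful and moving less is useless), which costs $c_m(\sigma) - c_m(x)$. Since a positive classification is worth exactly $1$, she manipulates precisely when this cost is at most $1$, i.e.\ when $c_m(x) \ge c_m(\sigma) - 1$. By monotonicity (and invertibility) of $c_m$ this is equivalent to $x \ge c_m^{-1}(c_m(\sigma) - 1)$, and folding in the domain constraint $x \ge 0$ gives $x \ge \ell_m(\sigma)$; candidates with $x \ge \sigma \ge \ell_m(\sigma)$ are accepted without manipulating at all. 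Hence, after best response, a group $m$ candidate is labeled $1$ if and only if $x \ge \ell_m(\sigma)$, so the observed classification behaves exactly like a threshold $\ell_m(\sigma)$ applied to the unmanipulated feature.

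Next I would compare $\ell_m(\sigma)$ with $\tau_m$. A false positive for group $m$ occurs exactly on $\{x : \ell_m(\sigma) \le x < \tau_m\}$ and a false negative exactly on $\{x : \tau_m \le x < \ell_m(\sigma)\}$, so one of these is empty according to the sign of $\ell_m(\sigma) - \tau_m$. I then pin down that sign for each group. For group $A$, using $\sigma \le \sigma_A = c_A^{-1}(c_A(\tau_A)+1)$ together with strict monotonicity of $c_A$ yields $c_A(\sigma) - 1 \le c_A(\tau_A)$, hence $\ell_A(\sigma) \le \tau_A$, so group $A$ incurs only false positives, on $[\ell_A(\sigma), \tau_A)$. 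Symmetrically, for group $B$, $\sigma \ge \sigma_B = c_B^{-1}(c_B(\tau_B)+1)$ gives $c_B(\sigma) - 1 \ge c_B(\tau_B)$ and thus $\ell_B(\sigma) \ge \tau_B$, so group $B$ incurs only false negatives, on $[\tau_B, \ell_B(\sigma))$. A mild subtlety is the $\max\{0,\cdot\}$ truncation in $\ell_m$: I would verify that when it binds it forces $\ell_m(\sigma) = 0$, which still satisfies the required inequality because $\sigma \ge \sigma_B$ already forces $\tau_B = 0$ in that regime.

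Finally I would assemble the cost. Since group $A$ contributes no false negatives and group $B$ no false positives, the general learner-penalty expression of the game definition collapses to the two surviving terms, giving $C(\sigma) = C_{FP}\, p_A\, P_{x\sim\mathcal{D}_A}[x \in [\ell_A(\sigma), \tau_A)] + C_{FN}\, p_B\, P_{x\sim\mathcal{D}_B}[x \in [\tau_B, \ell_B(\sigma))]$, as claimed. I expect the only genuine obstacle to be the care needed in the best-response step: pinning down the boundary $\ell_m(\sigma)$ exactly, handling the indifferent candidate whose manipulation cost equals $1$ (a tie resolved by the half-open intervals and immaterial to the probability expression under atomless $\mathcal{D}_m$), and the truncation; the two threshold inequalities and the final summation are then routine consequences of the monotonicity of the cost functions.
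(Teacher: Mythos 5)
Your proposal is correct and follows essentially the same route as the paper's proof: both reduce the classifier's action to the effective thresholds $\ell_A(\sigma)$ and $\ell_B(\sigma)$ on unmanipulated features, use $\sigma \le \sigma_A$ and $\sigma \ge \sigma_B$ to get $\ell_A(\sigma) \le \tau_A$ and $\ell_B(\sigma) \ge \tau_B$, and read off the error intervals $[\ell_A(\sigma), \tau_A)$ and $[\tau_B, \ell_B(\sigma))$ before summing the two penalty terms. Your treatment is in fact slightly more careful than the paper's, since you explicitly justify the best response (manipulating exactly to $\sigma$), the tie at cost exactly $1$, and the $\max\{0,\cdot\}$ truncation in $\ell_m$.
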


A learner who commits to classifying only one of the groups correctly bears costs given by the following corollaries.

\begin{corollary}
\label{corollary-fn}
A classifier based on $\sigma_A$ perfectly classifies group A candidates and bears cost $C(\sigma_A) = C_{FN}p_BP_{x\sim\mathcal{D}_B}\big[x \in [\tau_B, \ell_B(\sigma))\big].$
\end{corollary}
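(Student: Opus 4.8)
The plan is to obtain the corollary as a direct specialization of the cost formula in Proposition~\ref{1d-cost} to the right endpoint $\sigma = \sigma_A$ of the undominated interval. Since that proposition already establishes that any threshold $\sigma \in [\sigma_B, \sigma_A]$ produces no false negatives on group $A$, it suffices to show that at $\sigma = \sigma_A$ the false-positive contribution on group $A$ also vanishes; together these two facts give perfect classification of group $A$ and reduce the cost to its single remaining (false-negative) term.

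First I would compute $\ell_A(\sigma_A)$ from its definition in~(\ref{correspondence}) using $\sigma_A = c_A^{-1}(c_A(\tau_A)+1)$ from Proposition~\ref{1d-optimal}. Applying $c_A$ gives $c_A(\sigma_A) = c_A(\tau_A)+1$, so
\[
\ell_A(\sigma_A) = \max\{0,\, c_A^{-1}(c_A(\sigma_A)-1)\} = \max\{0,\, c_A^{-1}(c_A(\tau_A))\} = \max\{0, \tau_A\} = \tau_A ,
\]
where the cancellation $c_A^{-1}(c_A(\tau_A)) = \tau_A$ is justified by the invertibility of $c_A$ (implicit in the appearance of $c_A^{-1}$ throughout), and the final equality uses $\tau_A \in [0,1]$.

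Next, substituting $\ell_A(\sigma_A) = \tau_A$ into the false-positive term of Proposition~\ref{1d-cost} yields the half-open interval $[\ell_A(\sigma_A), \tau_A) = [\tau_A, \tau_A) = \emptyset$, so that $C_{FP}\, p_A\, P_{x\sim\mathcal{D}_A}[x \in [\tau_A, \tau_A)] = 0$. The cost of the classifier based on $\sigma_A$ therefore collapses to
\[
C(\sigma_A) = C_{FN}\, p_B\, P_{x\sim\mathcal{D}_B}\big[x \in [\tau_B, \ell_B(\sigma_A))\big] ,
\]
which is the claimed expression. The emptiness of the false-positive interval, combined with the absence of false negatives on group $A$ inherited from Proposition~\ref{1d-cost}, shows that no group $A$ candidate of either true label is misclassified, i.e., group $A$ is perfectly classified.

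I do not expect any substantive obstacle: the corollary is essentially an algebraic evaluation inside an already-proven formula. The only points needing care are confirming that $c_A$ is invertible so the composition $c_A^{-1}\circ c_A$ acts as the identity, and noting that the relevant interval is genuinely empty (not a singleton) because it is half-open; both follow from the standing assumption that the cost functions are monotone and non-negative on $[0,1]$.
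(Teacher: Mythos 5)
Your proposal is correct and follows essentially the same route as the paper: the paper's own proof also obtains the corollary by specializing the cost formula of Proposition~\ref{1d-cost} to $\sigma = \sigma_A$, noting that $\sigma_A$ commits no errors on group $A$ so only the false-negative term on group $B$ survives. Your explicit computation $\ell_A(\sigma_A) = \tau_A$, which makes the false-positive interval $[\tau_A,\tau_A)$ empty, is just a more detailed verification of the fact the paper cites directly from the construction of $\sigma_A$ in Proposition~\ref{1d-optimal}.
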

\begin{corollary}
\label{corollary-fp}
A classifier based on $\sigma_B$ perfectly classifies group B candidates and bears cost $C(\sigma_B) = C_{FP} p_AP_{x\sim\mathcal{D}_A} \big[x \in [\ell_A(\sigma), \tau_A)\big].$
\end{corollary}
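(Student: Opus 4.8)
The plan is to derive Corollary \ref{corollary-fp} directly from Proposition \ref{1d-cost} by evaluating its cost expression at the threshold $\sigma = \sigma_B$ and showing that the false-negative term on group $B$ vanishes. Specializing the proposition, the cost of the classifier based on $\sigma_B$ is
\[
C(\sigma_B) = C_{FN}\, p_B\, P_{x\sim\mathcal{D}_B}\big[x \in [\tau_B, \ell_B(\sigma_B))\big] + C_{FP}\, p_A\, P_{x\sim\mathcal{D}_A}\big[x \in [\ell_A(\sigma_B), \tau_A)\big],
\]
so it suffices to argue that the first summand is zero, i.e.\ that the interval $[\tau_B, \ell_B(\sigma_B))$ is empty.

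The key computation is to evaluate $\ell_B$ at $\sigma_B$. By definition $\sigma_B = c_B^{-1}(c_B(\tau_B)+1)$, so (using that $c_B$ is strictly increasing and hence invertible on $[0,1]$) we have $c_B(\sigma_B) = c_B(\tau_B)+1$, whence $c_B(\sigma_B) - 1 = c_B(\tau_B)$. Substituting into the definition of the correspondence, $\ell_B(\sigma_B) = \max\{0,\, c_B^{-1}(c_B(\sigma_B)-1)\} = \max\{0, \tau_B\} = \tau_B$, where the last equality uses $\tau_B \ge 0$. Consequently $[\tau_B, \ell_B(\sigma_B)) = [\tau_B, \tau_B) = \emptyset$, its probability under $\mathcal{D}_B$ is $0$, and the false-negative term drops out, leaving exactly the claimed cost $C_{FP}\, p_A\, P_{x\sim\mathcal{D}_A}\big[x \in [\ell_A(\sigma_B), \tau_A)\big]$.

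It remains to justify the ``perfectly classifies group $B$'' claim, which I would verify via candidate best responses: a group $B$ candidate with true feature $x$ is willing to manipulate up to $\sigma_B$ precisely when $c_B(\sigma_B) - c_B(x) \le 1$, i.e.\ $c_B(x) \ge c_B(\sigma_B) - 1 = c_B(\tau_B)$, i.e.\ $x \ge \tau_B$ by monotonicity. Thus every group $B$ candidate with $x \ge \tau_B$ reaches the threshold and is labeled $1$, while every candidate with $x < \tau_B$ cannot afford to and is labeled $0$; this matches $h_B$ exactly, so there are neither false negatives nor false positives on group $B$, consistent with the vanishing first summand.

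The argument is essentially a one-line specialization of Proposition \ref{1d-cost}, so there is no substantive obstacle; the only point requiring care is the boundary behavior of the $\max\{0, \cdot\}$ truncation in the definition of $\ell_B$, which is harmless here because $\tau_B \ge 0$ forces $\ell_B(\sigma_B) = \tau_B$ rather than $0$. One should also flag the implicit assumption that $c_B$ is strictly monotone, so that $c_B^{-1}$ exists and the cost-condition inequalities translate cleanly into feature-space inequalities; the same manipulation-cost reasoning underlying Proposition \ref{1d-optimal} supplies this.
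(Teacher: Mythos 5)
Your proof is correct and follows essentially the same route as the paper: the paper's own (one-line) argument also specializes Proposition~\ref{1d-cost} at $\sigma=\sigma_B$ and invokes the fact that $\sigma_B$ commits no errors on group $B$, which is exactly what your computation $\ell_B(\sigma_B)=\tau_B$ and your best-response check make explicit. You also correctly read the $\ell_A(\sigma)$ in the corollary's statement as $\ell_A(\sigma_B)$, which is the intended meaning.
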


Notice that the learner's errors always cut in the same direction---by unduly benefiting group $A$ candidates and unduly rejecting group $B$ candidates, these errors act to reinforce the existing social inequality that had generated the unequal group cost conditions in the first place. Since these errors arise out of the asymmetric group costs of manipulation, the Strategic Classification Game can be viewed as an interactive model that itself perpetuates the relative advantage of group A over group B candidates.

Within the undominated region $[\sigma_B, \sigma_A]$, the equilibrium learner threshold $\sigma^*$ is attained as the solution to the optimization problem
\begin{equation}
\sigma^* = \argmin_{\sigma \in [\sigma_B, \sigma_A]} C(\sigma).
\end{equation}
In the game's greatest generality where candidates are drawn from arbitrary probability distributions, groups bear any costs that abide by the cost condition, and the learner has arbitrary error penalties, one cannot specify the equilibrium learner threshold $\sigma^*$ any further. However, under some special cases of candidate cost functions and probability distributions, the equilibrium threshold can be characterized more precisely. Specifically, when candidates from both groups are assumed to be drawn from a uniform distribution over unmanipulated features in $[0,1]$, an error-minimizing learner seeks a threshold value $\sigma^*$ that minimizes the length of the interval of errors, given by the following quantity: 
\[ \sigma^* =  \argmin_{\sigma \in[\sigma_B, \sigma_A]} \ell_B(\sigma) - \ell_A(\sigma) .\]
From here, one natural assumption of candidate cost functions would have that groups $A$ and $B$ bear costs that are proportional to each other. In this case, the curvature of the cost functions is determinative of a learner's equilibrium threshold.

\begin{proposition}
\label{proportional-costs}
Suppose group cost functions are proportional such that $c_A(x) = q c_B(x)$ for $q \in (0,1)$, that $\mathcal{D}_A$ and $\mathcal{D}_B$ are uniform on $[0,1]$, and that $C_{FN} = C_{FP}$ and $p_A = p_B = \frac{1}{2}$. Let $\sigma^*$ be the learner's equilibrium threshold.
\\
When cost functions are strictly concave, $\sigma^* = \sigma_B$. When cost functions are strictly convex, $\sigma^* = \sigma_A$. When cost functions are affine, the learner is indifferent between all $\sigma^* \in [\sigma_B, \sigma_A]$.
\end{proposition}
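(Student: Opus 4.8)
The plan is to reduce the optimization to the one-dimensional minimization already flagged in the text, $\sigma^* = \argmin_{\sigma \in [\sigma_B,\sigma_A]} \big(\ell_B(\sigma) - \ell_A(\sigma)\big)$, and then read off the location of the minimizer from the curvature of the cost functions. First I would specialize Proposition~\ref{1d-cost} to the stated assumptions ($\mathcal{D}_A,\mathcal{D}_B$ uniform on $[0,1]$, $C_{FN}=C_{FP}$, $p_A=p_B=\tfrac12$): the error intervals have lengths $\ell_B(\sigma)-\tau_B$ and $\tau_A-\ell_A(\sigma)$, so the cost is
\[
C(\sigma) = \frac{C_{FN}}{2}\Big[\ell_B(\sigma)-\ell_A(\sigma) + (\tau_A-\tau_B)\Big].
\]
Since $\tau_A-\tau_B$ is a constant offset, minimizing $C(\sigma)$ over $[\sigma_B,\sigma_A]$ is equivalent to minimizing $g(\sigma) := \ell_B(\sigma)-\ell_A(\sigma)$.

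Next I would use proportionality to collapse everything onto a single cost function. Writing $c := c_B$ so that $c_A = q\,c$, the defining relations for the minimum corresponding unmanipulated features become $c(\ell_B(\sigma)) = c(\sigma)-1$ and $c(\ell_A(\sigma)) = c(\sigma)-\tfrac1q$, valid on the range where the right-hand sides stay at least $c(0)$ so the $\max\{0,\cdot\}$ clamp is inactive. Substituting $s := c(\sigma)$, which is strictly increasing in $\sigma$ and sweeps $[c(\sigma_B), c(\sigma_A)]$, and setting $\phi := c^{-1}$, the objective is
\[
g = \phi(s-1) - \phi\!\left(s-\tfrac1q\right).
\]
Because $q\in(0,1)$ we have $\tfrac1q>1$, so $g$ is the increment of $\phi$ across an input interval of fixed width $\tfrac1q-1$ whose position shifts rightward as $s$ grows.

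The key step is the elementary fact that, for an increasing $c$, the inverse $\phi=c^{-1}$ is convex exactly when $c$ is concave and concave exactly when $c$ is convex; combining this with the monotone-differences property gives $\tfrac{d}{ds}\,g = \phi'(s-1)-\phi'(s-\tfrac1q)$, whose sign is fixed by the curvature of $\phi$. If $c$ is strictly concave, then $\phi$ is strictly convex, $\phi'$ is increasing, and since $s-1 > s-\tfrac1q$ we get $g'>0$; hence $g$ increases and the minimizer is the left endpoint $\sigma_B$. If $c$ is strictly convex, then $\phi$ is strictly concave, $g'<0$, and the minimizer is $\sigma_A$. If $c$ is affine, then $\phi$ is affine, $g$ is constant, and the learner is indifferent across $[\sigma_B,\sigma_A]$, yielding the three claimed cases.

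The step I expect to be the main obstacle is the boundary behavior hidden in the $\max\{0,\cdot\}$ defining $\ell_A$ and $\ell_B$: when the threshold is low enough that even an $x=0$ candidate can reach it within budget, $\ell_A(\sigma)$ (or $\ell_B(\sigma)$) pins to $0$ and the clean relation $c(\ell_A(\sigma))=c(\sigma)-\tfrac1q$ fails, so the derivative-sign argument no longer applies verbatim. I would therefore prove the clean statement inside the \emph{interior regime} where both $\ell_A(\sigma)$ and $\ell_B(\sigma)$ are strictly positive on $[\sigma_B,\sigma_A]$ (ensured, for instance, when $c(\tau_B)\ge \tfrac1q-1$), and then treat any clamped sub-interval directly: there $g$ reduces to $\ell_B(\sigma)$ alone, an increasing function, so the only candidate minimizers remain the two endpoints and the case is settled by comparing $g(\sigma_B)$ with $g(\sigma_A)$. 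Confirming that this endpoint comparison is consistent with the curvature-based conclusion — rather than introducing a nondegeneracy hypothesis that rules the clamping out — is the one place where genuine care is required.
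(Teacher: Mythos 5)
Your core argument is the paper's argument in different coordinates: the paper also reduces the problem to minimizing $\ell_B(\sigma)-\ell_A(\sigma)$ over $[\sigma_B,\sigma_A]$ and signs its derivative, computing $\ell_m'(\sigma)=c_B'(\sigma)/c_B'(\ell_m(\sigma))$ (the factor $q$ cancels for group $A$) and using $\ell_A(\sigma)<\ell_B(\sigma)$ together with monotonicity of $c_B'$; your substitution $s=c(\sigma)$, $\phi=c^{-1}$ is exactly that computation, so in the unclamped regime the two proofs are essentially identical.

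Where you diverge is the clamping issue, and your caution there is more warranted than you seem to realize. The paper's proof silently drops the $\max\{0,\cdot\}$ in the definition of $\ell_A$ and $\ell_B$, and your hope that any clamped sub-interval can be settled by an endpoint comparison \emph{consistent with the curvature-based conclusion} does not pan out: in the strictly convex case the comparison can genuinely go the other way. Take $c_B(x)=4x^2$, $c_A(x)=2x^2$ (so $q=\tfrac12$), $\tau_B=0.1$, $\tau_A=0.7$. Then $\sigma_B=\sqrt{0.26}\approx 0.51$ and $c_A(\sigma_B)=0.52<1$, so $\ell_A(\sigma_B)=0$ is clamped and $g(\sigma_B)=\tau_B=0.1$; meanwhile $\sigma_A=\sqrt{0.99}\le 1$ and $g(\sigma_A)=\ell_B(\sigma_A)-\tau_A=\sqrt{0.74}-0.7\approx 0.16$. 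Since $g(\sigma)=\ell_B(\sigma)$ is increasing on the clamped sub-interval and decreasing thereafter by the curvature argument, the global minimizer is $\sigma_B$, not $\sigma_A$, contradicting the stated conclusion for strictly convex costs (concretely: the learner prefers false positives on all of $[0,\tau_A)$ in group $A$ to false negatives on $[0.1,0.86)$ in group $B$). So the clamped case cannot be folded into the theorem; the proposition requires the interiority hypothesis you proposed (e.g., $c_A(\sigma_B)-c_A(0)\ge 1$, ensuring $\ell_A>0$ on the whole interval), and your plan of proving the statement only in that regime is the correct resolution. The paper's own proof carries the same implicit gap, just unacknowledged, so on this point your write-up is the more careful of the two.
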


\subsection{General $d$-Dimensional Feature Vectors}
\label{sec:eqdd}
In the general $d$-dimensional case of the Strategic Classification Game, candidates are endowed with features that are given by a vector $\bx \in [0,1]^d$ and can choose to manipulate and present any feature $\by \ge \bx$ to the learner. In this section, we consider optimal learner and candidate strategies when group costs are linear such that they may be written as
\begin{equation}
c_A(\bx) = \sum_{i=1}^d c_{A, i} x_i;\hspace{10pt} c_B(\bx) = \sum_{i=1}^d c_{B, i} x_i
\end{equation}
for groups $A$ and $B$ respectively. 
Now, the cost condition $c_A(\by) - c_A(\bx) \le c_B(\by) - c_B(\bx)$ for all $\by \ge \bx$---defined component-wise as before---implies that $\forall i \in [d]$, $c_{A,i} \le c_{B,i}$. In $d$ dimensions, the true classifiers $h_A$ and $h_B$ have linear decision boundaries
such that for a group $A$ candidate with feature $\bx$,
\begin{equation}
h_A(\bx) = \begin{cases}
1 & \sum_{i=1}^d w_{A, i}x_i \ge \tau_A ,\\
0 & \sum_{i=1}^d w_{A, i}x_i <\tau_A ,
\end{cases}
\end{equation}
and for a group $B$ candidate with feature $\bx$, 
\begin{equation}
h_B(\bx) = \begin{cases}
1 & \sum_{i=1}^d w_{B, i}x_i \ge \tau_B ,\\
0 & \sum_{i=1}^d w_{B, i}x_i <\tau_B.
\end{cases}
\end{equation}
We assume that all components $x_i$ contribute positively to an agent's likelihood of being classified as $1$ so that $w_{A, i}, w_{B, i} \ge 0$ for all $i$. To ensure that the cost of manipulation is always non-negative, all cost coefficients are positive: $c_{B,i}, c_{A,i} \ge 0$ for all $i \in [d]$.
 
A candidate may now manipulate any combination of the $d$ components of her initial feature $\bx$ to reach the final feature $\by$ that she presents to the learner. Despite this increased flexibility on the part of the candidate, we are still able to characterize the performance of undominated learner classifiers, generalizing the result in Proposition \ref{1d-cost}. All potentially optimal classifiers exhibit the same inequality-reinforcing property inherent within the one-dimensional interval of undominated threshold strategies, trading off false positives on group A candidates with false negatives on group B candidates. Before we formally present this result, we first describe candidates' best-response strategies. Here, a geometric view of the space of potential manipulations is informative. 

Suppose a candidate endowed with a feature vector $\bx$ faces costs $\sum_{i=1}^d c_i x_i$ and is willing to expend a total cost of $1$ for manipulation. Then she can move to any $\by \ge \bx$ contained within the $d$-simplex with orthogonal corner at $\bx$ and remaining vertices at $\bx + \frac{1}{c_i}\be_i$ where $\be_i$ is the $i$th standard basis vector. This region is given by
\begin{equation}
\label{simplex-forward}
\Delta(\bx)= \Big\{ \bx + \sum_{i=1}^d \frac{t_i}{c_i}\be_i  \in [0,1]^d  \Big| \sum_{i=1}^d t_i \le 1\text{ ; } t_i \ge 0 \text{ }\forall i \Big\} .
\end{equation}
 $\Delta(\bx)$, depicted in Figure \ref{f-simplex}, gives the space of potential movement for a candidate with unmanipulated feature $\bx$ who is willing to expend a total cost of $1$. Notice that $t_i$ can be interpreted as the cost that a candidate expends on movement in the $i$th direction. Thus $\sum_{i=1}^d t_i$ gives the total cost of manipulation. Moving beyond the range of possible moves, in order to describe how a rational candidate will best-respond to a learner, we must consider the published classifier. 

\begin{figure}[t!]
\centering
\includegraphics[width=4.5cm]{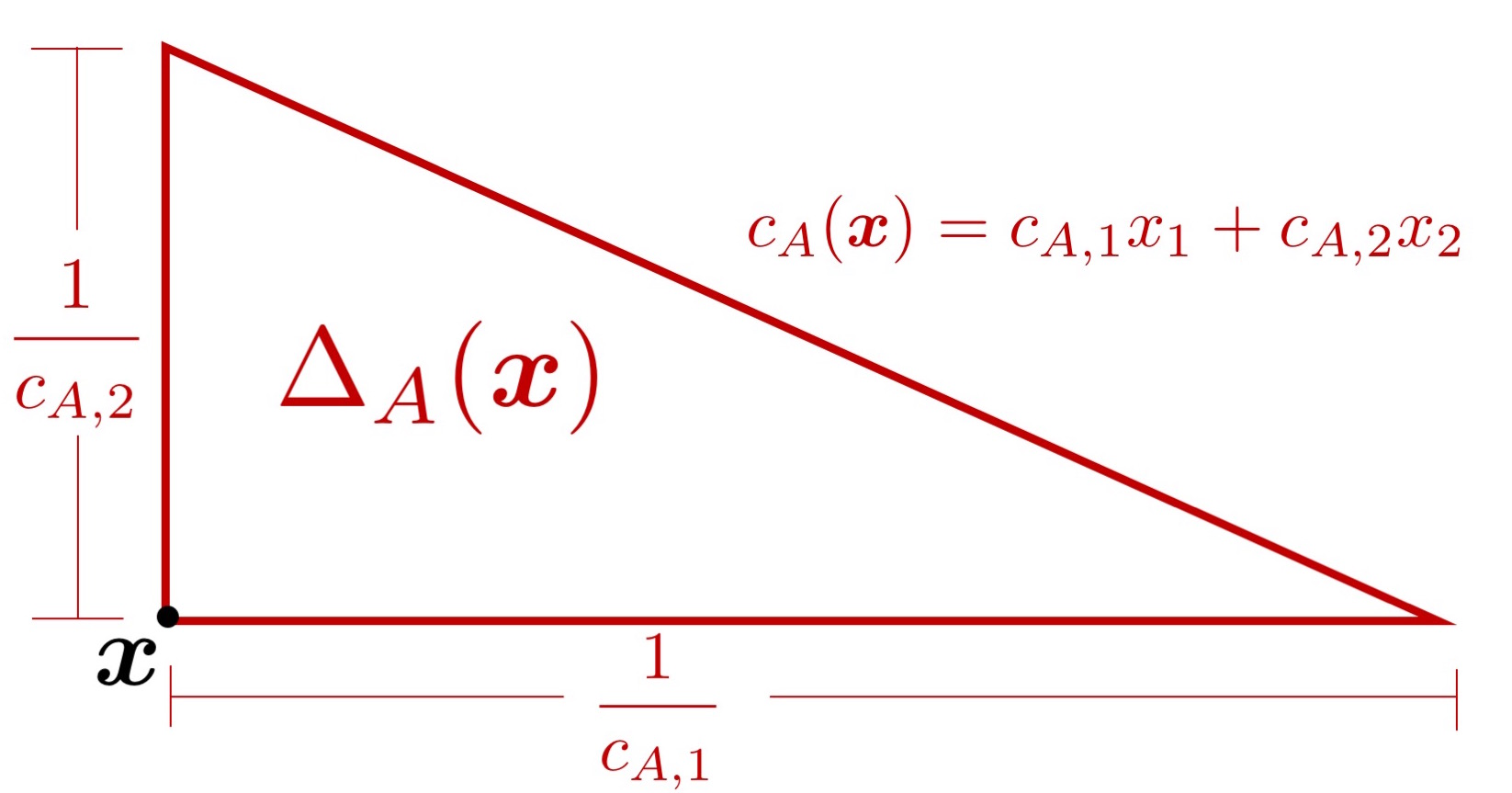}
\caption{The forward simplex. A candidate in group $A$ with unmanipulated feature vector $\bx$ can manipulate to reach any feature vector $\by \in \Delta_A(\bx)$ at a cost of at most 1.}
\label{f-simplex}
\end{figure}

Suppose a learner publishes a classifier $f$ based on a hyperplane $\sum_{i=1}^d g_i y_i = g_0$, so that $f(\by) = 1$ if and only if $\sum_{i=1}^d g_i y_i \geq g_0$. A best-response manipulation occurs along the direction that generates the greatest increase in the value $\sum_{i=1}^d g_i (y_i - x_i)$ for the least cost. As such, a candidate will move in any directions $i \in \argmax_{i \in [d]} \frac{g_i}{c_i}$. This result is formalized in the following lemma. 
 
\begin{lemma}[$d$-D Candidate Best Response]
\label{lemma:opt_candidate}
Suppose a learner publishes the classifier $f(\by) = 1$ if and only if $\sum_{i=1}^d g_i y_i \geq g_0$.  Consider a candidate with unmanipulated feature vector $\bx$ and linear costs $\sum_{i=1}^d c_i x_i$. If $f(\bx) = 1$ or if for all $i \in [d]$, $f(\bx + \frac{1}{c_i} \be_i) = 0$, the candidate's best response is to set $\by = \bx$. Otherwise, letting $K = \argmax_{i \in [d]} \frac{g_i}{c_i}$, her manipulation takes the form
\[
y = \bx + \sum_{i=1}^d \frac{t_i}{c_i} \be_i 
\]
for any $\bt$ such that $t_i \ge 0$ for all $i\in [d]$, $t_i = 0$ for all $i \notin K$, and
$\sum_{i=1}^d g_i( x_i + \frac{t_i}{c_i}) = g_0$.
\end{lemma}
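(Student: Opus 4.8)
The plan is to read the candidate's problem as a reward-minus-cost maximization in which the reward $f(\by)$ takes only the values $0$ and $1$, so the decision collapses to a comparison between not manipulating and paying the cheapest possible price to cross into the acceptance halfspace $\{\by : \sum_i g_i y_i \ge g_0\}$. First I would write the payoff as $f(\by) - \sum_{i=1}^d c_i(y_i - x_i)$ over the feasible set $\by \ge \bx$ and observe that, since every admissible move only raises coordinates and hence only raises the cost, moving never helps unless it flips the label: if $f(\by) = f(\bx)$ then the payoff is at most $f(\bx)$, attained at $\by = \bx$. This immediately settles the case $f(\bx) = 1$, where $\by = \bx$ already earns the maximal payoff $1$, so staying put is optimal.

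When $f(\bx) = 0$, I would reduce to the minimum-cost manipulation reaching the halfspace. Writing $\bz = \by - \bx \ge 0$ and $D = g_0 - \sum_{i} g_i x_i > 0$, the cheapest accepting move solves the linear program of minimizing $\sum_i c_i z_i$ subject to $\sum_i g_i z_i \ge D$ and $\bz \ge 0$. Let $C^\ast$ be its optimal value; the candidate's best attainable payoff is then $\max\{0,\,1 - C^\ast\}$, so she (weakly) prefers to manipulate exactly when $C^\ast \le 1$ and strictly prefers $\by = \bx$ when $C^\ast > 1$.

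The technical heart is solving this program and identifying its minimizers. Setting $r_i = g_i/c_i$ and $r^\ast = \max_i r_i$, attained on $K = \argmax_{i\in[d]} \frac{g_i}{c_i}$, I would establish the bound
\[
\sum_{i=1}^d c_i z_i = \sum_{i=1}^d \Big(c_i - \tfrac{g_i}{r^\ast}\Big) z_i + \tfrac{1}{r^\ast}\sum_{i=1}^d g_i z_i \ \ge\ \tfrac{1}{r^\ast}\sum_{i=1}^d g_i z_i \ \ge\ \tfrac{D}{r^\ast},
\]
where the first inequality holds because each coefficient $c_i - g_i/r^\ast$ is nonnegative: it vanishes precisely for $i \in K$ and is strictly positive otherwise, since $c_i > 0$ while $r_i \le r^\ast$ (and the coefficient is clearly positive when $g_i \le 0$). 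Hence $C^\ast = D/r^\ast$, and equality forces $z_i = 0$ for every $i \notin K$ (first inequality tight) together with $\sum_i g_i z_i = D$, i.e. the move lands exactly on the hyperplane $\sum_i g_i y_i = g_0$ (second inequality tight). Rewriting $z_i = t_i/c_i$ makes $t_i$ the cost spent in direction $i$ and recovers precisely the stated family of manipulations.

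Finally I would translate the abstract threshold back into the elementary condition in the statement. Spending a full unit of budget along a single axis $i$ reaches $\bx + \frac{1}{c_i}\be_i$, whose classifier score is $\sum_j g_j x_j + r_i$; thus $f(\bx + \frac{1}{c_i}\be_i) = 0$ for all $i$ is equivalent to $r_i < D$ for all $i$, i.e. $r^\ast < D$, i.e. $C^\ast > 1$. So the two bullet hypotheses for the response $\by = \bx$ are exactly ``already accepted'' or ``even the cheapest single-axis unit move fails,'' and their negation is the regime $C^\ast \le 1$ in which the boundary-touching, $K$-confined moves above are optimal. I expect the main obstacle to be the equality analysis of the linear program rather than its value: one must argue that the minimizers are \emph{exactly} the moves supported on $K$ that just touch the boundary, which is where care with the sign of $c_i - g_i/r^\ast$ across all coordinates (including those with $g_i \le 0$) is essential.
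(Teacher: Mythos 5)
Your proposal is correct, and it reaches the lemma by a genuinely different route than the paper. The paper argues by perturbation and contradiction: given a hypothetical best response $\hat{\by}$ not of the stated form, it either reallocates mass from a non-$K$ coordinate to a $K$ coordinate at equal cost (strictly raising the score, then invoking continuity and monotonicity to find a strictly cheaper accepting point), or, if $\hat{\by}$ is supported on $K$ but overshoots the hyperplane, retracts it along a $K$ direction to save cost $\Delta$. Your proof instead packages the whole question as a linear program, $\min \sum_i c_i z_i$ subject to $\sum_i g_i z_i \ge D$, $\bz \ge 0$, and establishes both its value $D/r^\ast$ and the exact set of minimizers in one stroke via the decomposition $\sum_i c_i z_i = \sum_i (c_i - g_i/r^\ast)z_i + \frac{1}{r^\ast}\sum_i g_i z_i$ --- in effect exhibiting a dual certificate and reading off the minimizers from complementary slackness. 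What your route buys: it is more self-contained (no continuity or ``scale back down'' step, which is the least rigorous part of the paper's argument), the equality analysis characterizing \emph{all} optimal manipulations is cleaner, and you explicitly dispose of coordinates with $g_i \le 0$, which the paper implicitly assumes away when it divides by $g_k$. What the paper's route buys: the exchange argument directly dramatizes the ``bang for your buck'' intuition and matches the simplex geometry reused in the proof of Theorem~\ref{theorem:d_opt_learner}. Two shared caveats, neither fatal: (i) both proofs ignore the upper-bound constraint $\by \in [0,1]^d$ from the model's feature space; (ii) your identity $C^\ast = D/r^\ast$ presumes $r^\ast > 0$ --- if every $g_i \le 0$ the program is infeasible, so one should read $C^\ast = +\infty$ there, under which your equivalence chain and the stay-put conclusion still hold, and in the ``otherwise'' branch the hypothesis $f(\bx + \frac{1}{c_i}\be_i) = 1$ for some $i$ already forces $r^\ast \ge D > 0$.
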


\ignore{
\begin{lemma}[$d$-D Candidate Best Response]
\label{lemma:opt_candidate}
Suppose a learner publishes a classifier $f$ based on the hyperplane $\sum_{i=1}^d g_i x_i = g_0$, then a candidate with feature $\bx$ and linear costs $\sum_{i=1}^d c_i x_i$ has a best-response manipulation strategy given by the function $M(\bx)$ where if either $f(\bx) = 1$ or if for all $i \in [d]$, $f(\bx + \frac{1}{c_i} \be_i) = 0$, then $M(\bx) = \bx$. Else, her manipulation takes the form
\begin{align}
\begin{split}
M&(\bx) = \bx + \sum_{i=1} \frac{t_i}{c_i} \be_i \\
&\text { where } t_i \ge 0 \text{ for all } i\in [d], \text{ and } \forall i \notin K, t_i = 0\\
&\text{ and } \sum_{i=1}^d g_i( x_i + \frac{t_i}{c_i}) = g_0
\end{split}
\end{align}
where $K = \argmax_{i \in [d]} \frac{g_i}{c_i}$.
\end{lemma}
} 

While in the $d$-dimensional case, a candidate has many more choices of manipulation directions to pursue, a best response strategy will always lead her to increase her feature in those components that are most valued by the learner and least costly for manipulation. That is, she behaves according to a ``bang for your buck" principle, in which the optimal manipulations are in the direction or directions where the ratio $\frac{g_i}{c_i}$ is highest. 

Despite the fact that the optimal manipulation may not be unique, as in the cases where there are multiple equivalently good directions for a candidate to move in, a learner who knows candidates' costs can still anticipate best-response manipulations and avoid errors on that group. As such, we are once again able to construct a perfect classifier for candidates of group $A$ and a perfect classifier for candidates of group $B$. 

\begin{figure}
\centering 
\includegraphics[width=7cm]{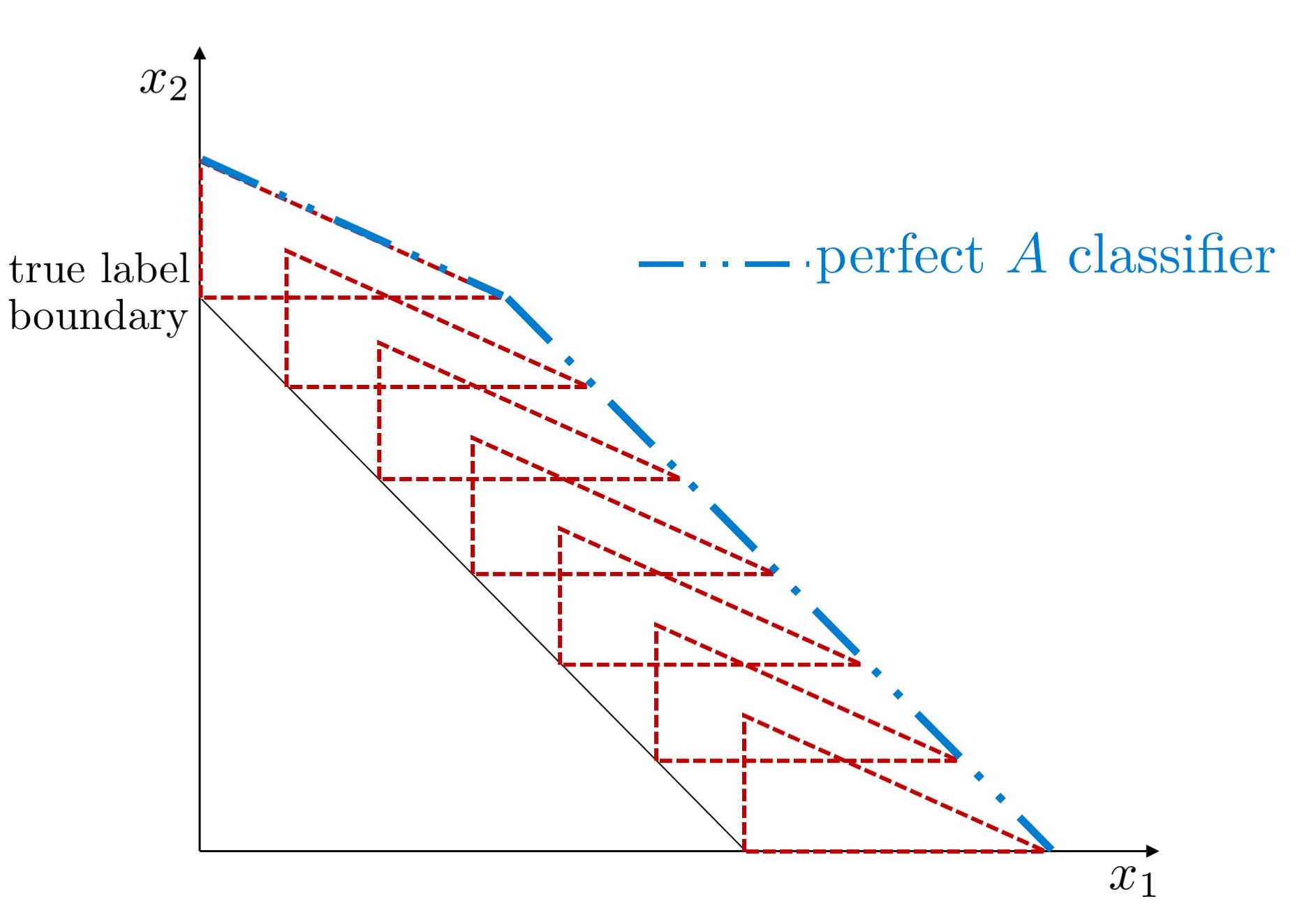}
\caption{A perfect classifier for group $A$. Every candidate with unmanipulated feature vector $\bx$ on or above the true decision boundary for group $A$ is able to manipulate to a point $\by \in \Delta_A(\bx)$ on or above the blue decision boundary depicted here. No candidate with an unmanipulated feature vector below the true decision boundary is able to do so.  The kink in the blue decision boundary arises due to the restriction of features to $[0,1]^d$. A perfect classifier for group $A$ does not need to have this kink; for example, a more lenient perfect classifier can be formed by ``straightening'' it out. }
\label{fig:schmear}
\end{figure}

\begin{theorem}[$d$-D Space of Dominant Learner Strategies]
\label{theorem:d_opt_learner}
In the general $d$-dimensional Strategic Classification Game with linear costs, there exists a classifier that perfectly classifies group $A$ and a classifier that perfectly classifies group $B$. All undominated classifiers commit no false negative errors on group $A$ and no false positive errors on group $B$. 
\end{theorem}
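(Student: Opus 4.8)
The plan is to prove the two assertions separately: first exhibit a classifier that perfectly classifies each group, and then show that any classifier violating either error bound is dominated.

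\emph{Existence of perfect classifiers.} I would build the group-$A$ perfect classifier by ``eroding'' its true accept set along the manipulation budget. Concretely, declare a presented feature $\by$ accepted exactly when every possible unmanipulated origin is qualified,
\[ R_A^\ast = \Big\{ \by : h_A(\bx)=1 \text{ for all } \bx \le \by \text{ with } c_A(\by)-c_A(\bx)\le 1 \Big\}, \]
and symmetrically define $R_B^\ast$. That this has no false positive on $A$ is immediate: an unqualified $A$-candidate presenting $\by \in R_A^\ast$ would be her own disqualifying origin. Absence of false negatives requires checking that every qualified candidate can reach $R_A^\ast$; using Lemma~\ref{lemma:opt_candidate}, the marginal candidate on $\sum_i w_{A,i}x_i=\tau_A$ who moves cost $1$ in a direction of $\argmax_i w_{A,i}/c_{A,i}$ lands on the boundary of $R_A^\ast$, and a one-line linear-programming bound (maximize $\sum_i w_{A,i}\delta_i$ subject to $\sum_i c_{A,i}\delta_i \le 1$) shows every origin of that landing point is still qualified. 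Away from the faces of the cube this $R_A^\ast$ is the half-space $\sum_i w_{A,i}y_i \ge \tau_A + \max_i w_{A,i}/c_{A,i}$; the only subtlety is the ``kink'' of Figure~\ref{fig:schmear}, where a candidate exhausts a coordinate at the box before spending her budget and must continue in the next-best direction. I would let the erosion definition absorb the box constraint automatically, so the construction needs no case analysis.

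\emph{No false negative on $A$.} Here the key move is enlargement. Given a classifier $f$ with accept region $R$, compare it to $f'$ with region $R \cup R_A^\ast$. Because enlarging the accept region can only lower the cost of any manipulation, every candidate accepted under $f$ is still accepted under $f'$; hence $f'$ has weakly fewer false negatives, and I only need to bound new false positives. A new acceptance must occur at a point of $R_A^\ast \setminus R$, and the safety of $R_A^\ast$ rules these out for \emph{both} groups: no unqualified-$A$ candidate reaches $R_A^\ast$ (that is exactly its no-false-positive property), and by the cost condition~(\ref{cost-condition}) together with $h_A(\bx)=1 \Rightarrow h_B(\bx)=1$ (so every unqualified-$B$ candidate is also unqualified-$A$ but faces \emph{higher} costs), no unqualified-$B$ candidate reaches $R_A^\ast$ either. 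Thus $R \cup R_A^\ast$ introduces no false positive while eliminating all false negatives on $A$, so any $f$ committing a false negative on $A$ is strictly dominated.

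\emph{No false positive on $B$, and the main obstacle.} The complementary statement should be the dual: to remove a false positive on $B$ one shrinks the accept region toward $R_B^\ast$, the maximal region that no unqualified-$B$ candidate can reach, so that any false-positive-on-$B$-free classifier necessarily has $R \subseteq R_B^\ast$. The clean analogue would be to pass from $f$ to $R \cap R_B^\ast$, which by construction rejects every unqualified-$B$ candidate. The difficulty --- and the step I expect to be the crux --- is that shrinking changes candidates' best responses and can create \emph{new} false negatives: a ``middle'' candidate who is qualified for $B$ but not $A$ may, under $f$, be accepted only at points of $R \setminus R_B^\ast$ and, because meeting two constraints at once is expensive (the join $\by \vee \bz$ of her $R$-reaching and $R_B^\ast$-reaching points can cost nearly twice her budget), be unable to reach $R \cap R_B^\ast$. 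So the naive intersection need not dominate. I would resolve this by first reducing to upward-closed accept regions (any $f$ is weakly dominated by its upward closure) and then building the dominating classifier directly from $R_B^\ast$ rather than from $R$, arguing by a reachability/exchange argument --- again powered by the cost condition and the erosion structure of $R_B^\ast$ --- that the correct acceptances of qualified-$B$ candidates are preserved while all false positives on $B$ vanish. Making this exchange argument rigorous is the main technical obstacle; the false-negative-on-$A$ side, by contrast, falls out cleanly from the enlargement argument above.
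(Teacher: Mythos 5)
Your first two steps are, modulo packaging, the paper's own proof. Your eroded region $R_A^\ast$ coincides (away from the faces of the cube) with the paper's Claim~1 construction, the shifted hyperplane $\sum_i w_{A,i}y_i \ge \tau_A + \max_i w_{A,i}/c_{A,i}$, and your enlargement step $R \cup R_A^\ast$ is exactly the paper's Claim~3: the paper passes from $f$ to $\hat f = f \vee f_1^A$, shows the false positives on group $A$ are unchanged, shows no false positives are added on group $B$ (the paper isolates your ``safety for both groups'' observation as a separate Claim~2, namely $f_1^A(\by)=1 \Rightarrow f_1^B(\by)=1$, which is your cost-condition argument), and concludes all false negatives on $A$ are removed. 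One caveat: your claim that erosion ``absorbs the box constraint automatically'' overreaches. Erosion buys no-false-positives by fiat, but the no-false-negative half is exactly where the box bites: with $\bw=(1,1)$, $c=(2,4)$, $\tau=0.9$, the qualified candidate $(1,0)$ can reach only $\{1\}\times[0,0.25]$, and every such point is also reachable by unqualified candidates of the form $(0.55,y_2)$ at cost $0.9$, so neither your eroded set nor any other classifier is perfect there. This kink defect is shared by the paper's Claim~1 (the paper only gestures at it in the caption of Figure~\ref{fig:schmear}), so it does not distinguish your proof from theirs, but it is not a subtlety your definition dissolves.

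The substantive issue is the half you did not finish, and there your diagnosis is correct --- and sharper than the paper's. The paper disposes of ``no false positives on $B$'' with one sentence (``the second half of the claim can be proved through an analogous argument''), i.e.\ by intersecting with the perfect-$B$ classifier, and you are right that this is \emph{not} analogous: shrinking the accept region changes best responses and can strand qualified group-$B$ candidates. Concretely, take $\bw_B=(1,0.1)$, $c_B(\bx)=2x_1+2x_2$, $\tau_B=0.3$, so $R_B^\ast=\{\by: y_1+0.1\,y_2 \ge 0.8\}$, and let $f$ accept iff $y_2\ge 0.8$. The qualified candidate $\bx=(0.27,0.35)$ reaches $R$ at cost $0.9$ and reaches $R_B^\ast$ at cost $0.99$, but cannot reach $R\cap R_B^\ast$: securing $y_2\ge 0.8$ consumes at least $0.9$ of her budget while adding only $0.045$ to $y_1+0.1\,y_2$, leaving her at $0.4<0.8$. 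Since $f$ does commit false positives on $B$ (the unqualified candidate $(0,0.35)$ also reaches $R$ at cost $0.9$), the theorem requires $f$ to be dominated, yet $f\wedge f_1^B$ creates a new false negative and hence does not dominate it. So the dominating classifier must be adapted to $R$ rather than obtained by intersection with the fixed region $R_B^\ast$ --- here, trimming $R$ to $\{\by: y_2\ge 0.8,\ y_1\ge 0.26\}$ removes that false positive while preserving every qualified group-$B$ acceptance --- and turning this into a general argument is precisely your unproven ``reachability/exchange'' step. In short: your completed portions reproduce the paper's proof; your incomplete portion is a genuine gap, but it sits exactly where the paper's own proof is also not carried out, and your counter-reasoning against the claimed symmetry is a real finding rather than a misunderstanding.
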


A full exposition of the proof appears in the appendix, but here we present an abbreviated explanation of the result. 

For each group $m$, the learner computes an optimal boundary that perfectly classifies all of its members by considering the set of simplices $\{\Delta_m(\bx)\}$ anchored at the vectors $\bar{\bx}$ that satisfy $\bw_m^\intercal \bar{\bx} = \tau_m$ and drawing the strictest hyperplane that intersects each simplex. That is for all hyperplanes $g_i: \sum_{j=1}^d g_{i,j} x_j = g_{i, 0}$ that are constructed to intersect each simplex, then $g_1: \sum_{j=1}^d g_{1,j} x_j = g_{1, 0}$ is the strictest if for all $\bx \in [0,1]^d$, 
\[\sum_{j=1}^d g_{1,j} x_j = g_{1, 0} \implies \sum_{j=1}^d g_{i,j} x_j = g_{i, 0}\ge g_{j,0}\]
for all $g_i$. Due to the cost ordering, for any $\bx \in [0,1]^d$, $\Delta_B(\bx) \subseteq \Delta_A(\bx)$, and thus wherever a comparison is possible, the group $A$ boundary is at least as strict as the group $B$ boundary. Figure \ref{fig:schmear} gives a visualization of a boundary formed by connecting the simplices $\Delta(\bar{\bx})$; the corresponding classifier perfectly classifies the group.

As in the one-dimensional general costs case, learner strategies necessarily entail inequality-reinforcing classifiers: a rational learner equipped with any error-penalizing cost function will select an equilibrium strategy that trades off undue optimism with respect to group $A$ for undue pessimism with respect to group $B$. We note that except in the extreme case in which there exists a perfect classifier for all candidates in the population, this result implies that the classifier for group $A$ issues false negatives on group $B$, and the classifier for group $B$ issues false positives on group $A$. In order to formalize this result, we would like to generalize the idea behind the minimum correspondence unmanipulated features given by $\ell_A(\cdot)$ and $\ell_B(\cdot)$ in (\ref{correspondence}) for general $d$-dimensions and linear costs. 

A learner who observes a possibly manipulated feature vector $\by$ must consider the space of unmanipulated feature vectors that the candidate could have had. Thus we can make use of the simplex idea of potential manipulation; however in this case, the learner seeks to project a simplex ``backward" to ``undo'' the potential candidate manipulation. Since groups are subject to different costs, simplices $\Delta_A^{-1}(\by)$ and $\Delta_B^{-1}(\by)$---a depiction is given in Figure \ref{b-simplex}---which represent the region from where a candidate could have manipulated, will differ based on the candidate's group membership, with 
\begin{align}
\label{simplex-backward1}
\Delta_A^{-1}(\by)= \Big\{\by - \sum_{i=1}^d \frac{t_i}{c_{A,i}}\be_i \in [0,1]^d  \Big| \sum_{i=1}^d t_i \le 1\text{ ; } t_i \ge 0 \text{ }\forall i \Big\}  ,
\\
\label{simplex-backward2}
\Delta_B^{-1}(\by)= \Big\{ \by - \sum_{i=1}^d\frac{t_i}{c_{B,i}} \be_i \in [0,1]^d  \Big| \sum_{i=1}^d t_i \le 1\text{ ; } t_i \ge 0 \text{ }\forall i \Big\} .
\end{align}
We can now use these constructs in order to define $d$-dimensional generalizations of $\ell_A(\by)$ and $\ell_B(\by)$. 


\begin{definition}[Correspondence with Unmanipulated Features in $d$-D]
For any observed candidate feature $\by \in [0,1]^d$, the minimum corresponding unmanipulated feature vectors are given by 
\begin{align}
\label{correspondence-d}
\ell_A(\by) = \big\{ \bx \in \Delta_A^{-1}(\by) \cap [0,1]^d \big| \nexists  \hat{\bx} \in \Delta_A^{-1}(\by) \text{ such that }  \hat{\bx} <  \bx \big\} ,
\\
\ell_B(\by) = \big\{ \bx \in \Delta_B^{-1}(\by) \cap [0,1]^d \big| \nexists  \hat{\bx} \in \Delta_B^{-1}(\by) \text{ such that } \hat{\bx} <  \bx \big\} 
\end{align}
for a candidate belonging to group $A$ and group $B$ respectively.
\end{definition}
The corresponding values $\ell_A(\by)$ and $\ell_B(\by)$ are defined such that a candidate who presents feature $\by$ must have had a true unmanipulated feature vector $\bx \ge \bar{\bx}$ for some $\bar{\bx} \in \ell_A(\by)$ if she is a group $A$ member and $\bx \ge \bar{\bx}$ for some $\bar{\bx} \in \ell_B(\by)$ if she is a group $B$ member. 
\begin{figure}
\centering
\includegraphics[width=4.5cm]{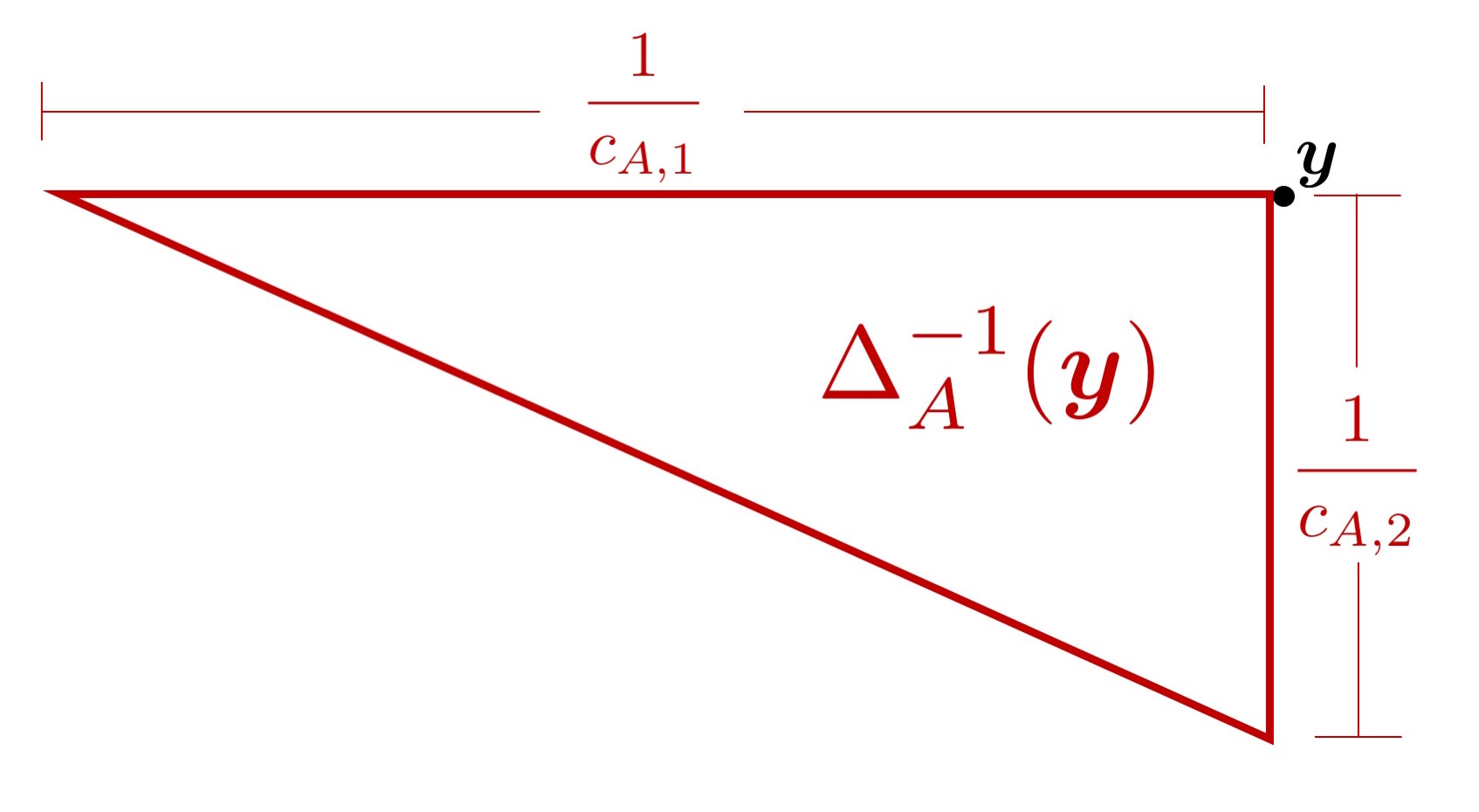}
\caption{The backward simplex. A candidate in group $A$ with manipulated feature vector $\by$ could have started with any feature vector $\bx \in \Delta_A^{-1}(\by)$ and paid a cost of at most 1.}
\label{b-simplex}
\end{figure}

For any hyperplane decision boundary $g$ containing vectors $\by$, the minimum corresponding feature vectors given by $\ell_A(\by)$ and $\ell_B(\by)$ are helpful for determining the effective thresholds that $g$ generates on unmanipulated features for groups $A$ and $B$. 

\begin{lemma}
\label{linear-d}
Suppose a learner classifier $f$ is based on a hyperplane $g: \sum_{i=1}^d g_i x_i = g_0$. Construct the set 
\begin{equation}
\mathcal{L}_m(g) = \left\{ \argmin_{\bx \in \ell_m(\by)} \sum_{i=1}^d g_i x_i \Big|  \forall \by \text{ s. t. } \sum_{i=1}^d g_i y_i = g_0 \right\}
\end{equation}
Then a group $m$ agent with feature $\bx$ can move to some $\by$ with $f(\by) = 1$ and $c_m(\by) - c_m(\bx) \leq 1$ if and only if $\bx \ge \ell $ for some $\ell \in \mathcal{L}_m(g)$.
\end{lemma}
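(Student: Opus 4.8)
The plan is to identify the left-hand condition with membership in a ``success region'' and then show this region is exactly the upward closure of the minimal source points collected in $\mathcal{L}_m(g)$. Write $S_m = \{\bx \in [0,1]^d : \exists\,\by \ge \bx,\ \sum_i g_i y_i \ge g_0,\ c_m(\by) - c_m(\bx) \le 1\}$ for the set of feature vectors from which a group-$m$ agent can secure a positive label within budget $1$; the lemma asserts $S_m = \{\bx : \bx \ge \ell \text{ for some } \ell \in \mathcal{L}_m(g)\}$. The first step is to rewrite the backward simplex algebraically: substituting $t_i = c_{m,i}(y_i - x_i)$ in (\ref{simplex-backward1})--(\ref{simplex-backward2}) gives $\Delta_m^{-1}(\by) = \{\bx \in [0,1]^d : \bx \le \by,\ c_m(\by) - c_m(\bx) \le 1\}$, so $\ell_m(\by)$ is precisely the set of componentwise-minimal vectors from which $\by$ is reachable within budget $1$. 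I treat $\mathcal{L}_m(g)$ as the collection of these minimal correspondences gathered over all boundary targets $\by$ with $\sum_i g_i y_i = g_0$; since the claim concerns the upward closure $\{\bx : \bx \ge \ell\}$, it is this collection, rather than any particular representative of it, that matters.

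Second, I would show that $S_m$ is upward closed (here using $g_i \ge 0$, which holds for the undominated classifiers of interest since higher features only help): if $\bx \in S_m$ with witness $\by$ and $\bx' \ge \bx$, then $\by' := (\max(y_i, x_i'))_i$ satisfies $\by' \ge \bx'$, $\sum_i g_i y_i' \ge \sum_i g_i y_i \ge g_0$, and $y_i' - x_i' = \max(y_i - x_i', 0) \le y_i - x_i$, so the budget is not exceeded and $\bx' \in S_m$. Consequently $S_m$ equals the upward closure of its minimal elements, and it suffices to match those minimal elements with $\mathcal{L}_m(g)$.

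For $(\Leftarrow)$, take $\bx \ge \ell$ with $\ell \in \ell_m(\by)$ for some boundary $\by$. By the reformulation $\ell \le \by$ and $c_m(\by) - c_m(\ell) \le 1$, and $\by$ lies on the boundary so $f(\by) = 1$; thus $\ell \in S_m$, and upward closure gives $\bx \in S_m$. For $(\Rightarrow)$, take $\bx \in S_m$ with witness $\by_0$. If $\sum_i g_i x_i \le g_0$, I slide $\by_0$ toward $\bx$ along $(1-\lambda)\bx + \lambda\by_0$ until the value equals $g_0$, obtaining $\by$ with $\bx \le \by \le \by_0$ on the boundary and $c_m(\by) - c_m(\bx) \le c_m(\by_0) - c_m(\bx) \le 1$, so $\bx \in \Delta_m^{-1}(\by)$; if instead $\sum_i g_i x_i > g_0$, I scale $\bx$ down to a boundary point $\by = \lambda^\ast\bx \le \bx$ (nondegenerate case, where the boundary meets the box). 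In either case $\bx$ lies in, or dominates a point of, a backward simplex of a boundary target, and a compactness argument (minimizing $\sum_i z_i$ over $\{\bz \in \Delta_m^{-1}(\by) : \bz \le \bx\}$) produces a minimal $\ell \in \ell_m(\by) \subseteq \mathcal{L}_m(g)$ with $\ell \le \bx$.

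The main obstacle is the bookkeeping in $(\Rightarrow)$: one must ensure $\mathcal{L}_m(g)$ is ``wide enough'' to contain a source point below every $\bx \in S_m$. The delicate point is that for a fixed target $\by$ the correspondence $\ell_m(\by)$ is typically a whole face—whenever several coordinates share the maximal bang-for-buck ratio $g_i/c_{m,i}$, or when a box constraint $x_i = 0$ binds—so naming a single value-minimizing source per target can omit genuine minimal sources. The argument therefore depends on collecting the full set of minimal correspondences and on the fact (Lemma \ref{lemma:opt_candidate}) that optimal manipulation is confined to the $\argmax_i g_i/c_{m,i}$ directions, which is what makes the forward and backward simplices genuine inverses along the relevant boundary. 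Handling the truncation to $[0,1]^d$ cleanly, where greedy manipulation must spread across several coordinates, is the step that needs the most care.
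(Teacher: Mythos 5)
Your backward direction and the upward-closure observation are sound (indeed cleaner than the paper's own write-up, which argues through the scalar quantity $\bm{g}^\intercal \bx$ throughout), and your reformulation of $\Delta_m^{-1}(\by)$ is correct for linear costs. The genuine gap is in the forward direction, at exactly the point you flagged: your compactness argument delivers $\ell \in \ell_m(\by)$ with $\ell \le \bx$, but the asserted inclusion $\ell_m(\by) \subseteq \mathcal{L}_m(g)$ is false under the lemma's actual definition, since $\mathcal{L}_m(g)$ retains only the value-minimizing element(s) of each $\ell_m(\by)$ (generically a single vertex of the full-budget face, not the whole face). Your remedy---reinterpreting $\mathcal{L}_m(g)$ as the union of the full sets $\ell_m(\by)$---proves a different statement, and the reinterpretation is not harmless: the paper's remark immediately after the lemma and the proof of Proposition~\ref{cost-d-dim} rely precisely on the fact that every element of $\mathcal{L}_m(g)$ has the common value $\bm{g}^\intercal \ell = g_0 - \frac{g_{k_m}}{c_{m,k_m}}$, which fails for your enlarged collection.

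Moreover, your diagnosis that the per-target argmin collection ``can omit genuine minimal sources'' is mistaken: the quantification over \emph{all} boundary targets $\by$ compensates for keeping only one source per target, so the lemma is correct as stated. The bridging step you are missing is short. Given $\bar{\bx} \in \ell_m(\by)$ for a boundary $\by$, move from $\bar{\bx}$ purely in a direction $k_m \in \argmax_i g_i/c_{m,i}$ until hitting the hyperplane; since $\bar{\bx}$ reaches value $g_0$ with budget $1$ and no direction has a better value-per-cost ratio than $k_m$, this move costs some $t \le 1$ and lands at a boundary point $\by' = \bar{\bx} + \frac{t}{c_{m,k_m}}\be_{k_m}$. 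Then $\ell' := \by' - \frac{1}{c_{m,k_m}}\be_{k_m} = \bar{\bx} - \frac{1-t}{c_{m,k_m}}\be_{k_m} \le \bar{\bx}$ is (modulo the $[0,1]^d$ truncation you rightly worry about) the value-argmin of $\ell_m(\by')$, hence an element of $\mathcal{L}_m(g)$ lying below $\bar{\bx} \le \bx$. This is exactly the pivot the paper's proof makes: it reduces both directions to the scalar threshold $\bm{g}^\intercal \bx \ge g_0 - \frac{g_{k_m}}{c_{m,k_m}}$, which is what renders one representative per target sufficient. With that construction inserted, your argument closes and stays faithful to the stated definition of $\mathcal{L}_m(g)$.
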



By definition, for any two $\ell_1, \ell_2 \in \mathcal{L}_m(g)$, \[\sum_{i=1}g_i \ell_{1,i} =\sum_{i=1}g_i \ell_{2,i} = g_0 -\frac{g_{k_m}}{c_{m,k_m}},\] where $ k_m \in \argmax_{i=[d]} \frac{g_{i}}{c_{m,i}}$. Thus a learner who cares only about the true label of presented features, will construct her decision boundary $g$ such that all $\ell \in \mathcal{L}_m(g)$ have the same true label. 

A cost-minimizing learner who publishes a classifier $f$ based on a hyperplane $g$ on manipulated features will commit errors on those candidates with unmanipulated features $\bx \in [0,1]^d$ contained within the boundaries given by $\mathcal{L}_A(g)$ and $\mathcal{L}_B(g)$. This space can be understood as the $d$-dimensional generalization of the $[\ell_A(\sigma), \ell_B(\sigma)]$ error interval in one-dimension. 

\begin{proposition}[Learner's Cost in $d$ Dimensions]
\label{cost-d-dim}
A learner who publishes an undominated classifier $f$ based on a hyperplane $\bm{g}^\intercal \bx = g_0$ can only commit false positives on group $A$ candidates and false negatives on group $B$ candidates. The cost of such a classifier is
\begin{equation*}
\begin{split}
&C_{FN} P_{x \sim \mathcal{D}_B}\Big[ \bx \in \big(\bm{g}^\intercal \bx < g_0 - \frac{g_{k_B}}{c_{k_B}} \bigcap \bw_B^\intercal \bx \ge \tau_{B}\ \big) \Big] \\
&+ C_{FP} P_{x \sim \mathcal{D}_A}\Big[\bx \in \big(\bw_A^\intercal \bx < \tau_{A} \bigcap\bm{g}^\intercal \bx \ge g_0  - \frac{g_{k_A}}{c_{k_A}} \big) \Big] ,
\end{split}
\end{equation*}
where $k_B \in \argmax_{i\in[d]} \frac{g_i}{c_{B,i}}$ and $k_A \in \argmax_{i\in[d]}\frac{g_i}{c_{A,i}}$.
\end{proposition}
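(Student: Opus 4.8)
The plan is to reduce the $d$-dimensional cost to a sum of two error probabilities. First, for each group I would pin down the exact set of \emph{unmanipulated} feature vectors that the published classifier ultimately labels positive, and then intersect each such set with the corresponding true-label region. Since Theorem~\ref{theorem:d_opt_learner} already guarantees that an undominated classifier commits no false negatives on group $A$ and no false positives on group $B$, only two error types can survive: false positives on $A$ and false negatives on $B$. The cost is then $C_{FP}$ times the $\mathcal{D}_A$-mass of the group-$A$ false-positive region plus $C_{FN}$ times the $\mathcal{D}_B$-mass of the group-$B$ false-negative region, so the whole task is to describe these two regions in closed form.

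Next I would compute the effective decision threshold each group faces on its unmanipulated features. Fix a group $m$ and set $k_m \in \argmax_{i\in[d]} \frac{g_i}{c_{m,i}}$. By Lemma~\ref{lemma:opt_candidate}, a candidate with unmanipulated $\bx$ and $f(\bx)=0$ best-responds by spending her unit budget along direction $k_m$, which raises her score $\bm{g}^\intercal\bx$ by exactly $\frac{g_{k_m}}{c_{m,k_m}}$---the largest score gain obtainable per unit of cost. Hence she can reach some $\by$ with $f(\by)=1$ at cost at most $1$ precisely when $\bm{g}^\intercal\bx \ge g_0 - \frac{g_{k_m}}{c_{m,k_m}}$. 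Invoking Lemma~\ref{linear-d}, the positively classified region is the upward closure of $\mathcal{L}_m(g)$, and by the remark following that lemma every $\ell \in \mathcal{L}_m(g)$ satisfies $\bm{g}^\intercal\ell = g_0 - \frac{g_{k_m}}{c_{m,k_m}}$; so the effective boundary on unmanipulated features is the hyperplane $\bm{g}^\intercal\bx = g_0 - \frac{g_{k_m}}{c_{m,k_m}}$.

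With both effective thresholds in hand I would assemble the error regions. For group $A$ a false positive is a truly unqualified candidate who is nonetheless admitted: $\bw_A^\intercal\bx < \tau_A$ together with $\bm{g}^\intercal\bx \ge g_0 - \frac{g_{k_A}}{c_{k_A}}$; since there are no false negatives on $A$, group $A$ contributes exactly $C_{FP}\,P_{\bx\sim\mathcal{D}_A}[\bw_A^\intercal\bx < \tau_A \cap \bm{g}^\intercal\bx \ge g_0 - \frac{g_{k_A}}{c_{k_A}}]$. Symmetrically, a group-$B$ false negative is a truly qualified candidate who is rejected: $\bw_B^\intercal\bx \ge \tau_B$ together with $\bm{g}^\intercal\bx < g_0 - \frac{g_{k_B}}{c_{k_B}}$, with no false positives on $B$, giving $C_{FN}\,P_{\bx\sim\mathcal{D}_B}[\bm{g}^\intercal\bx < g_0 - \frac{g_{k_B}}{c_{k_B}} \cap \bw_B^\intercal\bx \ge \tau_B]$. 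Adding the two contributions yields the claimed cost. The one-sidedness of the errors is consistent with the cost ordering $c_{A,i}\le c_{B,i}$, which gives $\frac{g_{k_A}}{c_{A,k_A}} \ge \frac{g_{k_B}}{c_{B,k_B}}$ and hence the weakly more lenient group-$A$ threshold $g_0 - \frac{g_{k_A}}{c_{A,k_A}} \le g_0 - \frac{g_{k_B}}{c_{B,k_B}}$---precisely the relation that drives errors toward over-admitting $A$ and over-rejecting $B$.

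The main obstacle is the claim that the positively classified region is \emph{exactly} the half-space $\bm{g}^\intercal\bx \ge g_0 - \frac{g_{k_m}}{c_{m,k_m}}$. The per-unit-cost score-gain computation assumes the candidate can move the full distance $\frac{1}{c_{m,k_m}}$ along $\be_{k_m}$ while remaining in $[0,1]^d$; near the faces of the cube this cheapest direction can be blocked (the ``kink'' of Figure~\ref{fig:schmear}), forcing movement into costlier directions and locally raising the effective threshold. The careful argument should therefore work with the upward closure of $\mathcal{L}_m(g)$ supplied by Lemma~\ref{linear-d} rather than with a single hyperplane, and then show that the boxed-boundary effect does not change the $\mathcal{D}_m$-mass attributed to the error regions (or restrict attention to the interior regime in which the cheapest direction is unobstructed). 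Reconciling the clean hyperplane in the statement with these boundary kinks is the crux of the formal proof.
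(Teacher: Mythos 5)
Your proposal follows essentially the same route as the paper's proof: Theorem~\ref{theorem:d_opt_learner} supplies the one-sidedness of the errors, Lemma~\ref{linear-d} together with the fact that every $\ell \in \mathcal{L}_m(g)$ satisfies $\bm{g}^\intercal \ell = g_0 - \frac{g_{k_m}}{c_{m,k_m}}$ supplies the effective thresholds on unmanipulated features, and the two error regions are then intersected with the true-label regions exactly as you describe. The $[0,1]^d$-boundary (``kink'') caveat you flag is genuine, but the paper's own proof silently passes over it in just the same way, so on that point your write-up is if anything the more careful of the two.
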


\section{Learner Subsidy Strategies}
\label{sec:subs}
Since in our setting, the learner's classification errors are directly tied to unequal group costs, we ask whether she would be willing to subsidize group $B$ candidates in order to shrink the manipulation gap between the two groups and as a result, reduce the number of errors she commits. In this section, we formalize subsidies as interventions that a learner can undertake to improve her classification performance. Although in many high-stakes classification settings, the barriers that make manipulation differentially accessible are non-monetary---such as time, information, and social access---in this section, we consider subsidies that are monetary in nature to alleviate the financial burdens of manipulation.

We introduce these subsidies for the purpose of analyzing their effects on not only the learner's classification performance but also candidate groups' outcomes. Since subsidies mitigate the inherent disparities in groups' costs and increase access to manipulation, one might expect that their implementation would surely improve group $B$'s overall welfare. In this section, we show that in some cases, optimal subsidy interventions can surprisingly have the effect of lowering the welfare of candidates from \emph{both} groups without improving the welfare of even a single candidate.

\subsection{Subsidy Formalization}
There are different ways in which a learner might choose to subsidize candidates costs. In the main text of this paper, we focus on subsidies that reduce each group $B$ candidate's costs such that the agent need only pay a $\beta$ fraction of her original manipulation cost. 

\begin{definition}[Proportional subsidy]
Under a proportional subsidy plan, the learner pays a proportion $1-\beta$ of each group B candidate's cost of manipulation for some $\beta \in [0, 1]$. As such, a group B candidate who manipulates from an initial feature vector $\bx$ to a final feature vector $\by$ bears a cost of $\beta\big(c_B(\by) - c_B(\bx)\big)$.
\end{definition}

In the appendix, we also introduce flat subsidies in which the learner absorbs up to a flat $\alpha$ amount from each group $B$ candidate's costs, leaving the candidate to pay  $\max\{0, c_B(\by) - c_B(\bx) - \alpha\}$. Similar results to those shown in this section hold for flat subsidies. 

When considering proportional subsidies, the learner's strategy now consists of both a choice of $\beta$ and a choice of classifier $f$ to issue. The learner's goal is to minimize her penalty
\begin{equation*}
\begin{split}
&C_{FP} \sum_{m \in \{A, B\}} p_m P_{\bx\sim\mathcal{D}_m}\big[h_m(\bx) = 0, f(\by) = 1\big] \\
& + C_{FN}\sum_{m \in \{A, B\}} p_m P_{\bx\sim\mathcal{D}_m}\big[h_m(\bx) = 1, f(\by) = 0\big] 
 + \lambda cost(f, \beta),
\end{split}
\end{equation*}
where $ cost(f, \beta)$ is the monetary cost of the subsidy, $C_{FP}$ and $C_{FN}$ denote the cost of a false positive and a false negative respectively as before, and $\lambda \ge 0 $ is some constant that determines the relative weight of misclassification errors and subsidy costs for the learner.

For ease of exposition, the remainder of the section is presented in terms of one-dimensional features. In Section \ref{reduction} of the appendix, we show that in many cases, the $d$-dimensional linear costs setting can be reduced to this one-dimensional setting. 

As an analog of (\ref{correspondence}), we define $\ell_B^\beta(y) = (\beta c_B)^{-1}(\beta c_B(y) -1)$, giving the minimum corresponding unmanipulated feature $x$ for any observed feature $y$. Under the proportional subsidy, for a given $y$, the group $B$ candidate must have $x \ge \ell^\beta_B(y)$. From this, we define $\sigma_B^\beta$ such that $\ell_{B}^\beta(\sigma_B^\beta) = \tau_B$. 

In order to compute the cost of a subsidy plan, we must determine the number of group $B$ candidates who will take advantage of a given subsidy benefit. Since manipulation brings no benefit in itself, candidates will only choose to manipulate and use the subsidy if it will lead to a positive classification. For a published classifier $f$ with threshold $\sigma$, we then have 
\[
cost(f, \beta)  = \big(1-\beta\big)\int_{\ell_B^\beta(\sigma)}^\sigma \big(c_B(\sigma) - c_B(x)\big) P_{x\sim \mathcal{D}_B}(x) dx.
\]

Although the learner's optimization problem can be solved analytically for various values of $\lambda$, we are primarily interested in taking a welfare-based perspective on the effects of various classification regimes on both the learner and candidate groups. In the following section, we analyze how the implementation of a subsidy plan can alter a learner's classification strategy and consider the potential impacts of such policies on candidate groups.

\subsection{Group Welfare Under Subsidy Plans}

While a learner would choose to adopt a subsidy strategy primarily in order to reduce her error rate, offering cost subsidies can also be seen as an intervention that might equalize opportunities in an environment that by default favors those who face lower costs. That is, if costs are keeping group $B$ down, then one might believe that reducing  costs will surely allow group $B$ a fairer shot at manipulation, and, as a result, a fairer shot at positive classification. Alas we find that mitigating cost disparities by way of subsidies does not necessarily lead to better outcomes for group $B$ candidates. In fact, an optimal subsidy plan can actually reduce the welfares of \textit{both} groups. Paradoxically, in some cases, the subsidy plan boosts only the learner's utility, whereas every individual candidate from both groups would have preferred that she offer no subsidies at all. 

The following theorem captures the surprising result that subsidies can be harmful to all candidates, even those from the group that would appear to benefit.

\begin{theorem}[Subsidies can harm both groups]
\label{surprising1}
There exist cost functions $c_A$ and $c_B$ satisfying the cost conditions, learner distributions $\mathcal{D}_A$ and $\mathcal{D}_B$, true classifiers with threshold $\tau_A$ and $\tau_B$, population proportions $p_A$ and $p_B$, and learner penalty parameters $C_{FN}$, $C_{FP}$, and $\lambda$, such that
no candidate in either group has higher payoff at the equilibrium of the Strategic Classification Game with proportional subsidies compared with the equilibrium of the Strategic Classification Game with no subsidies, and some candidates from both group $A$ and group $B$ are strictly worse off.
\end{theorem}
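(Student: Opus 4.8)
The plan is to prove this existence statement by exhibiting an explicit one-dimensional instance and verifying the welfare comparison between the two equilibria directly. I would take linear costs $c_A(x) = 4x$ and $c_B(x) = 8x$ (so the cost condition $c_A' \le c_B'$ holds), a common true threshold $\tau_A = \tau_B = 0.3$ (which trivially satisfies $h_A(x)=1 \Rightarrow h_B(x)=1$), uniform distributions $\mathcal{D}_A = \mathcal{D}_B$ on $[0,1]$, equal proportions $p_A = p_B = \tfrac12$, penalties with $C_{FN} > C_{FP}$, and a small weight $\lambda$ on the subsidy budget. With these choices Proposition~\ref{1d-optimal} gives the undominated interval endpoints $\sigma_B = \tau + 1/8 = 0.425$ and $\sigma_A = \tau + 1/4 = 0.55$.

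First I would pin down the no-subsidy equilibrium threshold $\sigma_0$. By Proposition~\ref{1d-cost}, under the uniform densities the learner's cost on $[\sigma_B,\sigma_A]$ is $C_{FN} p_B(\sigma - \sigma_B) + C_{FP} p_A(\sigma_A - \sigma)$, which is affine in $\sigma$ with positive slope precisely when $C_{FN} p_B > C_{FP} p_A$; hence $\sigma_0 = \sigma_B = 0.425$, perfectly classifying group $B$ while committing false positives on group $A$ over $[\ell_A(\sigma_B),\tau) = [0.175, 0.3)$. I would then identify the subsidy equilibrium by observing that a proportional subsidy $\beta$ rescales group $B$'s effective cost slope to $8\beta$, so $\beta = \tfrac12$ makes the two groups' effective costs identical; since $\tau_A = \tau_B$, the single threshold $\sigma = \sigma_A = 0.55$ then makes both the false-negative band $[\tau,\ell_B^\beta(\sigma))$ and the false-positive band $[\ell_A(\sigma),\tau)$ empty, giving zero error. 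For $\lambda$ small this zero-error outcome is the learner's optimum, and among all zero-error plans the cheapest subsidy is the largest feasible $\beta$, namely $\beta^* = \tfrac12$ at $\sigma_\beta = 0.55$; because zero error is unattainable at $\beta = 1$, the subsidy strictly helps the learner, and we obtain $\sigma_\beta = \sigma_A > \sigma_0 = \sigma_B$.

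The heart of the argument is the payoff comparison. Writing each candidate's payoff as $1$ above the active threshold, $1$ minus the subsidized cost of reaching it inside the manipulating band, and $0$ below, I would compute $P_\beta(x) - P_0(x)$ group by group. For group $A$ the cost is unchanged and the threshold only rises, so payoffs weakly decrease everywhere and strictly on $[\sigma_0,\sigma_\beta)$. For group $B$ the subsidy halves the manipulation cost while the threshold rises, and the comparison over the overlapping manipulating band reduces to the single inequality $\sigma_\beta - \sigma_0 \ge (1-\beta)/(8\beta)$; in the construction both sides equal $0.125$, so $P_\beta(x) \le P_0(x)$ for every $x$, with equality only at the lower boundary $x = \tau$. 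Thus no candidate in either group is better off, while the uniform densities place positive mass on $[\sigma_0,\sigma_\beta)$, where candidates of both groups are strictly worse off, which establishes the theorem.

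I expect the main obstacle to be the group-$B$ comparison: since the subsidy lowers $B$'s cost, one must guarantee that the induced rise in the learner's threshold dominates the cheaper manipulation for \emph{every} $B$ candidate, not merely on average. This is exactly what the inequality $\sigma_\beta - \sigma_0 \ge (1-\beta)/(8\beta)$ encodes, and the instance is engineered to meet it with equality; the same inequality also rules out any candidate newly ``rescued'' into a positive payoff by the subsidy. A secondary check is confirming that the learner's subsidized optimum is the zero-error corner $(\beta^*,\sigma_\beta) = (\tfrac12, 0.55)$ rather than a nearby plan trading a sliver of error against a smaller budget, which is secured by taking $\lambda$ sufficiently small relative to $C_{FP}$ and the densities.
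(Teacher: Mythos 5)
Your proposal is correct and follows essentially the same route as the paper: the paper likewise proves the theorem by exhibiting an explicit one-dimensional instance (concave costs in the main text, linear costs in the appendix) in which the no-subsidy equilibrium sits at $\sigma_B$, the subsidy equilibrium raises the threshold to $\sigma_A$, and a pointwise payoff comparison shows every candidate is weakly worse off with a positive-measure set in each group strictly worse off. Your instance is engineered somewhat more cleanly---taking $\tau_A=\tau_B$ and $\beta^*=\tfrac12$ so that the subsidy equilibrium is the unique zero-error plan---which lets you justify the learner's equilibrium choice for all sufficiently small $\lambda$ rather than verifying it numerically at a specific $\lambda$ as the paper's examples do.
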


We note that a slightly weaker version of the theorem holds for flat subsidies.  In particular, there exist cases in which some individual candidates have higher payoff at the equilibrium of the Strategic Classification Game with flat subsidies compared with the equilibrium with no subsidies, but both group $A$ and group $B$ candidates have lower payoffs on average with the subsidies.

To prove the theorem, it suffices to give a single case in which both candidate groups are harmed by the use of subsidies. However, to illustrate that this phenomenon does not arise only as a rare corner case, we provide one such example here plus two in the appendix, and discuss general conditions under which this occurs. In each example, we consider a particular instance of the Strategic Classification Game and compare the welfares of candidates at equilibrium when the learner is able to select a proportional subsidy with their welfares at equilibrium when no subsidy is allowed. 


\begin{example} 
\label{example-concave} 
Suppose that a learner is error-minimizing such that $C_{FN} = C_{FP} = 1$ and $\lambda = \frac{3}{4}$. Suppose that unmanipulated features for both groups are uniformly distributed with $p_A = p_B = \frac{1}{2}$. Let group cost functions be given by $c_A(x) = 8\sqrt{x} + x$ and $c_B(x) = 12\sqrt{x}$; note that the cost condition $c'_A(x) < c'_B(x)$ holds for $x \in [0,1]$. Let the true group thresholds be given by $\tau_A = 0.4$ and $\tau_B = 0.3$.


When subsidies are not allowed, the learner chooses a classifier with threshold $\sigma^* = \sigma_B \approx 0.398$ at equilibrium.
This threshold perfectly classifies all candidates from group $B$, while permitting false positives on candidates from group $A$ with features $x \in [0.272, 0.4)$.

If the learner decides to implement a proportional subsidies plan, at equilibrium the learner chooses a classifier with threshold $\sigma^*_{prop} = \sigma_A \approx 0.546$ and a subsidy parameter $\beta^* =  0.558$.
Her new threshold now correctly classifies all members of group $A$, while committing false negatives on group $B$ members with features 
$x \in [0.3, 0.348)$.

Some candidates in group $B$ are thus strictly worse-off, while none improve. Without the subsidy offering, group B members had been perfectly classified, but now there exist some candidates who are mistakenly excluded. Further, one can show that candidates who are positively classified must pay more to manipulate to the new threshold in spite of receiving the subsidy benefit. This increased cost is due to the fact that the higher classification threshold imposes greater burdens on manipulation than the $\beta$ subsidy alleviates.

Group $A$ candidates are also strictly worse-off since the threshold increase eliminates false positive benefits that some members had previously been granted in the no-subsidy regime. Moreover, all candidates who manipulate must expend more to do so, since these candidates do not receive a subsidy payment.
 Only the learner is strictly better off with the implementation of this subsidy plan. 
\end{example}

Additional examples in the appendix show cases in which both groups experience diminished welfare when they bear linear costs. Even when the learner has an error function that penalizes false negatives twice as harshly as false positives and thus is explicitly concerned with mistakenly excluding group B candidates, an equilibrium subsidy strategy can still make both groups worse-off.

We thus highlight two consequences of subsidy interventions: On the one hand, with reduced cost burdens, more candidates from the disadvantaged group should be able to manipulate to reach a positive classification. However, subsidy payments also allow a learner to select a classifier that is at least as strict as the one issued without offering subsidies. These are opposing forces, and these examples show that without needing to  distort underlying group probability distributions or the learner's penalty function in extreme ways, the effect of mitigating manipulation costs may be outweighed by the overall impact of a stricter classifier.

This result can also be extended to show that a setup in which candidates are unable to manipulate their features at all can be preferred by all three parties---groups $A$ and $B$ as well as the learner---to both the manipulation and subsidy regimes. We provide an informal statement of this proposition below and defer the interested reader to its formal statement and demonstration in the appendix.

\begin{proposition}
\label{surprising2}
There exist general cost functions such that the outcomes issued by a learner's equilibrium classifier under a non-manipulation regime is preferred by all parties---the learner, group $A$, and group $B$---to outcomes that arise both under her equilibrium manipulation classifier and under her equilibrium subsidy strategy.
\end{proposition}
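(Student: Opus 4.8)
The plan is to prove the proposition by explicit construction, reusing the concave-cost instance already analyzed in Example \ref{example-concave} (which establishes Theorem \ref{surprising1}) and introducing the non-manipulation regime as a third benchmark. In the non-manipulation regime the learner observes each candidate's true feature $x$ directly and commits to a single threshold $\sigma_{nm}$; a candidate then receives payoff $\mathbb{1}[x \ge \sigma_{nm}]$ and pays nothing, while the learner's only unavoidable errors come from the gap $\tau_B \le \tau_A$ together with her inability to condition on group identity. First I would make this regime precise and show that the learner's penalty is minimized over $\sigma_{nm} \in [\tau_B, \tau_A]$, where she trades false positives on group $A$ (features in $[\sigma_{nm}, \tau_A)$) against false negatives on group $B$ (features in $[\tau_B, \sigma_{nm})$); in the symmetric instance of Example \ref{example-concave} this penalty equals $\tfrac12(\tau_A - \tau_B) = 0.05$ for every such threshold.

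The core of the argument is then a three-way comparison of the equilibrium outcomes. Using Propositions \ref{1d-optimal} and \ref{1d-cost} for the manipulation regime (threshold $\sigma_B \approx 0.398$, learner penalty $\approx 0.064$) and the subsidy analysis of Section \ref{sec:subs} for the subsidy regime (threshold $\sigma_A \approx 0.546$, $\beta \approx 0.558$, penalty $\approx 0.052$), I would compute, for each regime, the aggregate welfare of each group as $\int (\text{payoff})\, d\mathcal{D}_m$, accounting in the manipulation and subsidy regimes for the cost $1 - (c_m(\sigma) - c_m(x))$ paid by candidates in the reachable band $[\ell_m(\sigma), \sigma)$ (with $c_B$ replaced by $\beta c_B$ under the subsidy). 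Evaluating the non-manipulation regime at $\sigma_{nm} = \tau_B$, so that group $B$ is classified perfectly and group $A$ receives free positives on $[\tau_B, \tau_A)$, yields aggregate welfare $1 - \tau_B = 0.7$ for each group. The verification then reduces to three strict inequalities: the learner's penalty obeys $0.05 < 0.052 < 0.064$, and each group's aggregate welfare satisfies $0.7 >$ its manipulation value ($\approx 0.667$ for $A$, $\approx 0.653$ for $B$) $>$ its subsidy value ($\approx 0.530$ for $A$, $\approx 0.556$ for $B$), so the non-manipulation outcome is strictly preferred by the learner and by both groups over both alternatives.

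Two points will require care. The first and main obstacle is dominating the \emph{subsidy} regime for the learner: by design the subsidy equilibrium already undercuts the no-subsidy penalty, so the margin by which non-manipulation beats it ($0.05$ versus $\approx 0.052$) is slim and hinges on the exact weighting of the subsidy cost by $p_B$ and $\lambda$; I would either tune $\lambda$ to open this gap or verify it directly. The second is that the learner is \emph{indifferent} across the whole interval $[\tau_B, \tau_A]$ in the non-manipulation regime, and only the threshold $\sigma_{nm} = \tau_B$ delivers the high group-$A$ welfare needed for the comparison (at $\sigma_{nm} = \tau_A$ group $A$ would prefer manipulation). I would remove this degeneracy by breaking the $C_{FN} = C_{FP}$ symmetry slightly in favor of penalizing false negatives, or by perturbing $\mathcal{D}_A$ and $\mathcal{D}_B$, so that $\sigma_{nm} = \tau_B$ becomes the unique non-manipulation optimum while the manipulation and subsidy equilibria move only negligibly. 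Finally, I would flag that the comparison must be at the \emph{group-aggregate} level rather than candidate-by-candidate: unqualified group-$A$ candidates with $x \in [\ell_A(\sigma_B), \tau_B)$ manipulate profitably under the manipulation regime but are rejected outright when $\sigma_{nm} = \tau_B$, so they are individually worse off, which is why the proposition compares total (or average) group welfare, consistent with the flat-subsidy version of Theorem \ref{surprising1}.
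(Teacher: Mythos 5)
Your construction is correct, and the numbers check out: under the instance of Example~\ref{example-concave}, group-average welfares are approximately $0.667$ ($A$) and $0.653$ ($B$) at the manipulation equilibrium $\sigma_B \approx 0.398$, approximately $0.530$ ($A$) and $0.556$ ($B$) at the subsidy equilibrium $(\sigma_A \approx 0.546,\ \beta^* \approx 0.558)$, and $0.7$ for both groups under non-manipulation at $\tau_B$, while the learner's penalties are $0.064$, $\approx 0.052$, and $0.05$ respectively, which verifies all six inequalities of the formal statement (Proposition~\ref{surprising2-formal}). The paper proves the proposition by the same technique---an explicit instance plus a three-way comparison of group-average welfares---but with a different instance: it extends the \emph{linear}-cost Example~\ref{example-linear} ($c_A(x)=3x$, $c_B(x)=4x$) rather than the concave-cost Example~\ref{example-concave}, and it resolves the learner's indifference by applying one equilibrium-selection rule (equalize false positives on $A$ with false negatives on $B$) uniformly across all three regimes, yielding thresholds $0.64$, $(\sigma_A = 0.733,\ \beta = 0.806)$, and $0.35$. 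Your route is more economical, since the manipulation and subsidy equilibria are already computed in the proof of Theorem~\ref{surprising1}, but it pays for this in exactly the two places you flag. First, the learner's margin over the subsidy regime ($0.05$ vs.\ $\approx 0.052$) is real but hinges on weighting the subsidy expenditure by $p_B$; note this is also the reading required for Example~\ref{example-concave} itself to be internally consistent (without the $p_B$ factor the subsidy penalty would be $\approx 0.081 > 0.064$ and no subsidy would be offered at equilibrium), so you are entitled to it. Second, your perturbation fix for the non-manipulation indifference is sound and can be made precise cheaply: taking $C_{FN} = 1 + \epsilon$ leaves the non-manipulation penalty at $\tau_B$ and the manipulation penalty at $\sigma_B$ unchanged (both commit only false positives), makes $\tau_B$ and $\sigma_B$ the \emph{unique} optima of their respective regimes, and only increases the subsidy regime's penalty, so every margin survives or widens; the paper's selection-rule device avoids this bookkeeping but requires building a fresh example with interior error intervals. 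Finally, your closing caveat is exactly right and matches the paper: group-$A$ candidates with $x \in (\ell_A(\sigma_B), \tau_B)$ are individually better off under manipulation, so the claim can only be about group-average welfare, which is how $W_m(\cdot)$ is defined in the paper's formal version.
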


\section{Discussion}

Social stratification is constituted by forms of privilege that exist along many different axes, weaving and overlapping to create an elaborate mesh of power relations. While our model of strategic manipulation does not attempt to capture this irreducible complexity, we believe this work highlights a likely consequence of the expansion of algorithmic decision-making in a world that is marked by deep social inequalities. We demonstrate that the design of classification systems can grant undue rewards to those who \textit{appear} more meritorious under a particular conception of merit while justifying exclusions of those who have failed to meet those standards. These consequences serve to exacerbate existing inequalities. 

Our work also shows that attempts to resolve these negative social repercussions of classification, such as implementing policies that help disadvantaged populations manipulate their features more easily, may actually have the opposite effect. A learner who has offered to mitigate the costs facing these candidates may be encouraged to set a higher classification standard, underestimating the deeper disadvantages that a group encounters, and thus serving to further exclude these populations. However, it is important to note that these unintended consequences do not always arise. A conscientious learner who offers subsidies to equalize the playing field can guard against such paradoxes by making sure to classify agents in the same way even when offering to mitigate costs.

Other research in signaling and strategic classification has considered models in which manipulation is desirable from the learner's point of view \cite{frankel2014muddled,kleinberg2018classifiers}. Though this perspective diverges from the one we consider here, we acknowledge that there do exist cases in which manipulation serves to improve a candidate's quality and thus leads a learner to encourage such behaviors. It is important to note, however, that although this account may accurately represent some social classification scenarios, differential group access to manipulation remains an issue, and in fact, cases in which manipulation genuinely improves candidate quality may present even more problematic scenarios for machine learning systems. As work in algorithmic fairness has shown, feedback effects of classification can lead to deepening inequalities that become ``justified" on the basis of features both manipulated and ``natural" \cite{ensign2018runaway}. 

The rapid adoption of algorithmic tools in social spheres calls for a range of perspectives and approaches that can address a variety of domain-specific concerns. Expertise from other disciplines ought to be imported into machine learning, informing and infusing our research in motivation, application, and technical content. As such, our work seeks to investigate, from a theoretical learning perspective, some of the potential adverse effects of what sociology has called ``quantification," a world increasingly governed by metrics. In doing so, we bring in techniques from game theory and information economics to model the interaction between a classifier and its subjects. This paper adopts a framework that tries to capture the genuine unfair aspects of our social reality by modeling group inequality in a population of agents. Although this perspective deviates from standard idealized settings of learner-agent interaction, we believe that so long as machine learning tools are designed for deployment in the imperfect social world, pursuing algorithmic fairness will require us to explicitly build models and theory to address critical issues such as social stratification and unequal access.

\section*{Acknowledgements}
We thank Alex Frankel, Rupert Freeman, Manish Raghavan, Hanna Wallach, and Glen Weyl for constructive input and discussion on this project and related topics.

\bibliographystyle{unsrt}
\bibliography{sample}

\begin{thebibliography}{10}

\bibitem{sweeney2013discrimination}
Latanya Sweeney.
\newblock Discrimination in online ad delivery.
\newblock {\em Queue}, 11(3):10, 2013.

\bibitem{o2016weapons}
Cathy O'Neil.
\newblock {\em Weapons of Math Destruction: How Big Data Increases Inequality
  and Threatens Democracy}.
\newblock Broadway Books, 2016.

\bibitem{angwin2016machine}
Julia Angwin, Jeff Larson, Surya Mattu, and Lauren Kirchner.
\newblock Machine bias.
\newblock {\em ProPublica, May}, 23, 2016.

\bibitem{eubanks2018automating}
Virginia Eubanks.
\newblock {\em Automating inequality: How High-tech Tools Profile, Police, and
  Punish the Poor}.
\newblock St. Martin's Press, 2018.

\bibitem{johnson2016impartial}
Kory~D Johnson, Dean~P Foster, and Robert~A Stine.
\newblock Impartial predictive modeling: {E}nsuring fairness in arbitrary
  models.
\newblock CoRR arXiv:1608.00528, 2016.

\bibitem{qureshi2016causal}
Bilal Qureshi, Faisal Kamiran, Asim Karim, and Salvatore Ruggieri.
\newblock Causal discrimination discovery through propensity score analysis.
\newblock CoRR arXiv:1608.03735, 2016.

\bibitem{kilbertus2017avoiding}
Niki Kilbertus, Mateo~Rojas Carulla, Giambattista Parascandolo, Moritz Hardt,
  Dominik Janzing, and Bernhard Sch{\"o}lkopf.
\newblock Avoiding discrimination through causal reasoning.
\newblock In {\em Advances in Neural Information Processing Systems}, 2017.

\bibitem{grgic2018beyond}
Nina Grgic-Hlaca, Muhammad~Bilal Zafar, Krishna~P Gummadi, and Adrian Weller.
\newblock Beyond distributive fairness in algorithmic decision making:
  {F}eature selection for procedurally fair learning.
\newblock In {\em Proceedings of the AAAI Conference on Artificial
  Intelligence}, 2018.

\bibitem{bruckner2011stackelberg}
Michael Br{\"u}ckner and Tobias Scheffer.
\newblock Stackelberg games for adversarial prediction problems.
\newblock In {\em Proceedings of the ACM SIGKDD International Conference on
  Knowledge Discovery and Data Mining}, 2011.

\bibitem{hardt2016strategic}
Moritz Hardt, Nimrod Megiddo, Christos Papadimitriou, and Mary Wootters.
\newblock Strategic classification.
\newblock In {\em Proceedings of the ACM Conference on Innovations in
  Theoretical Computer Science}, 2016.

\bibitem{akyol2016price}
Emrah Akyol, Cedric Langbort, and Tamer Basar.
\newblock Price of transparency in strategic machine learning.
\newblock CoRR arXiv:1610.08210, 2016.

\bibitem{datta2017proxy}
Anupam Datta, Matt Fredrikson, Gihyuk Ko, Piotr Mardziel, and Shayak Sen.
\newblock Proxy non-discrimination in data-driven systems.
\newblock CoRR arXiv:1707.08120, 2017.

\bibitem{kearns1993learning}
Michael Kearns and Ming Li.
\newblock Learning in the presence of malicious errors.
\newblock {\em SIAM Journal on Computing}, 22(4):807--837, 1993.

\bibitem{auer1998line}
Peter Auer and Nicolo Cesa-Bianchi.
\newblock On-line learning with malicious noise and the closure algorithm.
\newblock {\em Annals of mathematics and artificial intelligence},
  23(1-2):83--99, 1998.

\bibitem{dong2018strategic}
Jinshuo Dong, Aaron Roth, Zachary Schutzman, Bo~Waggoner, and Zhiwei~Steven Wu.
\newblock Strategic classification from revealed preferences.
\newblock In {\em Proceedings of the ACM Conference on Economics and
  Computation}, 2018.

\bibitem{spence1978job}
Michael Spence.
\newblock Job market signaling.
\newblock In {\em Uncertainty in Economics}, pages 281--306. 1978.

\bibitem{frankel2014muddled}
Alex Frankel and Navin Kartik.
\newblock Muddled information.
\newblock {\em Journal of Political Economy}, Forthcoming, 2018.

\bibitem{esteban2006inequality}
Joan Esteban and Debraj Ray.
\newblock Inequality, lobbying, and resource allocation.
\newblock {\em American Economic Review}, 96(1):257--279, 2006.

\bibitem{kephart2015complexity}
Andrew Kephart and Vincent Conitzer.
\newblock Complexity of mechanism design with signaling costs.
\newblock In {\em Proceedings of the International Conference on Autonomous
  Agents and Multiagent Systems}, 2015.

\bibitem{kephart2016revelation}
Andrew Kephart and Vincent Conitzer.
\newblock The revelation principle for mechanism design with reporting costs.
\newblock In {\em Proceedings of the ACM Conference on Economics and
  Computation}, 2016.

\bibitem{milli2018social}
Smitha Milli, John Miller, Anca~D Dragan, and Moritz Hardt.
\newblock The social cost of strategic classification.
\newblock Forthcoming, 2019.

\bibitem{card2007racial}
David Card and Jesse Rothstein.
\newblock Racial segregation and the black--white test score gap.
\newblock {\em Journal of Public Economics}, 91(11--12):2158--2184, 2007.

\bibitem{spremann1987agent}
Klaus Spremann.
\newblock Agent and principal.
\newblock In {\em Agency theory, information, and incentives}, pages 3--37.
  Springer, 1987.

\bibitem{laffont2009theory}
Jean-Jacques Laffont and David Martimort.
\newblock {\em The Theory of Incentives: The Principal-Agent Model}.
\newblock Princeton University Press, 2009.

\bibitem{ballwieser2012agency}
Wolfgang Ballwieser, G~Bamberg, MJ~Beckmann, H~Bester, M~Blickle, R~Ewert,
  G~Feichtinger, V~Firchau, F~Fricke, H~Funke, et~al.
\newblock {\em Agency theory, information, and incentives}.
\newblock Springer Science \& Business Media, 2012.

\bibitem{kleinberg2018classifiers}
Jon Kleinberg and Manish Raghavan.
\newblock How do classifiers induce agents to invest effort strategically?
\newblock CoRR arXiv:1807.05307, 2018.

\bibitem{ensign2018runaway}
Danielle Ensign, Sorelle~A Friedler, Scott Neville, Carlos Scheidegger, and
  Suresh Venkatasubramanian.
\newblock Runaway feedback loops in predictive policing.
\newblock In {\em Proceedings of the Conference on Fairness, Accountability and
  Transparency}, 2018.

\end{thebibliography}

\appendix

\section{Appendix}

\subsection{Proofs from Section~\ref{sec:eq1d}}

\subsubsection{Proof of Proposition \ref{1d-optimal}}

We first construct the optimal learner classifier when facing only candidates of a single group. Suppose the learner encounters only group $A$ candidates. Then using her knowledge that the true classifier $h_A$ is based on a threshold $\tau_A \in [0,1]$, she can construct a classifier that admits those candidates with scores $x\ge \tau_A$ and rejects candidates $x <  \tau_A$. Since the maximal manipulation cost that any candidate would be willing to undertake is $1$, for all $x \in [0,1]$, 
$c_A(y) - c_A(x) \le 1$ and therefore
\[y \le c_A^{-1}(c_A(x)+1)\]
Thus a candidate with feature $x = \tau_A$ would be able to move to any feature $y \le \sigma_A$ where $\sigma_A = c_A^{-1}(c_A(\tau_A)+1)$. 

Repeating the same reasoning for group $B$, a candidate with feature $x = \tau_B$ would be willing to move to any feature $y \le \sigma_B$ where $\sigma_B = c_B^{-1}(c_B(\tau_B) + 1)$. 

Now we want to show that $[\sigma_B, \sigma_A]$ marks an interval of undominated strategies. First we prove the ordering that $\sigma_B \le \sigma_A$ for all cost functions $c_B$ and $c_A$ and all thresholds $\tau_B \le \tau_A$. Recall that since $h_A(x) = 1 \implies h_B(x) = 1$, we have $\tau_B \le \tau_A$. Although we cannot order $c_B(\tau_B)$ and $c_A(\tau_A)$, we have, by monotonicity of $c_B$
\[ c_B(\tau_B) \le c_B(\tau_A).\] 
Let $\Delta = c_B(\tau_A) - c_B(\tau_B)$. Notice that if $\Delta \ge 1$, 
$c_B(\tau_B) + 1 \le c_B(\tau_A)$, and so
\[\sigma_B = c_B^{-1}(c_B(\tau_B) + 1 ) \le \tau_A < \sigma_A , \]
where the last inequality is due to monotonicity of $c_A$. \checkmark

Let us consider the $\Delta \in (0,1)$ case. By the cost condition, we can write $c_B' (\tau_A) \ge c_A'(\tau_A)$. This implies that
\[c_B^{-1}(c_B(\tau_A) + 1 ) \le c_A^{-1}(c_A(\tau_A) + 1 )\]
Substituting in $c_B(\tau_A) = c_B(\tau_B) + \Delta$, we have
\[c_B^{-1}( c_B(\tau_B) + \Delta + 1 ) \le c_A^{-1}(c_A(\tau_A) + 1 ) = \sigma_A.\]
By monotonicity of $c_B$, the left hand side is $\ge \sigma_B$, and we have that
$\sigma_B \le \sigma_A$
as desired. \checkmark

Notice that for all $\sigma < \sigma_B$, the learner commits false positive errors on candidates from group $B$, since $\sigma_B$ is optimal for group $B$ classification. She commits more false positives on group $A$ candidates as well and does not commit any fewer false negatives because of the monotonicity of $c_B$ and $c_A$. Thus for any error function with $C_{FP} > 0$, the threshold classifier $\sigma_B$ dominates $\sigma$. 

Similarly, for all $\sigma > \sigma_A$, the learner commits false negative errors on candidates from group $A$, since $\sigma_A$ is optimal for group $A$ classification. She also commits more false negatives on group $B$ while committing no fewer false positives. Thus for any error function with $C_{FN} > 0$, the threshold classifier $\sigma_A$ dominates $\sigma$. 

For all $\sigma \in [\sigma_B, \sigma_A]$, the learner trades off false negatives on group $B$ for false positives on group $A$, and we call this range of threshold strategies undominated. 
\qed

\subsubsection{Proof of Proposition \ref{1d-cost}}

We compute the cost of a learner's threshold strategy $\sigma \in [\sigma_B, \sigma_A]$ by first examining its performance on each group individually. 

Recall from Proposition \ref{1d-optimal} that the optimal learner threshold that perfectly classifies all $B$ candidates is $\sigma_B$. Thus for all threshold strategies based on $\sigma \in (\sigma_B, \sigma_A]$, the learner commits false negative errors on group $B$. 

To compute which members of group $B$ are subject to these errors, consider a learner classifier $f$ based on a threshold $\sigma$. In order to manipulate to reach the feature threshold $\sigma$, a group $B$ candidate must have an unmanipulated $x$ such that
\[ c_B(\sigma) - c_B(x) \le 1, \]
\[ x \ge c_B^{-1}(c_B(\sigma) + 1 ) = \ell_B(\sigma).\]
We know that $\tau_B \le \ell_B(\sigma)$ by monotonicity of $c_B$, and thus for all group $B$ candidates with feature $x \in [ \tau_B, \ell_B(\sigma) )$, the learner issues classification $f(x) = 0$, even though $h_B(x) = 1$. These are the false negative errors issued on group $B$ for which the learner bears cost
\begin{equation}
\label{c-fn}
C_{FN} p_B P_{x \sim \mathcal{D}_B}\big[x \in [ \tau_B, \ell_B(\sigma)) \big]
\end{equation}

Following the same reasoning, notice that since $\sigma_A$ is the optimal threshold policy for a learner facing only group $A$ candidates, a classifier $f$ based on any $\sigma \in [\sigma_B, \sigma_A)$ commits false positive errors on some group $A$ candidates. Then repeating the steps that we carried out for group $B$, we see that for all group $A$ candidates with $x$ such that
\[x \ge c_A^{-1}(c_A(\sigma)+1 = \ell_A(\sigma)\]
the classifier $f$ issues a positive classification; $f(x) = 1$. Since $\ell_A(\sigma) \leq \tau_A$, candidates with features $x \in [\ell_A(\sigma), \tau_A)$, have true label $h_A(x) = 0$, and the learner commits false positive errors that bear cost
\begin{equation}
\label{c-fp}
C_{FP} p_A P_{x \sim \mathcal{D}_A}\big[x \in [\ell_A(\sigma), \tau_A) \big]
\end{equation}
Combining (\ref{c-fn}) and (\ref{c-fp}), the total cost of any classifier $f$ based on a threshold $\sigma \in [\sigma_B, \sigma_A]$, we obtain our desired result. 
\qed

\subsubsection{Proofs of Corollaries \ref{corollary-fn} and \ref{corollary-fp}}
These results follow by considering strategies $\sigma_B$, which commits no errors on group $B$ and thus only bears the cost given in (\ref{c-fp}), and $\sigma_A$, which commits no errors on group $A$ and thus only bears the cost given in (\ref{c-fn}).
\qed

\subsubsection{Proof of Proposition \ref{proportional-costs}}

Under the assumption of uniform feature distributions for both groups, minimizing a classifier's probability of error amounts to choosing the threshold $\sigma$ as
\[\argmin_{\sigma\in [\sigma_B, \sigma_A]} \ell_B(\sigma) - \ell_A(\sigma).\]

 
With proportional group costs $c_A(x) = q c_B(x)$ for $q\in (0,1)$, we have that 
\begin{align*}
\ell_B'(\sigma) &= \frac{(c_B)'(\sigma)}{\Big(c_B\Big)'\Big(\big(c_B\big)^{-1}\big(c_B(\sigma)-1\big)\Big)}\\
&=\frac{(c_B)'(\sigma)}{\Big(c_B\Big)'\Big(\ell_B(\sigma)\Big)}
\end{align*}
and
\begin{align*}
\ell_A'(\sigma) &= \frac{(c_A)'(\sigma)}{\Big(c_A\Big)'\Big(\big(c_A\big)^{-1}\big(c_A (\sigma)-1\big)\Big)} \\
&= \frac{(q c_B)'(\sigma)}{\Big(q c_B\Big)'\Big(\big(c_A\big)^{-1}\big(c_A (\sigma)-1\big)\Big)} \\
&= \frac{(c_B)'(\sigma)}{\Big(c_B\Big)'\Big(\big(c_A\big)^{-1}\big(c_A (\sigma)-1\big)\Big)} \\
&= \frac{(c_B)'(\sigma)}{\Big(c_B\Big)'\Big(\ell_A(\sigma)\Big)} . 
\end{align*}

When $c_A$ and $c_B$ are strictly concave, since $\ell_B(\sigma) > \ell_A(\sigma)$, $(c_B)'(\ell_A(\sigma)) > (c_B)'(\ell_B(\sigma))$ and therefore
$\ell_A'(\sigma) < \ell_B'(\sigma)$ for all $\sigma \in [\sigma_B, \sigma_A]$, and the quantity $\ell_B(\sigma) - \ell_A(\sigma)$ is monotonically increasing in $\sigma$. Thus the optimal classifier threshold is $\sigma^* = \sigma_B$.

Similarly, when $c_A$ and $c_B$ are strictly convex, $\ell_A'(\sigma) > \ell_B'(\sigma)$ for all $\sigma \in [\sigma_B, \sigma_A]$, and the quantity $\ell_B(\sigma) - \ell_A(\sigma)$ is monotonically decreasing in $\sigma$. Thus the optimal classifier threshold is $\sigma^* = \sigma_A$. Thus the optimal classifier threshold is $\sigma^* = \sigma_A$.

Finally, when $c_A$ and $c_B$ are affine, $\ell_A'(\sigma) = \ell_B'(\sigma)$ for all $\sigma \in [\sigma_B, \sigma_A]$, and the quantity $\ell_B(\sigma) - \ell_A(\sigma)$ is constant for all $\sigma \in[\sigma_B, \sigma_A]$. Thus the learner is indifferent between all thresholds $\sigma \in [\sigma_B, \sigma_A]$.
\qed

\subsection{Proofs from Section~\ref{sec:eqdd}}

\subsubsection{Proof of Lemma \ref{lemma:opt_candidate}}

Consider a candidate with unmanipulated feature $\bx \in [0,1]^d$ and manipulation cost $\sum_{i=1}^d c_i x_i$ who faces a classifier $f(\by)$ with linear decision boundary given by $\sum_{i=1}^d g_i y_i = g_0$. Recall that the utility a candidate receives for presenting feature $\by \ge \bx$ is given by $f(\by) - c(\bx, \by)$. When $f(\bx) = 1$, it is trivial that the candidate's best response to select $\by=\bx$. \checkmark\\

Notice that if for all $i \in [d]$, $f(\bx+ \frac{1}{c_i}\be_i) = 0$, then we have that $\bm{g}^\intercal \bx + \frac{g_k}{c_k} < g_0$, so 
\[\frac{c_k(g_0 - \bm{g}^\intercal \bx)}{g_k} > 1\]
The manipulation from $\bx$ to $\by = \bx + \sum_{i\in K} \frac{t_i}{c_i}\be_i$ such that $\bm{g}^\intercal\by = g_0$ entails cost 
\[c(\by) - c(\bx) = \sum_{i \in K} t_i = \frac{c_k(g_0 - \bm{g}^\intercal \bx)}{g_k} > 1\]
and manipulating to achieve a positive classification using only components in $K$ would require a cost $>$ 1. By definition, keeping the sum $\sum_{i\in K} t_i$, but selecting different $t_i$ such that some $i\notin K$, $t_i > 0 $ would yield an even lower value $\bm{g}^\intercal \bx + \sum_{i=1}^d \frac{g_i t_i}{c_i}$.

Thus manipulating from $\bx$ to $\by$ such that $f(\by) = 1$ entails a cost $c(\by) - c(\bx) > 1$, and the candidate would not move at all, since the utility for moving $1 - (c(\by) - c(\bx)) < 0$ makes her worse-off than being subject to a negative classification without expending any cost on feature manipulation. Thus she selects $\by = \bx$. \checkmark \\

Now we consider the case where $f(\bx) = 0$ and there exists $i \in [d]$ such that $f(\bx+ \frac{1}{c_i}\be_i) = 1$.

Let $k \in K = \argmax_{i\in [d]} \frac{g_i}{c_i}$. We prove that the best-response manipulation for candidates with these $\bx$ moves to 
\begin{equation}
\label{Mx-def}
\by = \bx+ \sum_{i =1}^d \frac{t_i}{c_i}\be_i
\end{equation}
 where $t_i \ge 0$, $t_j = 0$ for all $j\notin K$, and $\bm{g}^\intercal (\bx + \sum_{i \in K}\frac{t_i}{c_i}\be_i) = g_0$. Note that such a $\by$ may not be unique---there may be multiple best-response manipulated features that achieve the same candidate utility, since they all result in the same candidate cost, and thus regardless of choices $i \in K$, we have that
 \begin{equation}
 \label{sum-t}
 \sum_{i \in K} t_i = \frac{c_k(g_0 - \bm{g}^\intercal \bx)}{g_k}
 \end{equation}

The utility of any move to $\by$ satisfying (\ref{Mx-def}) is given by \[f(\by^*) - c(\bx, {\by}^*) = 1- \sum_{i=1}t_i\] 
Let us pick any such $\by$ and call it $\by^*$ since we will show that all other manipulations that are not of the form given in (\ref{Mx-def}) generate lower utility for the candidate than $\by^*$.  

We now show that for any manipulation to $\by$, $\sum_{i=1}^d t_i \le 1$. By assumption, for some $i$, we have \[f(\bx + \frac{1}{c_i}\be_i) = 1 \implies \bm{g}^\intercal \bx + \frac{g_i}{c_i} \ge g_0\] 
Thus by (\ref{sum-t}), we have that $ \sum_{i \in K} t_i \le \frac{c_k \frac{g_i}{c_i}}{g_k}$.
By definition of $k,$ 
this is at most one since $\frac{g_k}{c_k} \ge \frac{g_i}{c_i}$ for all $i\in[d]$. \checkmark

Suppose on the contrary that there exists another manipulated feature $\hat{\by} \neq \by^*$ that is optimal and is not of the form (\ref{Mx-def}):
\[ f(\hat{\by}) - (c(\hat{\by}) -c(\bx)) \ge 1 - \frac{c_k(g_0 - \bm{g}^\intercal \bx)}{g_k} \ge 0\]

Then it must be the case that moving to $\hat{\by}$ achieves a positive classification with a lower cost burden. We write 
\[
\hat{\by} =\bx + \sum_{i=1}\hat{t}_i\be_i
\] 
where $\be_i$ is the $i^{\text{th}}$ standard basis vector, and $\hat{t}_j = \hat{y}_j-x_j$ to highlight the components that have been manipulated from $\bx$ to $\hat{\by}$.

First, we suppose that $\hat{\by}$ is such that there exists some component $\hat{\by}_j > 0$ where $j \notin K = \argmax_{i\in [d]} \frac{g_i}{c_i}$. Now we construct a feature $\hat{\by}'$ by selecting this component, and decreasing $\hat{t}_j = 0$ and increasing a component $k \in K$ by $\frac{c_j \hat{t}_j}{c_k}$. That is 
\[ \hat{\by} ' = \hat{\by} - \hat{t}_j\be_j +\frac{c_j \hat{t}_j}{c_k} \be_k\]
The cost of manipulation from $\bx$ to $\hat{\by} '$ is the same as that for manipulation to $\hat{\by}$: 
\[c(\hat{\by}') - c(\bx) = \sum_{i=1}^d c_i \hat{y}_i - \hat{t}_j c_j + c_k \frac{c_j \hat{t}_j}{c_k} = \sum_{i=1}^d c_i \hat{y}_i \]
Notice that now we have 
\[
\sum_{i=1}^d g_i \hat{y}'_i = \sum_{i=1}^d g_i \hat{y}_i - g_j \hat{t}_j + \frac{g_k c_j \hat{t}_j}{c_k}
> \sum_{i=1}^d g_i \hat{y}_i \ge g_0.
\]
Thus the candidate can manipulate to $\hat{\by}'$ by expending the same cost with 
\[\sum_{i=1}^d g_i \hat{y}'_i > g_0\] 
Then by continuity of $g$, there must exist some $\bar{\by} \le \hat{\by}'$ such that $\sum_{i=1}^d g_i \bar{y}_i \in [g_0, \sum_{i=1}^d g_i \hat{y}'_i )$. Thus since costs are monotonically increasing, $c(\bx, \bar{\by}) < c(\bx, \hat{\by})$ and since $\bar{\by}$ reaches the same classification, and we have shown that $\hat{\by}$ could not have been optimal, which is a contradiction. \checkmark\\

Now we consider the case where $\hat{\by} = \bx + \sum_{i=1}^d {\hat{t}_i}\be_i$ is such that $\hat{t}_j = 0$ for all $j\notin K$, but $\bm{g}^\intercal \hat{\by} \neq g_0$. If $\bm{g}^\intercal \hat{\by} < g_0$, then $\hat{\by}$ is negatively classified and thus trivially receives a lower utility than manipulating to any feature $\by$ that is positively classified and associated with total cost $\sum_i t_i \le 1$.

If $\bm{g}^\intercal \hat{\by} > g_0$, then there are two possibilities: If $c(\hat{\by}) - c(\bx) \ge 1$, then once again, she receives at most a utility of 0, and thus manipulating to $\hat{\by}$ is a suboptimal move. If $c(\hat{\by}) - c(\bx) <1$, then we show the optimal manipulation is the one that moves from $\bx$ to 
\[\by = \bx + \sum_{i=1} t_i \be_i\]
where $\bm{g}^\intercal \by = g_0$ and $t_j = 0,$ $\forall j \notin K$---the move dictated by (\ref{Mx-def}). This feature $\by$ also achieves a positive classification, but we argue that it does so at a lower cost than $\hat{\by}$. Since $\bm{g}^\intercal \hat{\by} > g_0$, we can define
\[\Delta =\bm{g}^\intercal \hat{\by} - g_0 > 0 \] 
The manipulation from $\bx$ to $\hat{\by} - \frac{\Delta}{g_k}\be_k$ for any choice of $k$ attains a higher utility since it receives the same classification since 
\[\bm{g}^\intercal (\hat{\by} - \frac{\Delta}{c_k}\be_k) = g_0\]
but does so at a cost 
\[c({\by}) - c(\bx)= c(\hat{\by}) - c(\bx)- {\Delta}\]
Since we already showed that all manipulations to $\by$ of the form given in (\ref{Mx-def}) bear the same cost, then we have shown that all such $\by$ are preferable to $\hat{\by}$. By monotonicity of $c({\by}) - c(\bx)$ and $\sum_{i=1}^d g_i x_i$, all manipulations with lower cost entail a negative classification and thus a lower utility, and such only those manipulations to $\by$ are optimal.
\qed

\subsubsection{Proof of Theorem \ref{theorem:d_opt_learner}}
We first prove that a learner who has access to the linear decision boundary for the true classifier can construct a classifier that commits no errors on any candidates from a single group; thus, in our setting, perfect classifiers exist for groups $A$ and $B$. We then prove that all undominated classifiers commit no false positives on group $B$ and no false negatives on group $A$. 


Suppose true classifiers are given by $h_A$ and $h_B$ based on decision boundaries $\sum_{i=1}^d w_{A, i} x_i = \tau_A$ and $\sum_{i=1}^d w_{B, i} x_i = \tau_B$, costs are $c_A(\bx) = \sum_{i=1}^d c_{A, i} x_i$ and $c_B(\bx) = \sum_{i=1}^d c_{B, i} x_i$.\\


\textbf{Claim 1:} When facing candidates from a single group, a learner who has access to true decision boundary $\sum_{i=1}^d w_{i} x_i = \tau$ and manipulation costs $\sum_{i=1}^d c_i x_i$ can construct a perfect classifier. 
\begin{proof}
Consider those features $\bar{\bx} \in [0,1]^d$ that lie on the true decision boundary $\sum_{i=1}^d w_{i} x_i = \tau$ and thus have true labels $1$. For each of these $\bar{\bx}$, we construct $\Delta(\bar{\bx})$ as defined in (\ref{simplex-forward}) to represent the candidate's space of potential manipulation to form the set $\{\Delta(\bar{\bx})\}$ for all $\bar{\bx}$ on the boundary. Notice that when all candidates face the same cost, the set of $j^{\text{th}}$ vertices of each of the simplices $\Delta(\bar{\bx})$, given by $\bv_j(\bar{\bx}) = \bar{\bx} + \frac{1}{c_j}\be_j$, are coplanar. Each of these hyperplanes can be described as a set
\begin{equation*}
\label{outside-hyperplane}
\left\{ \by : \sum_{i=1}^d w_{i} y_i = \tau + \frac{w_j}{c_j}\right\}.
\end{equation*}
Let $k \in \argmax_j \frac{w_j}{c_j}$.  We define $g_1$ to be a notational shortcut for the hyperplane corresponding to feature $k$, so
\[
g_1 = \left\{ \by : \sum_{j=1}^d g_{1,j} y_j = g_{1,0} \right\},
\]
where $g_{1,0} = \tau + \frac{w_k}{c_k}$ and $g_{1,i} = w_i$ for all $i \in \{1,...,d\}$.
\ignore{
 so that
\begin{equation}
\label{g1-def}
\sum_{i=1}^d w_i y_i = \tau + \frac{w_k}{c_k} \iff  \sum_{j=1}^d g_{1,j} y_j = g_{1,0}.
\end{equation}
} 
We define a classifier $f_1$ based on the hyperplane $g_1$:
\begin{align}
\label{f1}
f_1(\by) = \begin{cases}
1 & \sum_{j=1}^d g_{1,j} y_j \ge g_{1,0} ,\\
0 & \sum_{j=1}^d g_{1,j} y_j < g_{1,0} .
\end{cases}
\end{align}

To show that $f_1$ is a perfect classifier of all candidates with these generic costs, we show that it commits no false positive errors and no false negative errors. Notice that since $g_1$ was constructed to be precisely the hyperplane that contains all vertices $\bv_k(\bar{\bx}) = \bar{\bx} + \frac{1}{c_k}$ of the simplices $\Delta(\bar{\bx})$ where $k \in \argmax_{j\in[d]} \frac{w_j}{c_j}$, then all $\bar{\bx}$ on the true decision boundary $\sum_{i=1}^d w_{i} x_i = \tau$ can indeed manipulate to $\bv_k(\bar{\bx})$ and reach $g_1$ to gain a positive classification.

Similarly, all candidates with features $\bx$ such that $\sum_{i=1}^d w_{i} x_i > \tau$, can move to the $k^{\text{th}}$ vertex of the simplex $\Delta(\bx)$ given by $\bv_k({\bx}) =  \bx + \frac{1}{c_k}\be_k$ in order to be classified positively since
\[ \sum_{i=1}^d w_{i}v_{k, i}({\bx}) > \tau + \frac{w_k}{c_k} \implies \sum_{i=1}^d g_{1, j} v_{k,i}({\bx})> g_{1,0} .\]
Thus $f_1$ correctly classifies all these candidates positively and permits no false negatives. \checkmark 

Consider the optimal manipulation for all true negative candidates $\bx$. By Lemma \ref{lemma:opt_candidate}, the optimal manipulation would be either to not move at all, guaranteeing a negative classification, or to move $\bx$ to some point $\by = \bx + \sum_{i=1}^d\frac{t_i}{c_i}\be_i$ where $t_j=0$ for all $j \notin \argmax_{j\in [d]}\frac{g_{1,j}}{c_j}$. 
But since $\sum_{i=1}^d w_{i} x_i < \tau$, then for all such $\by$,
\[ \sum_{i=1}^d w_{i} y_i  \leq \sum_{i=1}^d w_{i} x_i + \frac{w_k}{c_k} < \tau + \frac{w_k}{c_k} \implies \sum_{i=1}^d g_{1, j} y_j < g_{1,0}\]
and thus the classifier based on the hyperplane $g_1$ also issues a classification $f_1(\bx) = 0$ and admits no false positives.  \checkmark \\

Thus we have shown that the hyperplane $g_1$ supports a perfect classifier $f_1$ as defined in (\ref{f1}).
\end{proof}

Now we move on to group-specific claims, where groups have distinct costs and potentially distinct true decision boundaries, but we continue to use the constructions of $f_1$ and $g_1$ from Claim 1. \\

\textbf{Claim 2:}  Let $f^A_1$ be the classifier based on boundary $g_1$ for group $A$, and let $f^B_1$ be the classifier based on boundary $g_1$ for group $B$, as in (\ref{f1}), but with group-specific costs and true decision boundary parameters. Then $\forall \by \in [0,1]^d$, \[f^A_1(\by) =1 \implies f^B_1(\by) = 1.\]
\begin{proof}
We first prove the claim for the case in which $h_A = h_B$ with decision bounday $\sum_{i=1}^d w_i x_i = \tau$. We then show that it also holds when the two are not equal.

By the cost condition that $c_A(\by) - c_A(\bx) \le c_B(\by) - c_B(\bx)$ for all $\bx \in [0,1]^d$ and $\by \ge \bx$, we know that for any given $\bx$, \[\Delta_B(\bx) \subseteq \Delta_A(\bx).\]


Let $k_A \in \argmax_{j\in [d]} \frac{w_j}{c_{A,j}}$ and $k_B \in \argmax_{j\in [d]} \frac{w_j}{c_{B,j}}$, so that $g^A_1$ and $g^B_1$ are defined as
\[\sum_{i=1}^d w_i y_i = \tau + \frac{w_{k_A}}{c_{A,k_A}} \iff g^A_1: \sum_{j=1}^d g^A_{1,j} y_j = g^A_{1,0}, \]
\[\sum_{i=1}^d w_i y_i = \tau + \frac{w_{k_B}}{c_{B, k_B}} \iff g^B_1: \sum_{j=1}^d g^B_{1,j} y_j = g^B_{1,0} .\]
Then since for all $i \in [d]$, $c_{A, i} \le c_{B, i}$, we must have that 
\[\tau + \frac{w_{k_A}}{c_{A,k_A}}  \ge \tau + \frac{w_{k_B}}{c_{A,k_B}} \ge \tau + \frac{w_{k_B}}{c_{B, k_B}},\] and thus $g^A_{1, 0} \ge g^B_{1,0}$. Since $f^A_1$ is the classifier based on $g^A_1$ and $f^B_1$ is based on $g^B_1$, we have that $\forall \by \in [0,1]^d$, \[f^A_1(\by) = 1 \implies  f^B_1(\by) = 1.\]

Now consider the case in which $h_A$ and $h_B$ differ. Recall the assumption $h_A(\bx) = 1 \implies h_B(\bx) = 1$ for all $\bx \in [0,1]^d$. Thus for all $\bx \in[0,1]^d$, 
\begin{equation}
\label{condition-ordering}
\sum_{i=1} w_{A, i} x_i \ge \tau_A \implies \sum_{i=1} w_{B, i}x_i\ge \tau_B.
\end{equation}
Recall that the hyperplanes ${g}_1^A$, ${g}_1^B$ are constructed as shifts of $\sum_{i=1} w_{A, i} x_i \ge \tau_A$ and  $\sum_{i=1} w_{B, i}x_i\ge \tau_B$ by the set of simplices $\{\Delta_A(\bar{\bx}_A)\}$ and $\{\Delta_B(\bar{\bx}_B)\}$ for $\bar{\bx}_A$ such that $\sum_{i=1} w_{A, i} \bar{x}_{A,i} = \tau_A $ and $\bar{\bx}_B$ such that $\sum_{i=1} w_{B, i} \bar{x}_{B,i} = \tau_B$. Since $\Delta_B(\bx) \subseteq \Delta_A(\bx)$, ${g}_1^A$ and ${g}_1^B$ support classifiers $f_1^A$ and $f_1^B$ such that \[f_1^A(\by) = 1\implies f_1^B(\by) = 1.\]
\end{proof}

\textbf{Claim 3:} All undominated classifiers commit no false negative errors on group $A$ members and no false positive errors on group $B$ members when candidates best respond. 
\begin{proof}
Fix a classifier $f$ and consider a group $A$ candidate with true feature vector $\bar{\bx}$ who manipulates to best response $\bar{\by}$ such that $h_A(\bar{\bx}) = 1$ but $f(\bar{\by}) = 0$.  Thus the classifier $f$ makes a false negative error on this candidate. We show that we can construct another classifier $\hat{f}$ that correctly classifies $\bar{\bx}$ under its optimal manipulation with respect to $\hat{f}$.

We prove that $\hat{f}$ commits no more errors than does $f$ and commits strictly fewer errors since it commits no false negatives on group $A$ candidates.

Construct the classifier $\hat{f}$ such that
\begin{align}
\label{undominated-b}
\hat{f}(\by) = \begin{cases}
1 &  f(\by) = 1 \text{ or } f^A_1(\by) = 1, \\
0 & \text{ otherwise,}
\end{cases}
\end{align}
where $f^A_1(\by)$ is based on the boundary $\sum_{j=1}g^A_{1,j}y_j = g^A_{1,0}$.


We first argue that $f$ and $\hat{f}$ make exactly the same set of false positive errors.  

Consider a potential false positive error that $\hat{f}$ issues on a candidate with feature $\bx$ from group $A$. Such a candidate cannot manipulate to a feature $\by$ to ``trick'' classifier $f_1^A$, since we have shown in Claim 1 that $f_1^A$ perfectly classifies all group $A$ candidates, and thus does not admit false positives. Thus any potential false positive error must be due to $f(\by) = 1$, in which case $\hat{f}$ and $f$ issue the same false positive error.

Now we consider a potential false positive error that $\hat{f}$ issues on a candidate with feature $\bx$ from group $B$. By Claim 2, $f_1^A(\by) = 1 \implies f_1^B(\by) = 1$, and thus we would have that the candidate with feature $\bx$ was able to manipulate to some feature $\by$ such that $f_1^B(\by) = 1$. But this is a contradiction, since we know that $f_1^B$ commits no false positives on group $B$ members, and thus $f_1^A(\by)$ does not commit false positives on group $B$. Thus if $\hat{f}$ commits a false positive, then it must be the case that $f$ committed the same false positive. 

Consider a potential false negative error that $\hat{f}$ issues on a candidate with feature $\bx$ from group $B$. Then it must be the case that $\bx$ can manipulate to some $\by$ such that \emph{both} $f(\by) = 0$ and $f_1^A(\by)=0$, and thus it be the case that ${f}$ commits the same false negative.

Lastly, consider a potential false negative error on a candidate from group $A$. By claim 1, this candidate must have been able to manipulate to some feature vector $\by$ such that $f_1^A(\by) = 1$, since $f_1^A$ commits no errors on group $A$ members. Thus when a candidate with unmanipulated feature $\bx$ can manipulate to some $\by$ such that $f_1^A(\by) = 1$ yet can only present a (possibly different) feature $\by$ such that $f(\by) =0$, then $\hat{f}$ correctly classifies this candidate positively, even when $f$ does not. Thus $\hat{f}$ makes no false negative errors on group $B$. 

\ignore{
Now consider the group $A$ candidate that is a true positive but is incorrectly classified $f(\by)=0$. As we have already shown in Claim 1, $f_1^A(\by)$ commits no errors of either type on candidates from group $A$. Thus if a candidate from group $A$ presenting $\by$ is a genuine positive, she will be classified correctly by $f_1^A$ such that $f_1^A(\by)=1$. As a result, $\by$ is classified positively by $\hat{f}$, correcting the false negative error issued by $f$. $\hat{f}$ does not commit any more false negatives on group $B$ than does $f$ since it will only classify a candidate negatively if both $f(\by) = 0$ and $f^A_1(\by) =0$. 
}

Thus $\hat{f}$ commits strictly fewer errors than $f$---none of which are false negatives on group $A$ members---and $f$ is dominated by $\hat{f}$. \checkmark \\

The second half of the claim can be proved through an analogous argument.  
\end{proof}

%

\ignore{
Now we consider a group $B$ candidate who presents feature $\by \in [0,1]^d$ and a classifier $f$ such that $f(\by) = 1$ even though $\sum_{i=1} w_{B,i} y_i - \frac{w_{B, k_B}}{c_{B,k_B}} < \tau_B$ where $k_B \in \argmax_{i\in[d]}\frac{w_{B, i}}{c_{B, i}}$. Thus $f$ commits a false positive on this candidate. We show that we can construct another classifier $\hat{f}$ such that correctly issues $\hat{f}(\by) = 0$. We prove that $\hat{f}$ commits no more errors than does $f$ and commits strictly fewer errors since it commits no false positives on group $B$ candidates.\\

Construct the classifier $\hat{f}$
\begin{align}
\label{undominated-b}
\hat{f}(\by) = \begin{cases}
0 &  f(\by) = 0 \text{ or } f^B_1(\by) = 0 \\
1 & \text{ otherwise}
\end{cases}
\end{align}
where $f^B_1(\by)$ is based on the boundary $\sum_{j=1}g^B_{1,j}y_j = g^B_{1,0}$.

Notice that ${f}$ is more lenient than $\hat{f}$ since 
\[f(\by) =0 \implies \hat{f}(\by) = 0\]

If $f(\by)$ commits false negatives on members of either candidates, then $\hat{f}$ makes these same errors.

Now consider the group $B$ candidate that is a true negative but is incorrectly classified $f(\by) = 1$. As we have already shown in Claim 2, $f_1^B(\by)$ commits no errors of either type on candidates from group $B$.  Thus if a candidate from group $B$ presenting $\by$ is a genuine negative, she is correctly classified negatively by $f_1^B$ such that $f_1^B(\by)=0$. As a result, $\by$ is classified negatively by $\hat{f}$, correcting the false positive error issued by $f$. $\hat{f}$ does not commit any more false positives on group $A$ since it will only classify a candidate positively if both $f(\by) = 1$ and $f_1^B(\by) = 1$. 

$\hat{f}$ commits strictly fewer errors than $f$---none of which are false positives on group $B$ members, and $f$ is dominated by $\hat{f}$. \checkmark \\

Thus any undominated strategy must commit no false negatives on group $A$ and no false positives on group $B$. 
} 


Combining Claims 1 and 3, we conclude that we can construct perfect classifiers for group $A$ that commit only false negative errors on group $B$ and perfect classifiers for group $B$ that commit only false positive errors on group $A$. $f_1^A$ and $f_1^B$ are examples of such classifiers, though they are not unique.  \\

\qed

\subsubsection{Proof of Lemma~\ref{linear-d}}
$\implies$ direction: Assume a group $m$ candidate with feature $\bx$ can move to $\by$ such that $f(\by) = 1$ and $c_m(\by) - c_m(\bx) \le 1$, we show that necessarily $\bx \ge \ell$ for some $\ell \in \mathcal{L}_m(g)$. \\

If $\bx$ can move to $\by$, then $\bx \in \Delta^{-1}(\by)$. By the definition of $\ell_m(\by), \bx \ge \bar{\bx}$ for some $\bar{\bx} \in \ell_m(\by)$. Then by monotonicity of $g$, we have that 
\[\sum_{i=1}^d g_i x_i \ge \sum_{i=1}^dg_i \bar{x}_i \ge  \min_{x\in\ell_m(\by)} \sum_{i=1}^dg_i {x}_i \]
Thus $\bx \ge \ell$ for some $\ell \in \mathcal{L}_m(g)$. \checkmark \\

$\impliedby$ direction: Assume some group $m$ candidate has feature $\bx \ge \ell$ for some $\ell \in \mathcal{L}_m(g)$. Then she can move to some $\by$ such that $f(\by) = 1$ and $c_m(\by) - c_m(\bx) \le 1$.\\

If $\bx\ge \ell$ for some $\ell \in \mathcal{L}_m(g)$, then 
\[\sum_{i=1}^d g_i x_i \ge \sum_{i=1}^d g_i \ell_i ,\]
where $\ell \in \Delta^{-1}_m(\by)$ for some $\by$ such that $\sum_{i=1}g_iy_i = g_0$ and $f(\by) = 1$. Since $\ell$ is defined as $\argmin_{x \in\ell_m(\by)}\sum_{i=1}g_ix_i$, then we have
\[\sum_{i=1}^d g_i \ell_i = \sum_{i=1}^d \big(g_i y_i - \max_{t_i} \sum_{i=1}^d\frac{g_i t_i}{c_{m,i}}\big) ,\]
where $t_i \ge 0$ and $\sum_{i=1}t_i = 1$ as shown before. Then substituting $\sum_{i=1}^d g_i y_i = g_0$, we have that
\[\sum_{i=1} g_i \ell_i + \frac{g_{k_m}}{c_{m,k_m}} = g_0 ,\]
where $ k_m \in \argmax_{i=[d]} \frac{g_{i}}{c_i}$. Since $\bx \ge \ell$, $\bx$ can also manipulate to some $\by$ with $f(\by) = 1$, bearing a cost $\le 1$. 
\qed

\subsubsection{Proof of Proposition \ref{cost-d-dim}}

If a learner publishes an undominated classifier $f$, then by Theorem \ref{theorem:d_opt_learner}, the hyperplane $g: \bm{g}^\intercal \bx = g_0$  that supports this classifier can only commit inequality-reinforcing errors: only false positives on group $A$ members and only false negatives on group $B$ members. 

As proved in Lemma \ref{linear-d}, the set $\mathcal{L}_m(g)$ determines the effective threshold on unmanipulated features $\bx$ for a candidate of group $m$. We have already shown that for any two $\ell_1, \ell_2 \in \mathcal{L}_m(g)$, \[\sum_{i=1}g_i \ell_{1,i} =\sum_{i=1}g_i \ell_{2,i} = g_0 -\frac{g_{k_m}}{c_{k_m}}\] where $ k_m \in \argmax_{i=[d]} \frac{g_{i}}{c_{m,i}}$. For any $\ell \in \mathcal{L}_B(g)$, we have
\[\sum_{i=1}^dg_i\ell_i + \frac{g_{k_B}}{c_{B, k_B}}= g_0\]
Thus combining these results, those group $B$ candidates with features $\bx \in [0,1]^d$ in the intersection 
\[ \bm{g}^\intercal \bx < g_0 - \frac{g_{k_B}}{c_{B, k_B}} \bigcap \bw_B^\intercal \bx \ge \tau_B  \]
are classified as false negatives. For group $A$, we consider $\ell \in \mathcal{L}_A(g)$:
\[\sum_{i=1}^dg_i\ell_i + \frac{g_{k_A}}{c_{A, k_A}}= g_0\]
and thus group $A$ candidates with features $\bx \in [0,1]^d$ in the intersection 
\[\bw_A^\intercal \bx < \tau_A  \bigcap  \bm{g}^\intercal \bx \ge g_0 - \frac{g_{k_A}}{c_{A, k_A}} \]
are classified as false positives. Thus the cost publishing $g$ is
\begin{align*}
&C_{FN} P_{x \sim \mathcal{D}_B}\big[ \bx \in \big(\bm{g}^\intercal \bx < g_0 - \frac{g_{k_B}}{c_{k_B}} \bigcap \bw_B^\intercal \bx \ge \tau_{B}\ \big) \big] \\
&+ C_{FP} P_{x \sim \mathcal{D}_A}\big[\bx \in \big(\bw_A^\intercal \bx < \tau_{A} \bigcap\bm{g}^\intercal \bx \ge g_0  - \frac{g_{k_A}}{c_{k_A}} \big) \big] 
\end{align*}
\qed

\subsection{Proofs from Section~\ref{sec:subs}}

\subsubsection{Reduction from the $d$-dimensional setting to the one-dimensional setting} We first show that under certain conditions of a learner's equilibrium classifier strategy, a $d$-dimensional subsidy analysis is equivalent to a one-dimensional subsidy analysis.
\label{reduction}

In general $d$-dimensions, those features $\by$ attainable from an unmanipulated feature $\bx \in [0,1]^d$, where $f(\bx) = 0$, is given by
\[ \by \le \bx + \sum_{i=1}^d \frac{t_i}{c_i}\be_i  \text{ where } \sum_{i=1}^d t_i = 1\]
where the right hand side gives the simplex $\Delta(\bx)$ of potential manipulation. By Lemma \ref{lemma:opt_candidate}, if a candidate moves from $\bx$ to $\by \neq \bx$, then she selects $\bm{t}$ such that $t_j=0$ for all $j\notin K = \argmax_{i=[d]}\frac{g_i}{c_i}$. Staying within the simplex implies $\sum_{i=1}^d t_i \le 1$.

Increasing the candidate's available cost to expend from $1$ to $n$ increases her range of motion such that now she can move to any \[ \by \le \bx + \sum_{i=1}^d \frac{t_i}{c_i}\be_i  \text{ where } \sum_{i=1}^d t_i = n\] She continues to manipulate in the spirit of Lemma \ref{lemma:opt_candidate}---optimal moves entail choices of $\bm{t}$ such that $t_j = 0$ for all $j\notin K$---however now, she is willing to manipulate if $\exists i \in [d]$ such that 
\[f(\bx + \frac{n}{c_i}\be_i) = 1\]
and thus chooses $\bm{t}$ such that $\sum_{i=1}t_i \le n$. 

Since offering a subsidy does not change the form of the group $B$ cost function, a candidate from group $B$ will pursue the same manipulation strategy given by the vector $\bm{t}$ under subsidy regimes as long as the classifier's decision boundaries stay the same. By definition, all such choices of $\by$ resulting from a manipulation via $\bm{t}$ have equivalent values $\bm{g}^\intercal \by$. 

When costs are subsidized through a flat $\alpha$ or a proportional $\beta$ subsidy, a candidate with feature $\bx$ can manipulate to any $\by_\alpha, \by_\beta \ge \bx$ that satisfies
\begin{align}
\label{max-alpha-d}
\by_\alpha &\in [\bx, \bx + \sum_{i=1}^d \frac{t_i}{c_i}\be_i]  \text{ where } \sum_{i=1}^d t_i = 1 + \alpha \\
\label{max-beta-d}
\by_\beta &\in [\bx, \bx + \sum_{i=1}^d \frac{t_i}{c_i}\be_i]  \text{ where } \sum_{i=1}^d t_i = \frac{1}{\beta}
\end{align}

We can pursue a dimensionality reduction by mapping each feature $\bx \in [0,1]^d$ to $\bm{g}^\intercal\bx \in \mathbb{R}_+$. Rather than considering an optimal manipulation in $d$-dimensions from $\bx$ to $\by$, we instead consider the relationship between the cost of the manipulation and the change from $\bm{g}^\intercal \bx$ to $\bm{g}^\intercal\by$:
\[\sum_{i=1}^d c_i (y_i - x_i) \iff \sum_{i=1}^d {g_i}(y_i-x_i) \]
where $g_i$ gives the coefficients of the linear decision boundary that supports $f$, and $\bx$ optimally manipulates to $\by$. We want to show that such a relationship is linear.

Consider optimal manipulations: If a candidate chooses not to manipulate at all, she will incur a cost of $0$ and will also move from $\sum_{i=1}^d g_i (y_i - x_i) = 0$. Since optimal manipulations (under any ``budget" constraint) only are along $k^{\text{th}}$ components, a move from $\bx$ to $\by$ always entails a total cost of
\[\sum_{i\in K}^d c_i(y_i - x_i) \]
accompanied with 
\[ \sum_{i\in K}^d g_i(y_i - x_i)=\bm{g}^\intercal (\by - \bx) \]
Thus we can write her total cost $c$ for a move from $\bx$ to $\by$ as 
\begin{equation}
\label{one-d-cost}
\frac{c_k}{g_k}(\bm{g}^\intercal \by - \bm{g}^\intercal\bx) 
\end{equation}
for any $k \in K$. Recall that by Lemma \ref{lemma:opt_candidate}, optimal non-stationary manipulations move from $\bx$ to $\by>\bx$ such that $\sum_{i=1}^dg_iy_i= g_0$, so in these cases, we can also write the above as
\[\frac{c_k}{g_k} (g_0 - \bm{g}^\intercal\bx) \]
Thus we can consider candidates' unmanipulated $d$-dimensional features $\bx$ as one-dimensional features $\bm{g}^\intercal \bx$ and classifiers $f$ based on $d$-dimensional hyperplanes $g: \sum_{i=1}^d g_ix_i = g_0$ as imposing one-dimensional thresholds $g_0$. 

However a learner may also choose a different optimal subsidy strategy, thus publishing a classifier that now admits candidates differently. Formally, suppose a learner first publishes a classifier $f_1$ based on a decision boundary $g_1: \sum_{i=1}^d g_{1,i}x_i = g_{1,0}$ to which a candidate's optimal response follows the form given in Lemma \ref{lemma:opt_candidate} with $k_1 \in \argmax_{i\in [d]} \frac{g_{1,i}}{c_i}$. If a learner then chooses to change her strategy when implementing a subsidy, thus publishing a different classifier $f_2$ based on decision boundary $g_2: \sum_{i=1}^d g_{2,i} x_i = g_{2,0}$, a candidate's optimal manipulation strategy will continue to adhere to Lemma \ref{lemma:opt_candidate}, however, now, $k_2 \in \argmax_{i\in [d]} \frac{g_{2,i}}{c_i}$. Whereas the corresponding one-dimensional cost function $c(\by) - c(\bx)$ for best-response manipulations when facing classifier $f_1$ was given by 
\[\frac{c_{k_1}}{g_{1,k_1}} ( \bm{g}_1^\intercal(\by - \bx)) \]
Her corresponding cost function when facing classifier $f_2$ is
\[\frac{c_{k_2}}{g_{2,k_2}} (\bm{g}_2^\intercal(\by-\bx)) \]
When these cost functions are the same, as when the coefficients $g_{1,i} = g_{2,i}$ for all $i$, the agent's strategies when facing $f_1$ and $f_2$ are identical when reduced to one-dimension. This case arises, for example, when the learner continues to perfectly classify a single group in both the non-subsidy regime and the subsidy regime. In these cases, we can transition to considering just one-dimensional manipulations from $\bm{g}^\intercal \by$ to $\bm{g}^\intercal\bx$, where candidates bear linear costs of manipulation given in (\ref{one-d-cost}).\\

%
%
%

\ignore{

\subsubsection{Proof of Theorem \ref{surprising1}}

\begin{proof-sketch}
For simplicity of explanation, we suppose uniform distributions $\mathcal{D}_A = \mathcal{D}_B$. Suppose the learner adopts a flat $\alpha$ subsidy plan. The benefit allows some group $B$ candidates to bear more manipulation costs, and as such, more candidates should now be able to manipulate to receive a positive classification. Formally, for all thresholds $\sigma$, the effective threshold on unmanipulated features is lower under the subsidy: $\ell_B^\alpha(\sigma) < \ell_B(\sigma)$.

If costs $c_A$ and $c_B$ are concave, then $c_B^{-1}$ is convex, and thus 
\begin{equation}
\label{compare-alpha-b}
\ell_B^{\alpha\prime}(y) < \ell_B^\prime(y).
\end{equation}
This condition signifies that as a learner increases her threshold $\sigma$, this stricter classification has a more drastic effect on group $B$ members' abilities to reach the threshold in the no-subsidy regime than in the subsidy regime. Now consider group $A$ costs such that 
\begin{equation}
\label{compare-a-b}
\ell_A'(y) < \ell_B'(y),
\end{equation}
for all $y \in (0,1)$. Then the optimal no-subsidy learner threshold is $\sigma_1^* = \sigma_B$, since the effect of a increasing the threshold $\sigma$ generates more false negatives on group $B$ candidates that it does decrease false positives on group $A$ candidates. This effect discourages an error-minimizing learner from setting a higher threshold.

As the learner offers a larger subsidy, the effective threshold on group $B$ candidates lowers. Formally, since $c_B^{-1}$ is convex, $\ell_B^{\alpha}(\sigma)$ is monotonically decreasing in the subsidy size $\alpha$, which makes $\ell_B^{\alpha\prime}(\sigma)$ decreasing in $\alpha$. If costs $c_A$ and $c_B$ are smooth and satisfy
\[\frac{(c_B)'(y_1)}{(c_A)'(y_1)} > \frac{(c_B)'(y_2)}{(c_A)'(y_2)} \]
for all $y_1 < y_2$, then the leaner can select a large enough subsidy $\alpha$, while keeping $\ell_B^{\alpha}(\sigma) \ge \tau_B$, such that 
\[ \ell_B^{\alpha\prime}(\sigma) < \ell_A'(\sigma) < \ell_B'(\sigma).\] 
Now a learner who wants to minimize her error rate sets a higher threshold for positive classification after implementing the subsidy plan. The learner now rationally selects her new optimal subsidy threshold to be $\sigma_2^* = \sigma_A$. 

If group $B$ costs are subsidized to $\ell_B^\alpha(\sigma_2^*) > \tau_B$, the optimal threshold has moved from $\sigma_1^*$ to $\sigma_2^*$ where \[c_B(\sigma_2^*) - c_B(\sigma_1^*) > \alpha.\] While group $B$ was perfectly classified under $\sigma_1^*$, the no-subsidy threshold, now under the subsidy regime threshold $\sigma_2^*$, group $B$ candidates with unmanipulated features \[x \in [\tau_B,\ell_B^{\alpha}(\sigma_2^*))\] are mistakenly excluded; these candidates are strictly worse-off under the subsidy plan. Candidates with features \[x \in [\ell^{\alpha}_B(\sigma_2^*), c_B^{-1}((c_B(\sigma_2^*) - \alpha)]\] must pay higher costs even when supplemented with the subsidy benefit in order to classified positively since reaching the new threshold $\sigma_2^*$ entails an additional cost of $>\alpha$; thus they also experience a utility decline. Candidates with features \[x \ge c_B^{-1}((c_B(\sigma_2^*) - \alpha)\] are indifferent between the two regimes.

Then since no group $B$ candidates experience a utility improvement due to the subsidy payments, while many are strictly worse-off, then it is clear that \[W(B, \sigma_2^*, \alpha) \le W(B, \sigma_1^*)\] and group $B$ as a whole experiences a welfare decline under the subsidy regime.  \\


Moving to analyze the effects of the new threshold on group $A$, note that since \[\ell_A(\sigma_1^*)< \ell_A(\sigma_2^*) = \tau_A, \] some group $A$ candidates lose their previous false positive classifications, while all others who manipulate under the new subsidy regime must pay a higher cost to receive the same classification. Since these candidates do not receive any subsidy benefits, we have that $W(A, \sigma_2^*) < W(A, \sigma_1^*)$, and thus group $A$ is also worse-off in the subsidy regime. 

It is interesting to note that if the learner had subsidized group $B$ such that $\ell_B^\alpha(\sigma_2^*) = \tau_B$, all $B$ candidates would be indifferent between the no-subsidy case with threshold $\sigma_1$ and the subsidy case with threshold $\sigma_2^*$. However, group $A$ candidates would still be strictly worse-off under the subsidy regime.
\end{proof-sketch}


\begin{proof}
We first provide a full proof of a one-dimensional general costs case in which both groups $A$ and $B$ are made worse-off by the implementation of a subsidy plan.\\

Suppose the learner adopts a flat $\alpha$ subsidy plan. The benefit allows some group $B$ candidates to bear more manipulation costs, and as such, more candidates should now be able to manipulate to receive a positive classification. Formally, for all thresholds $\sigma$, the effective threshold on unmanipulated features is lower under the subsidy: $\ell_B^\alpha(\sigma) < \ell_B(\sigma)$ since these functions are defined  for all $y \in (0,1)$ as
\begin{align*}
\ell_B^\alpha(y) &= c_B^{-1}\big( c_B(y) - (1+ \alpha)\big)\\
\ell_B(y) &= c_B^{-1}\big( c_B(y) - 1)\big)
\end{align*}
If costs $c_A$ and $c_B$ are concave, then $c_B^{-1}$ is convex, and thus 
\begin{equation}
\label{compare-alpha-b}
\ell_B^{\alpha\prime}(y) < \ell_B^\prime(y)
\end{equation}
The first inequality signifies that as a learner increases her threshold $\sigma$, this stricter classification has a more drastic effect on group $B$ members' abilities to reach the threshold in the no-subsidy regime than in the subsidy regime. Now consider group $A$ costs such that 
\begin{equation}
\label{compare-a-b}
\ell_A'(y) < \ell_B'(y)
\end{equation}
for all $y \in (0,1)$. The optimal no-subsidy learner threshold is $\sigma_1^* = \sigma_B$, since the effect of increasing threshold $\sigma$ generates more false negatives on group $B$ candidates that it does decrease false positives on group $A$ candidates. This effect discourages an error-minimizing learner from setting a higher threshold.

As the learner offers a larger subsidy, the effective threshold on group $B$ candidates lowers. Formally, since $c_B^{-1}$ is convex, $\ell_B^{\alpha}(\sigma)$ is monotonically decreasing in the subsidy size $\alpha$. Writing out the condition in (\ref{compare-alpha-b}) explicitly we have
\[\frac{(c_B)'(\sigma)}{c_B'\big(c_B^{-1}(c_B(\sigma) - (1+\alpha))\big)} < \frac{(c_B)'(\sigma)}{c_B'\big(c_B^{-1}(c_B(\sigma) - 1)\big)} \]
Then substituting in $\ell_B^{\alpha}(\sigma)$ and $\ell_B(\sigma)$, this becomes
\[\frac{(c_B)'(\sigma)}{c_B'\big(\ell_B^\alpha(\sigma)\big)} < \frac{(c_B)'(\sigma)}{c_B'\big(\ell_B(\sigma)\big)} \]
Similarly, we can rewrite the condition in (\ref{compare-a-b}) is
\[ \frac{(c_A)'(\sigma)}{c_A'\big(\ell_A(\sigma)\big)} < \frac{(c_B)'(\sigma)}{c_B'\big( \ell_B(\sigma)\big)} \]
We want to compare the left hand sides of the two preceding inequalities. When costs are concave, $c_B'(\ell_B^\alpha(\sigma))$ is increasing in $\alpha$, and thus $\ell_B^{\alpha\prime}(\sigma)$ is decreasing in $\alpha$. The condition
\[\frac{(c_A)'(\sigma)}{c_A'\big(\ell_A(\sigma)\big)} > \frac{(c_B)'(\sigma)}{c_B'\big(\ell_B^\alpha(\sigma)\big)}  \]
may hold whenever costs $c_A$ and $c_B$ are smooth and satisfy
\[\frac{(c_B)'(y_1)}{(c_A)'(y_1)} > \frac{(c_B)'(y_2)}{(c_A)'(y_2)} \]
for all $y_1 < y_2$, since the leaner can select a large enough subsidy $\alpha$, while keeping $\ell_B^{\alpha}(\sigma) \ge \tau_B$, such that 
\[\frac{(c_B)'\big(\ell_B^\alpha(\sigma)\big)}{(c_A)'\big(\ell_A(\sigma)\big)} > \frac{(c_B)'(\sigma)}{(c_A)'(\sigma)} \]
and thus maintain the ordering
\[ \ell_B^{\alpha\prime}(\sigma) < \ell_A'(\sigma) < \ell_B'(\sigma)\] 
such that a learner who wants to minimize her error rate sets a higher threshold for positive classification after implementing the subsidy plan. The learner now rationally selects her new optimal subsidy threshold to be $\sigma_2^* = \sigma_A$. 

In this case, if group $B$ costs are subsidized such that $\ell_B^\alpha(\sigma_2^*) > \tau_B$, the optimal threshold has moved from $\sigma_1^*$ to $\sigma_2^*$ such that \[c_B(\sigma_2^*) - c_B(\sigma_1^*) > \alpha\]

While group $B$ was perfectly classified under $\sigma_1^*$, the no-subsidy threshold, now under the subsidy regime threshold $\sigma_2^*$, group $B$ candidates with unmanipulated features \[x \in [\tau_B,\ell_B^{\alpha}(\sigma_2^*))\] are mistakenly excluded; these candidates are strictly worse-off under the subsidy plan. Further, many other candidates who, despite being able to reach the new threshold, also experience a utility decline since they must pay higher costs of manipulation in spite of the subsidy. These candidates with features \[x \in [\ell^{\alpha}_B(\sigma_2^*), c_B^{-1}((c_B(\sigma_2^*) - \alpha)]\] must pay higher costs since manipulating to the higher threshold $\sigma_2^*$ entails an additional cost $> \alpha$, and the subsidy benefit is not sufficient to cover this additional burden. Candidates with features \[x \ge c_B^{-1}((c_B(\sigma_2^*) - \alpha)\] are indifferent between the two regimes.

Then since no group $B$ candidates experience a utility improvement due to the subsidy payments, it is clear that \[W(B, \sigma_2^*, \alpha) \le W(B, \sigma_1^*)\] and group $B$ as a whole experiences a welfare decline under the subsidy regime.  \checkmark \\


Moving to analyze the effects of the new threshold on group $A$, note that since \[\ell_A(\sigma_1^*)< \ell_A(\sigma_2^*) = \tau_A \] some group $A$ candidates lose their previous false positive classifications, while all others who manipulate under the new subsidy regime must pay a higher cost to receive the same classification. Since these candidates do not receive any subsidy benefits, we have that $W(A, \sigma_2^*) < W(A, \sigma_1^*)$, and thus group $A$ is also worse-off in the subsidy regime. \checkmark

It is interesting to note that if the learner had subsidized group $B$ such that $\ell_B^\alpha(\sigma_2^*) = \tau_B$, all $B$ candidates would be indifferent between the no-subsidy case with threshold $\sigma_1^*$ and the subsidy case with threshold $\sigma_2^*$. However, group $A$ candidates would still be strictly worse-off under the subsidy regime. \\

Now we prove that a similar phenomenon can happen under linear costs. As we have already shown in Proposition  \ref{sub-linear}, the $d$-dimensional linear case can be reduced to the one-dimensional linear case, so we proceed in one-dimension.

As we showed in Proposition \ref{proportional-costs}, when group costs are linear, the interval length of errors given by $\ell_B(\sigma) - \ell_A(\sigma)$ is constant for all thresholds $\sigma \in [\sigma_B, \sigma_A]$. In fact, we can compute this interval. Suppose group $A$ costs are given by
\[c_A(y) - c_A(x) = a(y-x)\]
and group $B$ costs are given by 
\[c_B(y) - c_B(x) = b(y-x)\]
where by the cost condition, $b > a$. Then it is straightforward to show that for any threshold $\sigma$, we find that \[\ell_B(\sigma)  = \frac{b\sigma-1}{b} \text{ ; } \ell_A(\sigma)  = \frac{a\sigma-1}{a}   \] Thus the error interval resulting from a threshold $\sigma$ has length $I_1$:
\[I_1 = \frac{b-a}{ab}\]
Consider the effect of the implementation a proportional $\beta$ subsidy, now the error interval length is given by $I_2$:
\[I_2 = \frac{b-\frac{a}{\beta}}{ab}\]
Notice that as the learner takes on a larger portion of the burden such that $\beta$ decreases, the change in error interval length $I_2 - I_1$ increases. In other words, the greater the subsidy offering, the greater the reduction in errors.

Consider an error-minimizing learner in the no-subsidy case. Let's suppose ${\mathcal{D}_A} = {\mathcal{D}_B}={\mathcal{D}}$. Her optimal classification strategy is given by the following 
\[\sigma_1^* = \argmin_{\sigma_1 \in [\sigma_B, \sigma_A]}P_{x\sim \mathcal{D}}[x\in [\ell_A(\sigma_1), \ell_A(\sigma_1) + I_1)  ]\]
in the no-subsidy case. When she implements a proportional subsidy plan, she seeks 
\[\sigma_2^*= \argmin_{\sigma_2 \in [\sigma_B^\beta, \sigma_A]}P_{x\sim \mathcal{D}}[x\in [\ell_A(\sigma_2), \ell_A(\sigma_2) + I_2)  ]\]
in the subsidy case. Since $I_2 < I_1$, then optimal choices of $\sigma_1^*$ and $\sigma_2^*$ will depend greatly on the underlying distribution $\mathcal{D}$. But even without contorting the probability distributions,we can show that even in the uniform distribution case, whenever $\ell_B^\beta(\sigma_2^*) \ge \ell_B(\sigma_1^*)$, group $B$ as a whole experiences a welfare decline when the subsidy is implemented. Notably, even when the effective threshold on group $B$ candidates remains the same, such that all $B$ candidates receive the same classifications and are paid a subsidy to decrease their costs, the group is still worse-off because of an aggregate increase in manipulation expenditure! 

We prove this for the case where $\ell_B^\beta(\sigma_2^*) = \ell_B(\sigma_1^*)$, which implies that when the subsidy classifier is even more strict such that $\ell_B^\beta(\sigma_2^*) > \ell_B(\sigma_1^*)$, group $B$ suffers an even greater welfare decline. 

When $\ell_B^\beta(\sigma_2^*) = \ell_B(\sigma_1^*)$, we have that 
\[\sigma_2^* - \sigma_1^* = \frac{1}{b}(\frac{1}{\beta}-1)\]
Now we consider those candidates with features $x \in [\ell_B(\sigma_1^*), \sigma_1^*)$ who are still classified positively. Ignoring distribution concerns since we are working under a uniform distribution, this interval has length $\frac{1}{b}$. These candidates are still classified positively but now must manipulate to threshold $\sigma_2^*$ and also receive a subsidy for their efforts. We show that all such candidates are worse-off. Consider a candidate in this interval who had previously paid $c$ to manipulate to $\sigma_1^*$. Now under the new threshold $\sigma_2^*$, she must pay 
\[\beta c +  \frac{\beta b}{b}(\frac{1}{\beta}-1) \]
because she must manipulate to $\sigma_1$ bearing cost ${\beta c}$ and from there manipulate to $\sigma_2^*$ bearing cost $ \frac{\beta b}{b}(\frac{1}{\beta}-1)$. This is always $>c$ so long as $c<1$, which is true for all $x \in  [\ell_B(\sigma_1^*), \sigma_1^*)$, thus all candidates in this interval are worse-off.

All candidates who have features $x \in (\sigma_1^*, \sigma_2^*)$ are also strictly worse off, since previously, they did not have to manipulate at all to reach a positive classification, and now they all incur a cost $>0$ to reach $\sigma_2^*$. Candidates with features $x \ge \sigma_2^*$ are indifferent between the two regimes.

Once again we see that all group $B$ candidates are either indifferent or strictly worse-off, even in this case when all candidates receive the same classification! Thus for any even higher threshold $\sigma^* > \sigma_2^*$, then all group $B$ candidates would also be worse off. 
\[W(B, \sigma^*, \beta) < W(B, \sigma_1^*) \checkmark\] 
Candidates in group $A$ are clearly worse-off because the threshold is strictly greater and no one receives any subsidy benefits, so we have that for all $\sigma^* > \sigma_1^*$
\[W(A, \sigma^*) < W(A, \sigma_1^*) \checkmark \] 
We mention here there are also cases where $\ell_B^\beta(\sigma_2^*) < \ell_B(\sigma_1^*)$---that is, the effective threshold on group $B$ candidates actually decreases so that more receive positive classifications---but still the welfare declines. 
\end{proof}

} 

%

\subsubsection{Proof of Proposition \ref{surprising2}}
Working from the subsidy and no-subsidy comparisons given in Proposition \ref{surprising1}, we show that all three parties would have preferred the outcomes of a non-manipulation world to those in both of the manipulation cases. 

To facilitate comparisons of welfare across classification regimes, we formalize group-wide utilities in the following definition.

\begin{definition}[Group welfare under a proportional subsidy]
The average welfare of group $B$ under classifier $f_{prop}$ and a proportional subsidy with parameter $\beta$ is given by
\begin{equation*}
\begin{split}
W_B(f_{prop}, \beta) = & \int_{R_1} P_{x\sim \mathcal{D}_B}(x)dx\\
&+ \int_{R_2} \big(1 -  \beta(c_B(y(x)) - c_B(x)) \big) P_{x\sim \mathcal{D}_B}(x)dx,
\end{split}
\end{equation*}
\begin{align*}
W_A(f_{prop}, 1) = & \int_{R_1} P_{x\sim \mathcal{D}_A}(x)dx\\
&+ \int_{R_2} \big(1 - (c_A(y(x)) - c_A(x)) \big) P_{x\sim \mathcal{D}_A}(x)dx,
\end{align*}
where $y(x)$ is the best response of a candidate with unmanipulated feature $x$, $R_1$ sums over those candidates who are positively classified by $f_{prop}$ without expending any cost, and $R_2$ sums over those candidates who are positively classified after manipulating their features. Since group $A$ members do not receive subsidy benefits, their welfare form is the same across no-subsidy and subsidy regimes.

We use $W_A(f_{prop})$ to denote $W_A(f_{prop}, 1)$, the average welfare for group $A$ under classifier $f_{prop}$ with no subsidy.
\end{definition}

\begin{definition}[Group welfare in a non-manipulation setting]
The average welfare of group $m$ under classifier $f_0$ in a non-manipulation setting is given by
\begin{align*}
W_{m}(f_0) = & \int_{R} P_{x\sim \mathcal{D}_m}(x)dx
\end{align*}
where $R$ sums over candidates who are positively classified by $f_0$.
\end{definition}

\begin{proposition}
\label{surprising2-formal}
There exist cost functions $c_A$ and $c_B$ satisfying the cost conditions, learner distributions $\mathcal{D}_A$ and $\mathcal{D}_B$, true classifiers with threshold $\tau_A$ and $\tau_B$, population proportions $p_A$ and $p_B$, and learner penalty parameters $C_{FN}$, $C_{FP}$, and $\lambda$, such that
\[W_A (f_{prop}^*) < W_A (f_0^*), \qquad W_B(f_{prop}^*, \beta^*) < W_B (f_0^*),\]
\[W_A (f_1^*) < W_A (f_0^*), \qquad W_B(f_1^*) < W_B (f_0^*),\]
\[C(f_{prop}^*, \beta^*) > C(f_0^*), \qquad C(f_1^*) > C(f_0^*)\]
where $f_0$ is the equilibrium classifier in the non-manipulation regime, $f_1^*$ is the equilibrium classifier in the manipulation regime, and ($f_{prop}^*, \beta^*$) is the equilibrium classifier in the subsidy regime. The average welfare of each group, $W_m(\cdot)$, as well as the learner, $1-C(\cdot)$, is higher at the equilibrium of the non-manipulation game compared with the equilibria of the Strategic Classification Game with proportional subsidies and compared with the equilibrium of the Strategic Classification Game with no subsidies.
\end{proposition}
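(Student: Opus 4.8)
The plan is to prove this existence statement by reusing the explicit instance already analyzed in Theorem~\ref{surprising1} (the concave-cost example of Example~\ref{example-concave}: uniform $\mathcal{D}_A=\mathcal{D}_B$, $p_A=p_B=\tfrac12$, $C_{FN}=C_{FP}$, $\lambda=\tfrac34$, $c_A(x)=8\sqrt{x}+x$, $c_B(x)=12\sqrt{x}$, $\tau_A=0.4$, $\tau_B=0.3$). That example already pins down the no-subsidy equilibrium $f_1^*$ (threshold $\sigma_B$) and the subsidy equilibrium $(f_{prop}^*,\beta^*)$ (threshold $\sigma_A$, $\beta^*\approx0.558$), so the only genuinely new object is the non-manipulation regime. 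First I would characterize its equilibrium $f_0^*$: when manipulation is impossible the learner observes true features, so Proposition~\ref{1d-optimal} specializes (the budget of $1$ now buys no movement) to give undominated thresholds $[\tau_B,\tau_A]$ with errors confined to $[\tau_B,\tau_A)$ and total misclassification penalty $\tfrac12(\tau_A-\tau_B)$, \emph{independent} of the chosen threshold and with no manipulation or subsidy expenditure. Crucially, I would select the equilibrium $\sigma_0^*=\tau_B$: this admits every candidate with $x\ge\tau_B$ at zero cost and therefore simultaneously maximizes both groups' welfare within the undominated region, yielding $W_A(f_0^*)=W_B(f_0^*)=1-\tau_B$. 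I would note that this selection can be made the unique equilibrium by an infinitesimal asymmetry $C_{FN}>C_{FP}$, which leaves the manipulation- and subsidy-regime conclusions of Example~\ref{example-concave} intact.

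Next I would establish the four candidate-welfare inequalities using a single structural principle: in every manipulation regime a positively classified candidate is either already above threshold (admitted for free, as in non-manipulation) or must pay a strictly positive manipulation cost, whereas under $f_0^*$ nobody pays anything. Concretely, for group $B$ the no-subsidy threshold $\sigma_B$ still admits exactly $\{x\ge\tau_B\}$ but now forces those in $[\tau_B,\sigma_B)$ to pay $c_B(\sigma_B)-c_B(x)>0$, so $W_B(f_1^*)<1-\tau_B$; the subsidy threshold $\sigma_A$ both \emph{shrinks} the admitted set to $\{x\ge\ell_B^\beta(\sigma_A)\}$ (excluding $[\tau_B,\ell_B^\beta(\sigma_A))$) and charges the rest a positive subsidized cost, so $W_B(f_{prop}^*,\beta^*)<1-\tau_B$. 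For group $A$, the no-subsidy threshold admits $\{x\ge\ell_A(\sigma_B)\}$ but charges every manipulator, and since these costs strictly exceed the free-admission benefit on the extra interval one gets $W_A(f_1^*)<1-\tau_B$; the subsidy threshold $\sigma_A$ additionally contracts the admitted set to $\{x\ge\tau_A\}$ and still charges (group $A$ receives no subsidy), so $W_A(f_{prop}^*)<1-\tau_B$ as well. Each comparison reduces to showing that an integral of strictly positive manipulation costs, plus any loss of admitted mass, outweighs zero.

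Finally I would treat the learner. The no-subsidy case is immediate: $f_1^*$ pays pure misclassification cost $\tfrac12\bigl(\tau_A-\ell_A(\sigma_B)\bigr)$, which in the example exceeds $\tfrac12(\tau_A-\tau_B)$, so $C(f_1^*)>C(f_0^*)$. \textbf{The hard part will be the subsidy comparison.} Here $f_{prop}^*$ perfectly classifies group $A$ and mis-classifies only a \emph{small} $B$-interval, so its misclassification term is actually \emph{smaller} than $\tfrac12(\tau_A-\tau_B)$; the entire inequality $C(f_{prop}^*,\beta^*)>C(f_0^*)$ must therefore be driven by the subsidy payment $\lambda\,cost(f_{prop}^*,\beta^*)$. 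The delicacy is that this payment must land in a narrow window: it has to be large enough that the subsidy-regime total penalty rises above the non-manipulation level $\tfrac12(\tau_A-\tau_B)$, yet small enough that the learner still strictly prefers subsidizing to not subsidizing (i.e.\ $C(f_{prop}^*,\beta^*)<C(f_1^*)$), which is exactly what makes $(f_{prop}^*,\beta^*)$ the equilibrium invoked from Theorem~\ref{surprising1}. The concluding step is to evaluate $cost(f_{prop}^*,\beta^*)=(1-\beta^*)\int_{\ell_B^{\beta^*}(\sigma_A)}^{\sigma_A}\!\bigl(c_B(\sigma_A)-c_B(x)\bigr)\,dx$ explicitly for the example and verify that $\lambda\,cost$ lies strictly between $\tfrac12(\tau_A-\tau_B)$ minus the subsidy-regime misclassification cost and $C(f_1^*)$ minus that same misclassification cost, which simultaneously certifies the learner inequality and the consistency of the equilibrium.
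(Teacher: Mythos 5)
Your proposal is correct in strategy and would go through, but it follows a genuinely different instantiation than the paper's own proof. The paper establishes Proposition~\ref{surprising2-formal} via the \emph{linear}-cost construction (Example~\ref{example-linear} and the unnamed example that follows it): costs $c_A(x)=3x$, $c_B(x)=4x$, a no-subsidy equilibrium threshold $\sigma_1^*=0.64$ chosen to equalize false positives on $A$ with false negatives on $B$, a subsidy equilibrium at $\sigma_A=0.733$ with $\beta^*=0.806$, and a non-manipulation threshold $\sigma_0^*=0.35$ chosen by the same equalization rule; all six inequalities are then read off from explicit numbers (learner penalties $0.1<0.128<0.183$, and group-by-group welfare comparisons). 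You instead recycle the concave-cost Example~\ref{example-concave} and select the non-manipulation equilibrium at $\tau_B$, justified by an infinitesimal tilt $C_{FN}>C_{FP}$. Both are legitimate existence proofs, and note that the paper's linear instance also sits on a continuum of indifferent optimal thresholds (cf.\ Proposition~\ref{proportional-costs}), so your perturbation device is, if anything, the cleaner treatment of equilibrium selection. Two caveats about what your route still owes. First, your ``structural principle'' does not by itself give $W_A(f_1^*)<W_A(f_0^*)$: under $f_1^*$ group $A$'s admitted set strictly \emph{grows} (from $x\ge \tau_B=0.3$ to $x\ge \ell_A(\sigma_B)\approx 0.272$), so candidates in $[0.272,0.3)$ are strictly better off than under $f_0^*$, and one must check that the aggregate manipulation cost paid by group $A$ (the integral evaluates to roughly $0.061$) exceeds the gained admitted mass (roughly $0.028$); you gesture at this, and it does hold, but it is a computation, not a consequence of the principle. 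Second, your ``hard part'' is indeed the crux and indeed works, but only barely: under the normalization in which the subsidy expenditure is weighted by $p_B$ (the reading under which Example~\ref{example-concave}'s claimed equilibrium is even internally consistent, i.e.\ $C(f_{prop}^*,\beta^*)<C(f_1^*)$), one gets $C(f_{prop}^*,\beta^*)\approx 0.024+0.028\approx 0.052$, to be compared against $C(f_0^*)=\tfrac12 C_{FP}\,(\tau_A-\tau_B)=0.05$ and $C(f_1^*)=0.064$; the window you describe exists but has width only about $0.002$ on the non-manipulation side. The paper's linear example appears to have been chosen precisely because its margins are far less delicate, so if you carry out your plan, the numerics must be done carefully enough to certify that razor-thin inequality.
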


\begin{example} 
\label{example-linear}
Now we consider a case in which candidates have linear cost functions
$c_A(x) = 3x$ and $c_B(x) = 4x$.
To show that diminished welfare for both candidate groups can occur without requiring distortions of probability distributions or cost functions, we consider a learner who seeks to avoid errors on group $B$ in both the subsidy and the non-subsidy regimes by penalizing false negatives twice as much as false positives, with
$C_{FN} = \frac{2}{3}$, $C_{FP} = \frac{1}{3}$, and $\lambda = \frac{3}{4}$.  As in the previous example, we assume that the underlying unmanipulated features for both groups are uniformly distributed with $p_A = p_B = \frac{1}{2}$, and that $\tau_A = 0.4$ and $\tau_B = 0.3$.

Now the equilibrium learner classifier without subsidies is based on threshold
$\sigma_1^* = \sigma_B = 0.55$,
which perfectly classifies all candidates from group $B$, while permitting false positives on candidates from group $A$ with features
$x \in [0.217, 0.4)$.
Under a proportional subsidy intervention, the learner's equilibrium action is to choose threshold
$\sigma_{prop}^* = \sigma^\beta_B \approx 0.552$ and $\beta^* = 0.994$,
which again perfectly classifies $B$ candidates. Notice that now her optimal threshold commits fewer false positive errors on group $A$ members, while still committing false positives on those members with features
$x \in [0.219, 0.4)$.

Here, even when the learner has a cost penalty that is explicitly concerned with mistakenly excluding group $B$ candidates and then seeks to offer a subsidy benefit to further alleviate their costs, group $B$ members are still no better off. They receive the same classifications as before and it can be shown that all candidates who manipulate must spend more to reach the higher threshold, even while accounting for the subsidy benefit! 
Some group $A$ candidates are also worse off since the threshold has increased, and they receive no subsidy benefits. 
As before, only the learner gains from the intervention.
\end{example}

\begin{example}
This example is based on Example \ref{example-linear}. Now we consider the case a learner seeks $\sigma_1^* \in [\sigma_B, \sigma_A]$ where $\sigma_A = 0.733$ and $\sigma_B = 0.55$. Suppose she seeks to equalize the number of false positives she commits on group $A$ and the number of false negatives for group $B$ and thus chooses $\sigma_1^* = 0.64$ such that
\[\ell_A(\sigma_1^*) = 0.31\]
\[\ell_B(\sigma_1^*) = 0.39\]
Thus group $B$ candidates with features $x \in [0.3, 0.39)$ are mistakenly excluded, and group $A$ candidates with features $x\in [0.31, 0.4)$ are mistakenly admitted. 

Upon implementing a subsidy and minimizing the same error penalty as in Example 1, the learner selects an optimal proportional $\beta$ subsidy such that
\[\sigma_{prop}^* = \sigma_A = 0.733; \beta = 0.806\]
Under this regime, group $B$ members are worse-off because many more candidates now receive false negative classifications
\[x \in [0.3, 0.423) \]
Others who do secure positive classifications must pay more to do so. Candidates in group $A$ are now perfectly classified, though this actually entails a welfare decline, since some candidates lose their false positive benefits. The learner is also strictly better off with a total penalty decline 
\[C(\sigma_0^*) = 0.183 \rightarrow C(\sigma_{prop}^*, \beta^*) = 0.128\]
Recall that the learner's utility is given by $1 - C(\cdot)$. Thus we have that
 \[W_A(\sigma_{prop}^*, \beta^*) < W_A(\sigma_1^*)\] 
  \[W_B(\sigma_{prop}^*, \beta^*) < W_B(\sigma_1^*)\] 
   \[C(\sigma_1^*) > C(\sigma_{prop}^*, \beta^*)\] 
  Now consider a non-manipulation regime, in which the learner selects to equalize the number of false negatives for group $B$ and the number of false positives for group $A$, she now chooses a threshold on unmanipulated features
  \[\sigma_0^* = 0.35\]
  Some group $A$ candidates lose false positive benefits in the manipulation regime, though on the whole, the group fares better off because all those candidates with features
  \[x \in [0.39, 0.64)\]
  need not expend any costs in order to receive a positive classification.
  Group $B$ candidates are strictly better off since they both receive fewer false negatives and need not pay to manipulate. The learner is also better off here because she reduces her error down to $C(\tau^*) = 0.1$. Thus comparing the non-manipulation regime, the no-subsidy manipulation regime, and the subsidy regime, we have that utility comparisons for all three parties is given by
   \[W_A(\sigma_0^*)>W_A(\sigma_1^*)>W_A(\sigma_{prop}^*, \beta^*)\] 
  \[W_B(\sigma_0^*)>W_B(\sigma_1^*)>W_B(\sigma_{prop}^*, \beta^*) \] 
   \[ 1-C(\sigma_{0}^*)>1-C(\sigma_{prop}^*, \beta^*) > 1- C(\sigma_{1}^*)  \]
\end{example}

%

\ignore{

Let $\sigma_1^*$ be the optimal learner classifier threshold under the no-subsidy regime and $\sigma_2^*$ be the the optimal threshold under an implementation of a subsidy plan. Recall that whenever $\sigma_2^* \ge \sigma_1^*$, group $A$ is strictly worse-off, since these candidates do not receive subsidy benefits, yet do experience increased manipulation cost burdens due to the higher threshold of classification, with some candidate members even losing positive misclassifications. Group $B$ candidates, who are the beneficiaries of the subsidies are also not guaranteed to be made better-off. As proven in Proposition \ref{surprising1}, there also exist cases where even group $B$ candidates are worse-off. \\

Consider the proportional $\beta$ subsidy case with linear costs presented in the proof of Proposition \ref{surprising1}. By Proposition \ref{proportional-costs}, when costs are linear, all undominated classifiers based on thresholds $\sigma \in [\sigma_B, \sigma_A]$ generate the same interval length of errors. Thus optimal thresholds depend on the learner's error penalties and the underlying probability distributions of unmanipulated features. 

Let the learner's optimal no-subsidy threshold be
\begin{equation}
\label{opt-no-sub}
\sigma_1^* = \sigma_A - \frac{1}{b}(\frac{1}{\beta}-1)
\end{equation}
which bears an error cost of 
\begin{equation}
\label{no-sub-cost}
C_{FN} p_B P_{\mathcal{D}_B}\big[ x\in[\tau_B, \ell_B(\sigma_1^*))\big] + C_{FP} p_A P_{\mathcal{D}_A}\big[ x\in[\ell_A(\sigma_1^*), \tau_A)\big]  
\end{equation}
Let the optimal $\beta$ subsidy threshold be 
\begin{equation}
\label{opt-sub}
\sigma_2^* = \sigma_A
\end{equation}
where the learner commits errors with cost
\begin{equation}
\label{sub-cost}
  C_{FN} p_B P_{\mathcal{D}_B}\big[ x\in[\tau_B, \ell_B(\sigma_A)\big]
  \end{equation}

We have already shown  in the proof of Proposition \ref{surprising1} that under these conditions, both candidate groups prefer the no-subsidy regime to the subsidy regime, while the learner, by assumption, prefers the subsidy regime due to her decreased cost of misclassifications.\\

Now let us consider a setting in which manipulation is not permitted. A learner who has access to true thresholds can either perfectly classify all candidates if they all share the same true label thresholds or can select an optimal threshold $\tau^* \in [\tau_B, \tau_A]$, which will continue to commit inequality-reinforcing errors if it must be neutral between groups, but it will actually commit strictly fewer errors than a threshold on manipulated values $\sigma^* \in [\sigma_B, \sigma_A]$. This result is due to the cost ordering on groups, such that for any given $\sigma$ threshold on manipulated features, the effective threshold on group $A$ members will always be lower than that on group $B$ members:
\[\ell_A(\sigma) < \ell_B(\sigma)\]
and thus false positive errors on group $A$ are given by those candidates with features
\[x \in [\ell_A(\sigma), \tau_A)\]
and false negative errors on group $B$ are given by those candidates with features
\[x \in [\tau_B, \ell_B(\sigma))\]
for all $\sigma$ thresholds. Then notice that these error intervals always include an overlapping region such that a learner who sets a threshold $\sigma$, commits errors on candidates of \emph{both} groups who have unmanipulated features in the interval
\[ [\ell_A(\sigma), \ell_B(\sigma))\]
No optimal non-manipulation learner threshold $\tau^*$ admits these types of overlapping error regions.

Now we compare candidates' welfares under the no-subsidy manipulation regime and under the non-manipulation regime. Under the no-subsidy manipulation regime, Group $B$ candidates with features $x \in [\tau_B, \ell_B(\sigma_1^*))$ suffers false negatives while $x \in[\ell_B(\sigma_1^*), \sigma_1^*)$ are classified positively but must bear manipulation costs. Group $A$ candidates with features $x \in [\ell_A(\sigma_1^*), \tau_A)$ benefit from false positives while those with features $x \in [\tau_A, \sigma_1^*)$ are classified positively but must pay manipulation costs. 

Now we consider the effects of the non-manipulation threshold \[\tau^* = \ell_A(\sigma_1^*)\] 
It is important to note that the three optimal learner thresholds we are comparing across the three regimes
\begin{align*}
\sigma_1^* &= \sigma_A - \frac{1}{b}(\frac{1}{\beta}-1)\\
\sigma_2^* &=\sigma_A\\
\tau^* &= \ell_A(\sigma_1^*)
\end{align*}
generate similar misclassification costs for the learner. In the non-manipulation setting, the learner bears a lower cost by exactly
\begin{equation}
\label{cost-diff}
C_{FN} p_B P_{\mathcal{D}_B}\big[ x\in[\ell_A(\sigma_1^*), \ell_B(\sigma_1^*))\big]
\end{equation}
than in the no-subsidy case and a cost differing by
\[C_{FP} p_A P_{\mathcal{D}_B}\big[ x\in[ \ell_A(\sigma_1^*), \tau_A)\big] - C_{FN} p_B P_{\mathcal{D}_B}\big[ x\in[ \ell_A(\sigma_1^*), \tau_A)\big]
\]
compared to the subsidies case---suggesting that we need not distort $\mathcal{D}_A$ and $\mathcal{D}_B$ in order to arrive at the simultaneity of these optimal learner thresholds. That is, if the threshold $\sigma_1^*$ was optimally cost-minimizing in the no-subsidy manipulation regime, then a learner who faces the same misclassification costs can likely arrive at the thresholds $\sigma_2$ in the subsidy regime and $\tau^*$ in the non-manipulation regime. 

As shown in (\ref{cost-diff}), the learner is strictly better off in the non-manipulation regime compared to the manipulation regime with no subsidies. Without making any assumptions on the underlying distributions of unmanipulated features, we note that the learner's interval of errors also shrinks in the non-manipulation regime compared to the subsidy regime. \\

Now we compare candidate welfares in the non-manipulation setting and in the manipulation setting with no-subsidies. Group $B$ candidates are strictly better off in a non-manipulation setting, since they need not pay to manipulate their features and experience fewer negative errors. Those candidates with features
\[ x\in[\ell_A(\sigma_1^*), \ell_B(\sigma_1^*))\]
were incorrectly classified as negatives under the manipulation regime with threshold $\sigma_1^*$ but under the non-manipulation threshold $\tau^*$ are correctly positively classified.

Group $A$ candidates receive the exact same classifications in the two regimes, though they no longer need to manipulate to achieve these same classifications when manipulation is not permitted, and thus they are strictly better off as well.\\

Thus let $f_0^*$ be the optimal non-manipulation classifier based on threshold $\ell_A(\sigma_1^*)$, $f_1^*$ be the optimal manipulation classifier based on threshold $\sigma_1$, and $(f_\beta^*, \beta)$ based on the threshold $\sigma_A$ be the optimal manipulation classifier with a subsidy.  Since the learner's interval of committed errors is smaller under the non-manipulation regime compared to the subsidy regime, we need not make convoluted probability distribution arguments in order to conclude that there are cases in which a learner strictly prefers the manipulation regime to the the subsidy plan regime 
\[W(L, f_0^*) > W(L, (f_\beta^*,\beta^*)) > W(L, f_1^*)  \]

Both candidate groups also fare strictly better under the non-manipulation regime compared to the manipulation regime, and as already shown in Proposition \ref{surprising1}, the manipulation regime is preferred to the subsidy regime. Thus we have that 
\[W(A, f_0^*) > W(A, f_1^*) >W(A, f_\beta^*) \]
\[W(B, f_0^*) > W(B, f_1^*) >W(B, (f_\beta^*, \beta^*))  \]
\qed

} 

%

\subsection{Flat Subsidies}

Here we give analogous definitions and results for flat subsidies in which the learner absorbs up to a flat $\alpha$ amount from each group $B$ candidate's costs and show that qualitatively similar results hold.

\begin{definition}[Flat subsidy] 
Under a flat subsidy plan, the learner pays an $\alpha > 0$ benefit to all members of group $B$. As such, a group B candidate who manipulates from an initial score $\bx$ to a final score $\by \ge \bx$ bears a cost of $\max\{0, c_B(\by) - c_B(\bx) - \alpha\}$.
\end{definition}

A learner's strategy now consists of both a choice of $\alpha$ and a choice of classifier $f$ to issue. The learner's goal is to minimize her penalty
\begin{equation*}
\begin{split}
&C_{FP} \sum_{m \in \{A, B\}} p_m P_{\bx\sim\mathcal{D}_m}[h_m(\bx) = 0, f(\by) = 1] \\
& + C_{FN}\sum_{m \in \{A, B\}} p_m P_{\bx\sim\mathcal{D}_m}[h_m(\bx) = 1, f(\by) = 0] \\
& + \lambda cost(f, \alpha),
\end{split}
\end{equation*}

We can define
\[
\ell_B^\alpha(y) = c_B^{-1}\Big(c_B(y) - (1+\alpha)\Big).
\]
Under the $\alpha$ subsidy, for an observed feature $y$, the group $B$ candidate must have unmanipulated feature $x \ge \ell^\alpha_B(y)$.

From these functions, we define $\sigma_B^\alpha$ and $\sigma_B^\beta$ such that $\ell_B^\alpha(\sigma_B^\alpha) = \tau_B$, and $\ell_{B}^\beta(\sigma_B^\beta) = \tau_B$. Under a flat $\alpha$ subsidy, setting a threshold at $\sigma^\alpha_B$ correctly classifies all group $B$ members; under a proportional $\beta$ subsidy, a threshold at $\sigma^\beta_B$ correctly classifies all group $B$ members.

From this, we define $\sigma_B^\alpha$ such that $\ell_{B}^\alpha(\sigma_B^\alpha) = \tau_B$. Under a flat $\alpha$ subsidy, setting a threshold at $\sigma^\alpha_B$ correctly classifies all group $B$ members.

In order to compute the cost of a subsidy plan, we must determine the number of group $B$ candidates who will take advantage of a given subsidy benefit. Since manipulation brings no benefit in itself, candidates will still only choose to manipulate and use the subsidy if it will lead to a positive classification. For the flat $\alpha$ subsidy, $cost(f, \alpha)$ is given by
\begin{equation*}
\int_{c_B^{-1}(c_B(\sigma) - \alpha)}^\sigma
\!\!\!\!\!\!\!\![c_B(\sigma) - c_B(x)] P_{D_B}(x) dx + \alpha \int_{\ell_B^\alpha(\sigma)}^{c_B^{-1}(c_B(\sigma) - \alpha)} P_{D_B}(x)dx,
\end{equation*}
where $\sigma$ is the threshold for classifier $f$. The first integral refers to the benefits paid out to candidates with manipulation costs less than the $\alpha$ amount offered. The latter refers to the total sum of full $\alpha$ payments offered to those with costs greater than $\alpha$. 

\begin{definition}[Group welfare under a flat subsidy]
The average welfare of group B under classifier $f$ and a flat subsidy with parameter $\alpha$ is given by 
\begin{align*}
W_B(f, \alpha) = &\int_{R_1} P_{x\sim\mathcal{D}_B}(x) dx \\
&+ \int_{R_2} (1-c_B(y(x) - c_B(x))) P_{x\sim\mathcal{D}_B}(x) dx 
\end{align*}
where $y(x)$ is the best response of a candidate with unmanipulated feature $x$, $R_1$ sums over those candidates who are positively classified without expending any cost, and $R_2$ sums over those candidates who are positively classified after manipulating their features. Note that under the flat subsidy, group $B$ costs have the form $\max\{0,c_B(y) - c_B(x) - \alpha\}$ The formulation of average group $A$ welfare is the same in this setting and follows the same form given in Definition 5.
\end{definition}

\begin{theorem}[Subsidies can harm both groups]
\label{surprising1-alpha}
There exist cost functions $c_A$ and $c_B$ satisfying the cost conditions, learner distributions $\mathcal{D}_A$ and $\mathcal{D}_B$, true classifiers with threshold $\tau_A$ and $\tau_B$, population proportions $p_A$ and $p_B$, and learner penalty parameters $C_{FN}$, $C_{FP}$, and $\lambda$, such that
\[W_A (f_{prop}^*) < W_A (f_0^*), \qquad W_B(f_{prop}^*, \alpha^*) < W_B (f_0^*),\]
where $f_{prop}^*$ and $\alpha^*$ are the learner's equilibrium classifier and subsidy choice in the Strategic Classification Game with flat subsidies and $f_0^*$ is the learner's equilibrium classifier in the Strategic Classification Game with no subsidies.
\end{theorem}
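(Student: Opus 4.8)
The plan is to establish this existence statement, exactly as in the proportional case, by exhibiting a single concrete instance and verifying the two average-welfare inequalities directly; an explicit example suffices because the theorem only asserts existence. I would reuse the qualitative mechanism behind Example~\ref{example-concave}: with strictly \emph{concave} group costs, a flat subsidy lets the learner lower the effective threshold $\ell_B^\alpha(\cdot)$ on group $B$ cheaply, which in turn makes it optimal for her to raise her published threshold toward $\sigma_A$, and it is precisely this threshold increase that harms both groups. Concretely I would take square-root-type costs such as $c_A(x)=8\sqrt{x}+x$ and $c_B(x)=12\sqrt{x}$ (so the cost condition $c_A'<c_B'$ holds on $(0,1]$ and $c_B^{-1}$ is convex), uniform $\mathcal{D}_A=\mathcal{D}_B$ with $p_A=p_B=\tfrac12$, thresholds $\tau_B<\tau_A$, and penalties $C_{FN},C_{FP},\lambda$ chosen so that the no-subsidy optimum perfectly classifies group $B$.

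First I would pin down the no-subsidy equilibrium threshold $\sigma_1^*$. Using Proposition~\ref{1d-cost}, the learner's cost at threshold $\sigma$ is $C_{FN}p_B\,P_{\mathcal{D}_B}[x\in[\tau_B,\ell_B(\sigma))]+C_{FP}p_A\,P_{\mathcal{D}_A}[x\in[\ell_A(\sigma),\tau_A)]$; for the chosen concave costs and penalties I would verify that this is minimized at $\sigma_1^*=\sigma_B$, so group $B$ is perfectly classified while group $A$ receives false positives on $[\ell_A(\sigma_B),\tau_A)$. This fixes the baseline welfares, where group $B$ incurs a manipulation cost only on $[\ell_B(\sigma_B),\sigma_B)$.

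Next I would solve the learner's joint optimization over $(\sigma,\alpha)$, using the flat-subsidy cost $cost(f,\alpha)$ and the fact that the group-$B$ false-negative region at threshold $\sigma$ shrinks to $[\tau_B,\ell_B^\alpha(\sigma))$. The key comparative-statics step is to show that, because $c_B^{-1}$ is convex, a modest $\alpha$ buys a large reduction in $\ell_B^\alpha(\sigma)$ near $\sigma_A$, so the learner's best response is to move to $\sigma_2^*=\sigma_A$ (perfectly classifying group $A$) with some $\alpha^*>0$ chosen so that $\ell_B^\alpha(\sigma_A)\ge\tau_B$. Group $A$'s decline is then immediate: at $\sigma_2^*=\sigma_A$ the false positives on $[\ell_A(\sigma_B),\tau_A)$ are eliminated and every manipulating group-$A$ candidate faces a strictly higher threshold with no subsidy, giving $W_A(\sigma_2^*)<W_A(\sigma_1^*)$.

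The main obstacle is the group-$B$ \emph{average}-welfare comparison, which is exactly where the flat subsidy is weaker than the proportional one. I would partition group $B$ at the new threshold $\sigma_2^*$ into (i) candidates in $[\tau_B,\ell_B^\alpha(\sigma_2^*))$ newly excluded as false negatives (strict loss), (ii) candidates who remain positively classified but whose net cost $\max\{0,c_B(\sigma_2^*)-c_B(x)-\alpha^*\}$ exceeds their old cost $c_B(\sigma_1^*)-c_B(x)$ (loss), and (iii) candidates near $\sigma_2^*$ whose small residual cost is fully absorbed by $\alpha^*$ and who may in fact gain. Because region (iii) is nonempty I cannot claim individual monotonicity, which is precisely why the conclusion is stated for $W_B$ rather than for every candidate. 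I would therefore finish by integrating the per-candidate welfare change against the uniform density and checking that the aggregate is negative for the chosen parameters, i.e.\ that the threshold-induced cost increase dominates the subsidy transfer in expectation. Verifying this sign is the one genuinely computational step; the remaining claims reduce to monotonicity of $c_A,c_B$ and the definitions already established.
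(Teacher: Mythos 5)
Your proposal is correct and follows essentially the same construction as the paper's argument for this theorem: strictly concave costs so that the no-subsidy equilibrium sits at $\sigma_B$ (group $B$ perfectly classified, false positives on group $A$), a flat subsidy under which the learner's equilibrium jumps to the stricter threshold $\sigma_A$ with a partial subsidy $\alpha^*$ satisfying $\ell_B^{\alpha^*}(\sigma_A) > \tau_B$, and then a region-by-region welfare comparison. The one place where you work harder than necessary is the group-$B$ average. In the paper's construction the equilibrium satisfies $c_B(\sigma_A) - c_B(\sigma_B) > \alpha^*$, and under that condition your region (iii)---candidates whose residual cost is fully absorbed by $\alpha^*$---consists exactly of candidates with $x \ge c_B^{-1}\big(c_B(\sigma_A) - \alpha^*\big) > \sigma_B$, who paid nothing in the no-subsidy regime and pay nothing now; they are indifferent, not potential gainers. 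Hence no group-$B$ candidate improves while a positive-measure set strictly loses, and the average inequality follows pointwise, with no need to integrate the welfare change against the density. Conversely, note one caveat in your plan: you only require $\ell_B^{\alpha^*}(\sigma_A) \ge \tau_B$, but if this holds with equality then $c_B(\sigma_A) - \alpha^* = c_B(\sigma_B)$, so every group-$B$ candidate's classification and net cost are unchanged and $W_B(f_{prop}^*, \alpha^*) = W_B(f_0^*)$, violating the strict inequality in the theorem. Your computational verification must therefore confirm that the equilibrium $\alpha^*$ leaves $\ell_B^{\alpha^*}(\sigma_A)$ strictly above $\tau_B$, i.e., that the learner's optimal subsidy undercompensates for the threshold increase.
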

\ignore{
\subsubsection{Comparison with Proportional Subsidies}

Both the flat and proportional subsidy plans serve to equalize group costs, but when considering the learner's utility, is one type of intervention always superior to the other? We find that under general group costs, there is no such definitively more effective strategy. On the other hand, when group costs are linear, and a learner publishes a classifier based on a linear decision boundary, the proportional subsidy is always more efficient than the flat subsidy.

The following two prepositions are proved together.

\begin{proposition}
\label{sub-linear}
Suppose cost functions $c_A$ and $c_B$ are linear and a learner's utility is given by $W(L,f,s)$. Then for any $\lambda \ge 0$, the learner prefers the optimal proportional subsidy strategy $(f_\beta^*, \beta^*)$ to the optimal flat subsidy strategy $(f_\alpha^*, \alpha^*)$. 
\end{proposition}

\begin{proposition}
\label{proportional-sub}
When the Learner seeks to achieve a fixed error level $\Delta P_{err}$ and the optimal classifier is the same for both subsidy strategies, then there is a one-to-one mapping between the $\alpha$ and $\beta$ subsidies where $\frac{1}{\beta} = 1+ \alpha$. 
\end{proposition}

\begin{proof}
Moving to one-dimensional costs, suppose the learner wants to guarantee a fixed $\bar{P}_{err} \le P_{err}(\sigma^*)$ where $P_{err}(\sigma^*)$ gives the best misclassification rate attainable without a subsidy.  We will show that for any fixed error interval $(\ell_B(\sigma), \ell_A(\sigma))$, the proportional $\beta$ subsidy is less costly to implement than the flat $\alpha$ subsidy. 

Recall the result from Proportion \ref{proportional-costs}, in which we showed that when costs are linear and distributions $P_{\mathcal{D}_A} = P_{\mathcal{D}_B}$ were uniform, the error-minimizing learner is indifferent between any threshold $\sigma \in [\sigma_B, \sigma_A]$. We proceed by choosing the threshold $\sigma^*_\alpha = \sigma_\beta^* = \sigma_A$ and assume $\tau_A = \tau_B = \tau$. We will show that this is without loss of generality.

Choose subsidies $\alpha > 0 $ and $\beta \in [0,1)$ such that \[\ell_B^\alpha(\sigma_A) = \ell_B^\beta(\sigma_A) = \tau + \bar{P}_{err}\] so that both the flat $\alpha$ plan and the proportional $\beta$ plan result in the exact same misclassifications. From the definitions of $\ell_B^\alpha(\sigma_A)$ and $\ell_B^\beta(\sigma_A)$, their equality implies that for group $B$ candidates with linear cost constant $b$, 
\[ \frac{b\sigma_A - (1+\alpha)}{b} = \frac{\beta b \sigma_A - 1}{\beta b} \]
\[ \frac{1}{\beta} = 1+\alpha  \text{ (Proposition \ref{proportional-sub})} \checkmark \]

Since we assume uniform distribution of $\bx$, we drop distribution considerations and straightforward computations yield
\[cost(\sigma_A, \alpha) = \frac{\alpha^2 + 2\alpha}{b}\]
\[cost(\sigma_A, \beta) = \frac{1}{\beta b}(\frac{1}{\beta}-1)\]
Then plugging in $\frac{1}{\beta} = 1+\alpha$, we have that
\[cost(\sigma_A, \beta) = \frac{\alpha^2 + \alpha}{b}\]
and thus $cost(\sigma_A, \beta) < cost(\sigma_A, \alpha)$. 

When group B candidates have linear costs, the cost of implementing any subsidy regime is independent of the threshold chosen and only a function of the size of the subsidy chosen, $\alpha$ or $\beta$, and the slope of the cost function $b$. Thus it was without loss of generality to select any particular threshold. We also need not assume that distributions $P_{\mathcal{D}_A}$, $P_{\mathcal{D}_B}$ are uniform or that error penalties are symmetric, since achieving \emph{any} set of misclassifications is less expensive under the proportional $\beta$ subsidy.  \checkmark

 We also see that for any fixed budget $k$, the optimal proportional $\beta$ plan achieves a lower $P_{err}$ than does the optimal flat $\alpha$ plan follows by referring to the previous argument. Let $P_{err}^\alpha$ and $P_{err}^\beta$ be the optimal error rates achievable by implementing the optimal $\alpha$ and $\beta$ subsidy plan with budget $k$ respectively. Suppose on the contrary that $P_{err}^\alpha < P_{err}^\beta$. But this is a contradiction by the previous result, since for any fixed $P_{err}$, the optimal $\beta$ subsidy must cost less than the optimal $\alpha$ plan, and thus was implementable with a budget less than $k$. \checkmark\\

Now it is straightforward to show that for any $\lambda \ge 0$, a learner's welfare given by \[W(L, f^*_\alpha, \alpha) = 1 - C(f^*_\alpha, \alpha) - \lambda cost(f^*_\alpha, \alpha)\] where the first term refers to the cost of misclassification and the second refers to the cost of subsidy plan implementation, is always lower than  \[W(L, f^*_\beta, \beta) = 1- C(f^*_\beta, \beta) - \lambda cost(f^*_\beta, \beta)\] 

Let the optimal $\alpha$ subsidy plan be given by $(f_\alpha^*, \alpha)$. The error cost of such a plan is given by $C(f_\alpha^*, \alpha)$. But as we have just shown, for any fixed set of misclassifications, a proportional $\beta$ subsidy plan can support it at a lower cost. Call this plan $(f_\beta, \beta)$. Thus we have that \[cost(f_\alpha^*, \alpha) > cost(f_\beta, \beta)\] and by design, \[C(f_\alpha^*, \alpha) = C(f_\beta, \beta)\]
The welfare comparison result
\[W(L, f_\alpha^*, \alpha) < W(L, f_\beta^*, \beta)\]
follows as desired. 
\end{proof}

}

\end{document}